\theoremstyle{plain}
\newtheorem{theorem}{Theorem}[section]
\newtheorem{proposition}[theorem]{Proposition}
\newtheorem{lemma}[theorem]{Lemma}
\newtheorem{corollary}[theorem]{Corollary}
\theoremstyle{definition}
\newtheorem{definition}[theorem]{Definition}
\theoremstyle{remark}
\newtheorem{remark}[theorem]{Remark}
\icmltitlerunning{Convex Relaxations of ReLU Neural Networks Approximate Global Optima in Polynomial Time}
\begin{document}

\twocolumn[
\icmltitle{Convex Relaxations of ReLU Neural Networks Approximate Global Optima in Polynomial Time}



\icmlsetsymbol{equal}{*}

\begin{icmlauthorlist}
\icmlauthor{Sungyoon Kim}{yyy}
\icmlauthor{Mert Pilanci}{yyy}
\end{icmlauthorlist}

\icmlaffiliation{yyy}{Department of Electrical Engineering, Stanford University, California, United States}

\icmlcorrespondingauthor{Sungyoon Kim}{sykim777@stanford.edu}

\icmlkeywords{Optimization Landscape, Training Guarantees, Random Cones, ICML}

\vskip 0.3in
]



\printAffiliationsAndNotice{}  

\begin{abstract}

In this paper, we study the optimality gap between two-layer ReLU networks regularized with weight decay and their convex relaxations. We show that when the training data is random, the relative optimality gap between the original problem and its relaxation can be bounded by a factor of $O(\sqrt{\log n})$, where $n$ is the number of training samples. A simple application leads to a tractable polynomial-time algorithm that is guaranteed to solve the original non-convex problem up to a logarithmic factor. Moreover, under mild assumptions, we show that local gradient methods converge to a point with low training loss with high probability. Our result is an exponential improvement compared to existing results and sheds new light on understanding why local gradient methods work well. 

\end{abstract}

\section{Introduction}
\label{1.intro}

After the tremendous success of deep learning \cite{lecun2015deep}, data-driven approaches have become a prominent trend in various areas of computer science. Perhaps a surprising fact is that despite the highly non-convex landscape of deep learning models \cite{li2018visualizing}, local gradient methods such as stochastic gradient descent (SGD) \cite{bottou2010large} or ADAM \cite{kingma2014adam} find nearly-global minimizers of the network extremely well. This mystery has gained wide attention in the learning theory community, and many have worked on the problem of proving convergence results of local methods under certain assumptions such as the infinite-width limit \cite{jacot2018neural}, heavy overparametrization 
 \cite{du2019gradient}, \cite{arora2019fine},\cite{zou2020gradient}, \cite{zou2018stochastic}, and milder width assumptions \cite{ji2019polylogarithmic}.

\begin{figure}[htbp]
\label{approx}
    \centering
    \includegraphics[width=0.44\textwidth]{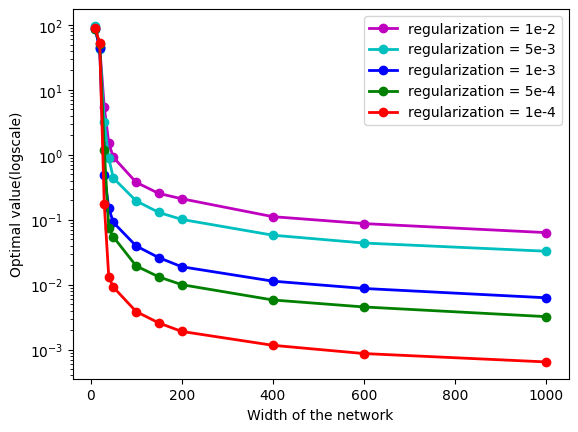}
    \caption{Convex relaxations of different widths. Here, we show the optimal value of the relaxed problem for different numbers of subsampled hyperplane arrangement patterns and regularization. Note that $n = 300, d = 10$, hence there are $\approx 30^{10}$ variables in the convex reformulation. However, using $\approx 30$ neurons in the relaxed problem optimizes the objective well.}
    \label{figure1}
\end{figure}
In contrast to the empirical success of local gradient methods, training a simple two-layer neural network with ReLU activation is proven to be NP-Hard \cite{boob2022complexity}. The sharp contrast between existing works on trainability guarantees and hardness results occurs as an increase in the width $m$, though it may seem to make the problem harder, actually makes it easier. Hence, it is a natural interest to understand the critical width $m^{*}$ that guarantees polynomial time algorithms where $m \geq m^{*}$, and the corresponding algorithm.

Recently, it was shown that two-layer and deeper ReLU networks have exact convex reformulations \cite{pilanci2020neural, wang2022parallel}. However, the number of variables in these convex problems can be exponentially large. For a critical width $m^{*} \leq n+1$, when $m \geq m^{*}$, we have an algorithm that exactly solves the non-convex problem with complexity polynomial in $n$ (the number of data) and exponential in $r$ (the rank of the dataset). Such an algorithm is given by considering the convex reformulation of the original problem which has approximately $(n/r)^r$ variables and solving the reformulated problem with standard interior-point solvers. Due to the exponential factor in $r$, exactly solving the reformulated problem is intractable, and \cite{pilanci2020neural} proposed to solve the randomly relaxed version of the exact reformulation with $m\ll (n/r)^r$ variables, where $m$ is only polynomially large (see \cref{PRE} for details). Though the relaxed problem has polynomial complexity with respect to all dimensions and works well in practice (see \cref{figure1}), the optimality gap between the relaxed and the original problem has remained unknown. In \cref{figure1}, width corresponds to the number of variables that the convex relaxation has.

In this paper, we study the randomized relaxation of the exact convex reformulation and find that:
\begin{itemize}
    \item Under certain assumptions on the input data, the relative optimality gap between the non-convex problem and the randomized relaxation is bounded by a logarithmic factor. This shows that the convex relaxations run in polynomial-time and have strong approximation properties. 
    \item Using results from \cite{wang2021hidden} and \cite{li2018learning}, we show that with high probability, local gradient methods with random initialization converges to a stationary point that has $O(\sqrt{\log n})$ relative training error with respect to the global minimum.
    \item We propose a tractable polynomial-time algorithm which is polynomial in all dimensions that can approximate the global optimum within a logarithmic factor. We are not aware of any similar result for regularized ReLU networks.
\end{itemize}
The paper is organized as follows: in \cref{PR}, we go through a brief overview of related works. \cref{MR} is devoted to a detailed description of the contributions we made. \cref{UCR} and \cref{CR} illustrate the overall proof strategy, novel approximation guarantees, and the geometry behind it: we first obtain approximation guarantees of an ``easier version" of the randomly relaxed problem, and then solve the relaxed problem using the result. We wrap the paper with \cref{Conclu}, presenting possible further discussions.

\subsection{Prior and Related Work}
\label{PR}
\textbf{Convex reformulation of neural networks:}
Starting from \cite{pilanci2020neural}, an extensive line of works have discussed the convex reformulation of neural networks. The main idea that lies in this line of work is that for different neural network architecture, e.g. for CNNs \cite{ergen2020implicit}, transformers \cite{sahiner2022unraveling}, multi-layer networks \cite{ergen2021global}, vector outputs \cite{sahiner2020vector}, etc, we have an exact convex reformulation with a different regularizer. Hence, neural network architectures can be understood as imposing different regularizations on the convex problem.

Moreover, these convex reformulations have given interesting insights into the training of neural networks. For example, \cite{wang2021hidden} characterizes all stationary points of a two-layer neural network via global optimum of corresponding convex problems, and \cite{wang2023polynomial} discusses the intrinsic complexity of training two-layer networks with the view of convex duality and equivalence with the MAX-CUT problem.  




\textbf{Training complexity of two-layer neural networks:} We use $n$ to denote the number of data, $d$ the dimension of data, and $m$ to be the width of the model. Several papers have discussed the theoretical complexity of training simple ReLU networks and exact methods to train them. It is proven that for single-width neural networks, it is NP-Hard to train a two-layer ReLU network \cite{boob2022complexity}, and even approximating the optimal error with $(nd)^{1/poly(\log \log(nd))}$ is proven to be NP-Hard \cite{goel2020tight}. \cite{dey2020approximation} gives an approximate algorithm to solve the single-width neural network problem with $O(n^k)$ complexity and $n/k$ approximate error for general input, and constant error bound in the student-teacher setting.

Some exact algorithms to train two-layer networks have been discussed, though due to the inherent complexity of training two-layer networks, these algorithms are often intractable in practice. \cite{arora2016understanding} presents an algorithm with $O(2^{m}n^{dm}poly(n, d, m))$ complexity, where $m$ is the width of the network. \cite{manurangsi2018computational} shows that when both the input and weights are constrained on a unit ball, we can train a two-layer network to have $\epsilon$ - error bound with $O((2^{m/\epsilon})^{O(1)}n^{O(1)})$ complexity. Training guarantees in the student-teacher setting with Gaussian input have also been discussed: \cite{bakshi2019learning} discusses Gaussian inputs and in the student-teacher setting, we can exactly obtain the teacher parameters in $poly(n)$ time, and \cite{awasthi2021efficient} discusses the same problem and propose an algorithm that works in $poly(n,m,d)$. 

\textbf{Note that the aforementioned complexity results are on training problems without regularization}. When we have an $l_2$ regularized problem, the best exact solution to the problem is presented in \cite{pilanci2020neural}, where they propose an algorithm of $O(d^3(n/d)^{3d})$ complexity provided that the width of the network is sufficiently large. Extending the idea, \cite{bai2023efficient} proposes an approximate method with complexity $O(d^2m^2)$ that works for $m \geq n/\xi$ for a predetermined error threshold $\xi$. However, their results require a very wide neural network and the relative optimal error bound is unclear. Note that choosing an appropriate regularization affects the performance of the model during test time and considering the regularized setting is not only for theoretical interest (see \cref{appendix:regularization} for experiments). 

\textbf{Guarantees of local gradient methods for two-layer neural networks:} An extensive line of work has discussed why local gradient methods such as gradient descent and its variants work so well for neural networks - even in the simplest setting of two layers. \cite{soudry2016no}, \cite{soudry2017exponentially}, \cite{tian2017analytical} analyzes the local minima of neural networks with a similar flavor to our work: \cite{soudry2016no} shows when the number of parameters for a single layer exceeds $n$, differentiable local minima are global, and \cite{soudry2017exponentially} proves that differentiable regions that contain sub-optimal local minima are exponentially small compared to the ones containing global minima. \cite{tian2017analytical} exploits the closed-form formula of population gradients to characterize the region of suboptimal critical points. Our work extends these works, for networks with regularization and non-differentiable stationary points under certain assumptions.

When Gaussian input is assumed, many works have analyzed whether local gradient methods can recover true parameters in the student-teacher setting. To prove that local gradient methods can recover true parameters, existing works either use a specifically tailored multi-phase analysis of gradient methods with deliberately chosen parameters and initialization \cite{li2017convergence}, \cite{du2018gradient}, \cite{zhou2019toward}, \cite{bao2024global}, or choose a particular structure of the model, e.g. no overlapping CNNs \cite{brutzkus2017globally}, \cite{zhang2020improved}, two-layer network plus a skip connection \cite{li2017convergence}, fixed second layer weights \cite{du2018gradient}, \cite{zhou2021local}. Our result is more abstract in the sense that it works for any local gradient method and initialization schemes that satisfy assumption (A2) and works in settings that are not student-teacher settings, though the analysis only gives relative approximation guarantees of these methods with respect to the global optimum.

Training guarantees have also been established for two-layer neural networks with different loss functions. Regarding hinge loss, \cite{brutzkus2017sgd}, \cite{laurent2018multilinear}, \cite{wang2019learning}, \cite{wang2023polynomial} have analyzed the loss landscape of hinge loss under the assumption that the data is linearly separable. In particular, \cite{laurent2018multilinear} discusses that all local minima are global for networks with leaky-ReLU activation, when we have hinge loss and the data is linearly separable, and \cite{wang2019learning} proposes a simple SGD-like algorithm that provably converges to a global minimum. \cite{wang2023polynomial} is worth noting that they have a similar convex analysis to ours to analyze the training hardness and approximation guarantees of training with hinge loss. Their guarantees are a different version of our result for hinge loss. 

For useful lemmas that will be used throughout the paper, see \cref{usefullemma}.
\section{Main Results}
\label{MR}
\subsection{Preliminaries}
\label{PRE}
First, we go through a formal description of the optimization problems that we are interested in. Let the data matrix $X \in \mathbb{R}^{n \times d}$, the label vector $y \in \mathbb{R}^{n}$, and weight decay regularization $\beta > 0$. Consider the training problem
\begin{equation}
\label{nonconvex_original}
p^{*}:=\min_{u_j, \alpha_j} \frac{1}{2}\lVert \sum_{j=1}^{m} (Xu_j)_{+} \alpha_j - y \rVert_2^2 + \frac{\beta}{2} \sum_{j=1}^{m} (\lVert u_j \rVert_2 ^2 + \lVert \alpha_j \rVert_2^2).
\end{equation}
The convex counterpart of problem (\ref{nonconvex_original}) can be written as in \cite{pilanci2020neural}:
\begin{multline}
\label{convex1}
p_0^{*}:=\min_{u_i,v_i \in \mathcal{K}_{D_i}} \frac{1}{2}\lVert \sum_{D_i \in \mathcal{D}} D_iX(u_i - v_i) - y \rVert_2^2 \\+ \beta \sum_{i \in \mathcal{I}} (\lVert u_i \rVert_2 + \lVert v_i \rVert_2),
\end{multline}
where $\mathcal{D}$ is a set of hyperplane arrangement patterns $diag(\mathbbm{1}[Xu \geq 0])$ and $\mathcal{K}_{D} = \{u \ |\ (2D - I)Xu \geq 0\}$ are cones where each optimization variable are constrained at. The intuition behind the convex counterpart is that as the nonconvexity of problem (\ref{nonconvex_original}) comes from the nonlinearity and multiplying two variables $u_j$ and $\alpha_j$, we linearize the model and merge the two variables with appropriate scaling to make the problem convex \cite{mercklé2024thehiddenconvex}.

\textbf{Convex reformulation:}
Let's say $[n] = \{1, 2, ..., n\}$, $D_i$ from $i \in [P]$ denote all possible hyperplane arrangement patterns, $\{u_i^{*}, v_i^{*}\}_{i=1}^{P}$ are global minima of problem (\ref{convex1}), 
$$
m^{*} = \sum_{i=1}^{P} \mathbbm{1}[u_i^{*}\neq 0] + \mathbbm{1}[v_i^{*} \neq 0],
$$ and $m \geq m^{*}$. Problems (\ref{nonconvex_original}) and (\ref{convex1}) then become equivalent, i.e. $p^{*} = p_0^{*}$ and we can construct an optimal solution of problem (\ref{nonconvex_original}) with $\{u_i^{*}, v_i^{*}\}_{i=1}^{P}$. This means when $\mathcal{D} = \{D_i\ |\ i \in [P]\}$, we can exactly solve the original problem $(\ref{nonconvex_original})$ with its convex reformulation $(\ref{convex1})$. Though $m^{*}$ might seem exponential in $n$ as $P$ is exponentially large in $n$, an application of Caratheodory's theorem shows that $m^{*} \leq n + 1$.

\textbf{Gaussian relaxation:}
From \cite{4038449}, we know that $P = O((\frac{n}{r})^r)$ where $r = rank(X)$. Therefore, it is computationally intractable to sample all possible hyperplane arrangement patterns for large $r$. The randomized Gaussian relaxation of the problem (\ref{convex1}) is where
$$
\mathcal{D} = \{\Tilde{D}_i = diag(\mathbbm{1}[Xg_i \geq 0])\ |\ g_i \sim \mathcal{N}(0, I_d), i \in [\Tilde{P}]\},
$$ 
i.e. instead of using every possible hyperplane arrangement pattern, we randomly sample $\Tilde{P}$ patterns given by a random Gaussian vector. Note that we are using a very small portion of the set of all hyperplane arrangement patterns, and it is not trivial that we will get good approximation guarantees.

Next, we illustrate an assumption on the data distribution for the main results.

\textbf{Assumption on training data:} Throughout the paper, we assume that the data distribution follows $X_{ij} \sim \mathcal{N}(0,1)$ i.i.d., $n/d = c \geq 1$, where $c$ is a constant, and denote this assumption as (A1). (A1) may be extended to a distribution with rotational symmetry, controllable quantile, and where Gordon's comparison \cite{thrampoulidis2014gaussian} or empirical process results \cite{mendelson2007reconstruction} are applicable. A related analysis can be found in \cite{thrampoulidis2015isotropically}, where they extend Gordon's comparison to isotropic random orthogonal matrices. Also, a connection to restricted isometry property \cite{candes2008restricted} could be a key to extending the result to different distributions. Here, we assume Gaussianity for simplicity.\\ Also, the regime $n/d = c$, e.g. $n \asymp d$, is extensively studied in the literature \cite{montanari2019generalization}, \cite{celentano2021high}, \cite{celentano2022fundamental}. The given structure enables a quantitive comparison between the randomized relaxation and the original problem with random matrix theory and provides better bounds than arbitrary inputs.

\subsection{Overview of Theoretical Results}

We are now ready to state the main results of the paper. The first result is the optimality bound between the non-convex problem \eqref{nonconvex_original} and its Gaussian relaxation, assuming (A1). 

\begin{theorem}
\label{finalthm}
(Informal) Consider the two-layer ReLU network training problem
$$
p^{*}:=\min_{u_j, \alpha_j} \frac{1}{2}\lVert \sum_{j=1}^{m} (Xu_j)_{+} \alpha_j - y \rVert_2^2 + \frac{\beta}{2} \sum_{j=1}^{m} (\lVert u_j \rVert_2 ^2 + \lVert \alpha_j \rVert_2^2),
$$
and its convex relaxation
\begin{multline}
\label{approxconvex}
\Tilde{p}^{*}:=\min_{u_i,v_i \in \mathcal{K}_{\Tilde{D}_i}} \frac{1}{2}\lVert \sum_{\Tilde{D}_i \in \mathcal{D}} D_iX(u_i - v_i) - y \rVert_2^2 \\+ \beta \sum_{i \in \mathcal{I}} (\lVert u_i \rVert_2 + \lVert v_i \rVert_2).
\end{multline}
Here, $|\mathcal{D}| = m/2$ and the elements are sampled by the hyperplane arrangement patterns of random Gaussian vectors. Assume (A1), $d$ is sufficiently large, and suppose $m = \kappa \max\{m^{*}, 320(\sqrt{c}+1)^2 \log(\frac{n}{\delta})\}$ for some fixed $\kappa \geq 1$. Then, with high probability,
$$
p^{*} \leq \Tilde{p}^{*} \leq C\sqrt{\log 2n}\ p^{*}.
$$
for some constant $C \geq 1$.
\end{theorem}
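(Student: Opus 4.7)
The plan is to split into the two directions and handle them separately.

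For the lower bound $p^{*} \leq \tilde{p}^{*}$, I would argue by containment of feasible sets. Under $m \geq m^{*}$, the convex reformulation of \cite{pilanci2020neural} gives $p^{*} = p_0^{*}$, and any feasible point of $\tilde{p}^{*}$ extends to a feasible point of the exact reformulation $p_0^{*}$ by assigning zero weights to the arrangements in $\{D_i\} \setminus \{\tilde{D}_i\}$. Hence $p_0^{*} \leq \tilde{p}^{*}$ and the lower bound follows immediately.

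For the upper bound, which is the substantive half, I would follow a two-stage construction paralleling \cref{UCR} and \cref{CR}. Stage one is to analyze the easier ``un-gated'' version $\tilde{p}_u^{*}$ of the relaxation obtained by dropping the cone constraints $u_i,v_i \in \mathcal{K}_{\tilde{D}_i}$; this collapses each pair to a single variable $w_i = u_i - v_i$ with penalty $\|w_i\|_2$, producing a group-lasso type program on the random matrices $\tilde{D}_iX$. Using the Gaussianity of $X$ together with the fact that $\tilde{D}_i$ are independently drawn Gaussian-induced masks, I would exhibit an explicit candidate whose objective is at most $C\sqrt{\log(2n)}\,p^{*}$. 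The $(\sqrt{c}+1)^{2}$ factor in the width condition $m \geq 320(\sqrt{c}+1)^{2}\log(n/\delta)$ should fall out of a Marchenko--Pastur type bound on the extreme singular values of the masked random matrices, while the $\sqrt{\log(2n)}$ factor should emerge from a Gaussian concentration bound controlling a quantity of the form $\max_{j\in[n]}|g_j^{\top} r|$ (or an $\varepsilon$-net argument on the unit sphere in $\mathbb{R}^n$ applied to the residual).

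Stage two is to transfer an approximately optimal $\{w_i\}$ for $\tilde{p}_u^{*}$ to a feasible pair $(u_i,v_i)$ for $\tilde{p}^{*}$ with $u_i,v_i \in \mathcal{K}_{\tilde{D}_i}$, while preserving the fit $\tilde{D}_i X(u_i - v_i)$ and losing at most a constant factor in $\|u_i\|_2 + \|v_i\|_2$ relative to $\|w_i\|_2$. The degree of freedom here is the null space of $\tilde{D}_i X$: any two decompositions $w_i = u_i - v_i$ with the same image under $\tilde{D}_iX$ give the same loss, so the task is to choose the split that lies inside the cone. Chaining the two stages yields $\tilde{p}^{*} \leq C\sqrt{\log(2n)}\,p^{*}$.

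The main obstacle I anticipate is stage two: the cones $\mathcal{K}_{\tilde{D}_i}$ are random polyhedra generated by Gaussian gate vectors, and it is not immediate that a bounded-norm decomposition $w_i = u_i - v_i$ with both parts inside $\mathcal{K}_{\tilde{D}_i}$ exists, let alone uniformly over all $m/2$ sampled patterns. Proving this uniform geometric regularity is where the width threshold $320(\sqrt{c}+1)^{2}\log(n/\delta)$ enters, by combining a union bound over the sampled gates with a random matrix anti-concentration estimate that absorbs the failure probability $\delta$. The Gaussianity of $X$ is essential at both stages because it is what permits the sharp concentration needed for these bounds, and is why I would not expect the same logarithmic factor to survive for adversarial inputs.
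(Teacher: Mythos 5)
Your lower bound $p^*\le\tilde p^*$ and the two-stage skeleton (unconstrained relaxation first, then cone decomposition) match the paper's strategy, but two essential mechanisms are missing or misplaced. First, you never explain how the objective value of your stage-one candidate gets compared to $p^*$: the optimal neurons of $p^*$ may use arrangement patterns absent from the random sample, so no direct ``perturb the optimum and plug it in'' argument is available. The paper avoids this entirely by bounding the two quantities \emph{separately in absolute terms} and taking the ratio: the relaxed unconstrained value is upper-bounded by $\sqrt2\,\beta\lVert y\rVert_2/\sqrt{\lambda_{\min}(\mathcal M)}$ via the surrogate reparametrization $\lVert u\rVert_2=\tfrac12(\kappa\lVert u\rVert_2^2+1/\kappa)$, closed-form Lagrangian duality, and matrix Chernoff (\cref{prop3}), while $p^*$ is lower-bounded by $G\beta\lVert y\rVert_2/\sqrt{\lambda_{\max}(XX^T)}$ by evaluating the dual of the full unconstrained problem at $\lambda=ky$ (\cref{prop4}). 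Without some substitute for this dual lower bound, your stage one does not produce a \emph{relative} guarantee at all.

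Second, you allocate the $\sqrt{\log(2n)}$ loss to stage one and claim stage two costs only a constant; the paper does the opposite, and your version is the harder (and unproven, likely false) claim. Stage one yields a constant factor $\sqrt{2\kappa}/G=O(\sqrt c+1)$ once $\lambda_{\min}(\mathcal M)\ge d/10$ is established (\cref{mineig}, itself a nontrivial kernel random-matrix argument via linearization). The $\sqrt{\log(2n)}$ arises precisely in the cone decomposition: the sharpness constant of a random cone $\mathcal K_{\tilde D_i}$ is controlled by a Chebyshev-center-type construction whose bottleneck is $\max_{i}|\tilde X[i;2,d]\cdot\lambda_0|\lesssim c\sqrt{\log(2n)}$, a maximum of $n$ Gaussians --- and the paper explicitly notes that no universal constant exists for general cones. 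Relatedly, the width threshold $320(\sqrt c+1)^2\log(n/\delta)$ comes from the matrix Chernoff sample count in stage one, not from a union bound over the cones in stage two as you suggest; the sharpness bound holds per cone and is independent of $m$.
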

\begin{remark}
    To the best of our knowledge, the above result provides the first polynomial-time approximation guarantee for regularized ReLU NNs. Also note that typically we have $p^*\rightarrow 0$ as $\beta\rightarrow 0$, implying $\tilde p^*-p^*\rightarrow 0$. 
\end{remark}
For a detailed statement of the theorem, see \cref{Proofmain}. A few points on \cref{finalthm} is worth mentioning. First, it gives a guarantee of the convex relaxation of the original problem when we sample only $\max\{m^{*}, O(\log n)\}$ hyperplane arrangement patterns. This is an exponential improvement over existing convex reformulations, and as $m^{*}$ is much smaller than $n+1$ in practice, the width bound for \cref{finalthm} is practical. Also, with such characterization, we obtain a polynomial-time approximate algorithm that works with guarantees by simply solving the convex relaxation with standard interior-point solvers \cite{potra2000interior}.

\begin{theorem}
Assume (A1), $d$ is sufficiently large, and suppose that $m = \kappa \max\{m^{*}, 320(\sqrt{c}+1)^2\log(n/\delta)\}$ for some fixed $\kappa \geq 1$. Then, there exists a randomized algorithm with $O(d^3m^3)$ complexity that solves problem $(\ref{nonconvex_original})$ within $O(\sqrt{\log n})$ relative optimality bound with high probability.
\end{theorem}

Moreover, with the characterization of stationary points in \cite{wang2021hidden}, we can prove that local gradient methods converge to ``nice" stationary points under the assumption below:

(A2) While local gradient method iterates, only a randomly chosen portion $p < 1/2$ of the initial hyperplane arrangement patterns change until it converges.
\begin{figure}
    \centering
     \begin{subfigure}
         \centering
         \includegraphics[width=0.35\textwidth]{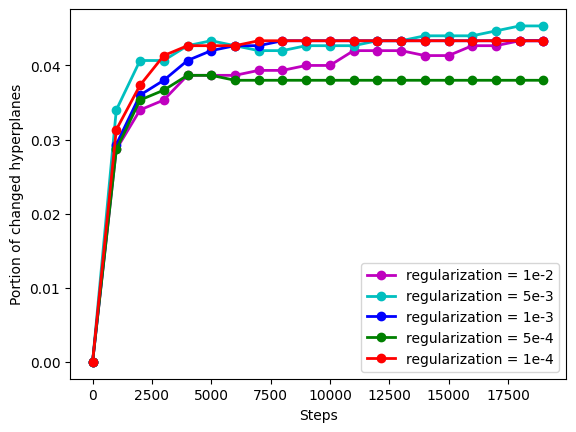}
     \end{subfigure}
     \hfill
     \begin{subfigure}
         \centering
         \hspace*{0.2cm}
         \includegraphics[width=0.35\textwidth]{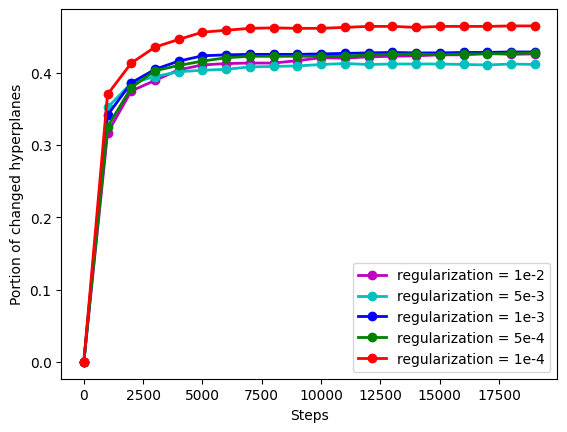}
     \end{subfigure}
        \caption{Verification of (A2) for gradient descent. The upper figure shows how many hyperplane arrangement patterns change for random data, and the lower figure shows how many of them change for MNIST.}
        \label{a2}
\end{figure}
(A2) is also considered in \cite{li2018learning}, where they show that SGD with random initialization preserves most of the hyperplane arrangement patterns. The specific lemma, which they refer to as the coupling lemma, is proven for cross-entropy loss. It is also verified for gradient descent with MSE loss, both for random data and MNIST in \cref{a2}.
\begin{lemma}
(Coupling lemma, \cite{li2018learning}) With high probability over the random initialization that follows $\mathcal{N}(0, m^{-1} I)$, and suppose the data is generated from $l$ underlying distributions. For every $\tau > 0$, $t = \Tilde{O}(\tau/\eta)$, at least $1 - e\tau l/\sigma$ portion of the hyperplane arrangement patterns remain the same.
\end{lemma}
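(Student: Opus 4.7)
The plan is to show that each hidden neuron, with high probability, does not change its activation sign on the relevant data, and then aggregate via a counting argument across the $m$ neurons. A flip of neuron $j$'s pattern on a point $x$ requires the SGD trajectory to carry $w_j^{(0)}$ across the hyperplane $\{w : \langle w, x\rangle = 0\}$, so it suffices to (i) upper bound the total displacement $\|w_j^{(t)} - w_j^{(0)}\|$ during training, and (ii) lower bound the initial margin $|\langle w_j^{(0)}, x\rangle|$ on the directions that determine the arrangement pattern.

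For (i), with step size $\eta$ and the (standard) assumption that each stochastic gradient has bounded norm, the cumulative displacement after $t = \tilde{O}(\tau/\eta)$ iterations is $O(\tau)$ in $\ell_2$ norm. For (ii), since $\langle w_j^{(0)}, x\rangle \sim \mathcal{N}(0, \|x\|^2/m)$ at initialization, Gaussian anti-concentration gives
$$
\Pr\!\bigl[\,|\langle w_j^{(0)}, x\rangle|\leq \Delta\,\bigr] \;\leq\; \sqrt{\tfrac{2}{\pi}}\,\tfrac{\Delta\sqrt{m}}{\|x\|}.
$$
Taking $\Delta = O(\tau)$ and combining with (i), the per-neuron flip probability on $x$ becomes $O(\tau/\sigma)$ under the data-scale normalization encoded in $\sigma$. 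Because the inputs concentrate around $l$ underlying distributions, only $l$ essentially distinct directions need to be controlled, so a union bound across the clusters yields a per-neuron flip probability of $O(\tau l/\sigma)$.

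Finally, since the weights $\{w_j^{(0)}\}_{j=1}^{m}$ are independent at initialization, a multiplicative Chernoff (or the one-sided Bernstein) bound on the number of neurons whose activation flips gives, with high probability in $m$, at most $e\tau l/\sigma\cdot m$ flipping neurons, which is exactly the claim that a $1 - e\tau l/\sigma$ fraction of hyperplane arrangement patterns is preserved.

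The main obstacle is the apparent circularity between steps (i) and (ii): the gradient-norm bound that drives (i) implicitly needs the network to still behave like its linearization around initialization, which in turn needs few patterns to have flipped — exactly what (i) is meant to guarantee. The standard remedy, which is the technical core of Li--Liang's argument, is an inductive potential-function argument that simultaneously tracks the loss descent, the cumulative weight displacement, and the running flip count, using the near-constancy of the activations to couple the dynamics to an NTK-style linear system at each iterate and thereby close the induction.
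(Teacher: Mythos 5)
First, note that the paper does not prove this lemma at all: it is quoted (somewhat informally, with $\sigma$ and $\eta$ left undefined) from Li and Liang's work and used as a black box to justify assumption (A2), so there is no in-paper proof to compare against. Judged on its own terms, your outline reproduces the correct skeleton of the original coupling argument: bound the cumulative weight displacement over $t=\tilde{O}(\tau/\eta)$ steps, use Gaussian anti-concentration of $\langle w_j^{(0)},x\rangle\sim\mathcal{N}(0,\|x\|^2/m)$ to control the initial margins, union bound over the $l$ cluster directions, and then concentrate over the $m$ independent initializations to convert a per-neuron flip probability into a fraction of flipped neurons. That last step is legitimate in the form you need it, because the event ``initial margin below the displacement radius'' is measurable with respect to the initialization alone, so independence across neurons does apply there.

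The genuine gap is that you name, but do not close, the circular dependency that constitutes essentially all of the technical content of the lemma. Step (i) asserts a displacement of $O(\tau)$ from ``the (standard) assumption that each stochastic gradient has bounded norm,'' but a crude per-step gradient bound only yields per-neuron displacement $\tilde{O}(\tau)$ after $\tilde{O}(\tau/\eta)$ steps if the gradient acting on an individual neuron stays $O(1)$ throughout training --- and in Li--Liang's setting this holds precisely because the residual stays controlled and the activations stay coupled to their initial patterns, facts that are themselves maintained by the induction you defer to in your final paragraph. Without actually running that induction (displacement bound up to step $s$ $\Rightarrow$ few flips up to step $s$ $\Rightarrow$ coupling with the pseudo-linear dynamics $\Rightarrow$ displacement bound at step $s+1$), the argument is a description of the proof strategy rather than a proof; in particular, the specific fraction $e\tau l/\sigma$ and the role of $\sigma$ (which your sketch introduces only as an unspecified ``data-scale normalization'') are never derived. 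To make this rigorous you must either carry out the induction in full or do what the paper does and simply cite Li and Liang for the statement.
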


The idea is that we can map each stationary point to a global minimum of a convex problem, and as we initialize at random, we can think of the stationary points as global minimizers of the convex problem with randomized hyperplane arrangement patterns. 
\begin{theorem}(Informal)
\label{losslandscape}
Consider the training problem $\min_{u_j, \alpha_j} \mathcal{L}(u, \alpha),$ where the loss function $\mathcal{L}$ is given as 
$$
\mathcal{L}(u, \alpha) = \frac{1}{2} \lVert \sum_{j=1}^{m} (Xu_j)_{+}\alpha_j - y \rVert_2^2 + \frac{\beta}{2} \sum_{j=1}^{m} (\lVert u_j \rVert_2^2 + \alpha_j^2).
$$
Assume (A1),(A2) and $d, m$ are sufficiently large. For any random initialization $\{u_i^{0},\alpha_i^{0}\}_{i=1}^{m}$, suppose local gradient method converged to a stationary point $\{u_i',\alpha_i'\}_{i=1}^{m}$. Then, with high probability,
$$
\mathcal{L}(u',\alpha') \leq C\sqrt{\log 2n}\ \mathcal{L}(u^{*},\alpha^{*}),
$$
for some $C \geq 1$. Here, $\{u_i^{*},\alpha_i^{*}\}_{i=1}^{m}$ is a global optimum of $\mathcal{L}(u, \alpha)$.
\end{theorem}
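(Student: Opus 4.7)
The plan is to combine three ingredients that are already established in the paper: the stationary-point-to-convex-minimum correspondence of \cite{wang2021hidden}, assumption (A2) together with the coupling lemma of \cite{li2018learning}, and the main approximation guarantee \cref{finalthm}. The correspondence gives, for every stationary point $(u',\alpha')$ of the non-convex loss, a convex program of the form \eqref{convex1} whose optimum equals $\mathcal{L}(u',\alpha')$ and whose pattern set $\mathcal{D}'$ is determined by the neurons $u_i'$; the coupling lemma says that under (A2) a majority of activation patterns do not change from initialization; and \cref{finalthm} says that whenever the pattern set consists of sufficiently many Gaussian-sampled patterns, the resulting optimum is within a $\sqrt{\log(2n)}$ factor of $p^{*}$.

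Concretely, I would first write $\tilde D_i^0 = \mathrm{diag}(\mathbbm{1}[Xu_i^0 \geq 0])$ for the patterns at initialization, which are exactly the Gaussian patterns appearing in \eqref{approxconvex}, and $\tilde D_i' = \mathrm{diag}(\mathbbm{1}[Xu_i' \geq 0])$ for the patterns at the stationary point. Let $\mathcal{D}' = \{\tilde D_i'\}_{i=1}^{m}$ and denote by $\tilde p^{*}(\mathcal{D})$ the optimum of \eqref{convex1} with pattern set $\mathcal{D}$. Invoking \cite{wang2021hidden}, $\mathcal{L}(u',\alpha') = \tilde p^{*}(\mathcal{D}')$. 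Under (A2), a fraction at least $1-p > 1/2$ of indices satisfy $\tilde D_i' = \tilde D_i^0$, so $\mathcal{D}'$ contains a sub-collection $\mathcal{D}^{\mathrm{g}}$ of at least $(1-p)m > m/2$ genuine Gaussian patterns.

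Next I would use monotonicity of the convex reformulation in its pattern set: adding patterns can only enlarge the feasible set, so $\mathcal{D}^{\mathrm{g}} \subseteq \mathcal{D}'$ gives $\tilde p^{*}(\mathcal{D}') \leq \tilde p^{*}(\mathcal{D}^{\mathrm{g}})$. Because $m$ is assumed sufficiently large, $(1-p)m$ still satisfies the lower bound $\max\{m^{*}, 320(\sqrt{c}+1)^2\log(n/\delta)\}$ required by \cref{finalthm}, so \cref{finalthm} applies to $\mathcal{D}^{\mathrm{g}}$ and yields $\tilde p^{*}(\mathcal{D}^{\mathrm{g}}) \leq C\sqrt{\log(2n)}\, p^{*} = C\sqrt{\log(2n)}\, \mathcal{L}(u^{*},\alpha^{*})$ with high probability over the Gaussian initialization. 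Chaining the two inequalities proves the claim, and the overall failure probability is controlled by a union bound over the event of \cref{finalthm} and the coupling event granted by (A2).

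The hardest part will be pinning down the first ingredient in its strong form: I need $\mathcal{L}(u',\alpha')$ to be exactly the minimum of the convex program restricted to $\mathcal{D}'$ rather than just a feasible value, which requires verifying that the KKT conditions at $(u',\alpha')$ for the non-convex problem match those of the convex program on patterns $\mathcal{D}'$, in the spirit of \cite{wang2021hidden}. A secondary technical issue is that the coupling step must be invoked in a form that preserves the sign pattern $\mathrm{sign}(Xu_i)$ entry-wise for the preserved fraction of neurons and not merely bounds $\|u_i' - u_i^0\|$; this is exactly what the stated coupling lemma provides, but it is the place where (A2) is really doing the work.
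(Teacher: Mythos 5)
Your proposal follows essentially the same route as the paper's proof: identify $\mathcal{L}(u',\alpha')$ with the optimum of the convex program on the stationary-point patterns via \cite{wang2021hidden}, use (A2) to extract at least $m/2$ preserved Gaussian patterns, bound the full-pattern optimum by the sub-collection optimum via monotonicity, and apply \cref{finalthm} to that sub-collection. The argument and the chaining of inequalities match the paper's, and your closing remarks correctly flag the same implicit technical points the paper also leaves to the cited references.
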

\begin{remark}
The above result shows that there exists many stationary points whose objective value is a logarithmic approximation of the global optimum. Therefore, first-order optimizers such as SGD and Adam can approximate the global optimum even when they converge to stationary points.
\end{remark}
Note that our analysis is not based on explicit iteration of local gradient methods, or does not exploit a NTK - based analysis. Rather, it follows from a simple yet clear analysis based on the lens of convex optimization.

\subsection{Notations for Proof}
\label{notations}
Before discussing the proof strategy, we clarify some frequently used variables in this section. We use $D_i$ for $i \in [P]$ to denote all possible hyperplane arrangements, $P$ to denote the number of all possible hyperplane arrangements, $\Tilde{D}_i$ to denote randomly selected hyperplane arrangement patterns, and $\Tilde{P}$ the number of such subsamples. We also use $\mathcal{K}_{D}$ as in preliminaries. $\mathcal{M}$ is used to denote
$$
\mathbb{E}_{g \sim \mathcal{N}(0,I_d)}[ diag[\mathbbm{1}(Xg \geq 0)] XX^{T} diag[\mathbbm{1}(Xg \geq 0)]],
$$
and $\kappa$, which is analogous to the condition number, to denote
$$
\frac{\lambda_{max}(XX^{T})}{\lambda_{min}(\mathcal{M})},
$$
provided that $\mathcal{M}$ is invertible. At last, $m^{*}$ is used to denote the number of nonzero variables for the optimal solution of the convex reformulation.
\subsection{Overall Proof Strategy}
To prove \cref{finalthm}, we first consider the unconstrained problem
\begin{multline}
\label{gatedconvex}
\Tilde{p_1}^{*}:=\min_{u_i,v_i \in \mathbb{R}^{d}} \frac{1}{2}\lVert \sum_{\Tilde{D}_i \in \mathcal{D}} \Tilde{D}_iX(u_i - v_i) - y \rVert_2^2 \\+ \beta \sum_{i \in \mathcal{I}} (\lVert u_i \rVert_2 + \lVert v_i \rVert_2).
\end{multline}
The convex problem \eqref{gatedconvex} was first introduced in \cite{mishkin2022fast}, where they first solve the unconstrained problem and decompose it into cone constraints to solve the original problem \eqref{approxconvex}. The problem is further equivalent to 
\begin{equation}
\label{gatedconvex_simpler}
\min_{u_i \in \mathbb{R}^{d}} \frac{1}{2}\lVert \sum_{\Tilde{D}_i \in \mathcal{D}} \Tilde{D}_iXw_i - y \rVert_2^2 + \beta \sum_{i \in \mathcal{I}}\lVert w_i \rVert_2,
\end{equation}
due to triangle inequality. We use a similar strategy that was introduced in \cite{mishkin2022fast}: first, we show that the unconstrained problem enjoys good approximation guarantees, even though we use only a subsample of the whole possible hyperplane arrangement patterns. After that, for global optimum $\{w_i^{*}\}_{i=1}^{m}$ of problem \eqref{gatedconvex_simpler}, we find $u_i, v_i \in \mathcal{K}_{D_i}$ that satisfies $u_i - v_i = w_i^{*}$ and minimal norm sum $\lVert u_i \rVert_2 + \lVert v_i \rVert_2$. We find that this decomposition does not increase the norm up to $O(\sqrt{\log 2n})$ factor, finally leading to the guarantee of the original problem. A novel duality-based analysis of the unconstrained problem and obtaining a relative optimality gap (\cref{c3.10}) and an analysis using Gordon's comparison to understand how ``sharp" each cone may be (\cref{conesharp}) are the major technical contributions of our paper.

\section{Guarantees for the Unconstrained Convex Relaxation}
\label{UCR}
Let's start with the unconstrained optimization problem \cref{gatedconvex_simpler}. Note $\Tilde{p}_1^{*}$ to be the optimal value of the random relaxation of the unconstrained problem, and $p_1^{*}$ to be the optimal value of the unconstrained problem using all possible hyperplane arrangement patterns. 
We wish to find constants $C_1, C_2>0$ depending only with the data matrix $X$ that satisfies
\begin{equation*}
0 \leq \Tilde{p}_1^{*} - p_1^{*} \leq C_1, \quad p_1^{*} \leq \Tilde{p}_1^{*} \leq C_2 p_1^{*}.
\end{equation*}
As a warmup, we first give results when the problem is unregularized. We can show that $p_1^{*} = \Tilde{p}_1^{*} = 0$ with high probability when we sample sufficiently many hyperplane arrangement patterns. After that, we give approximation results for a gated ReLU problem with $l_2$ regularization and use them to prove that  
$$
C_1 = \sqrt{2}\beta \frac{\lVert y \rVert_2}{\sqrt{\lambda_{min}(\mathcal{M})}}, C_2 = 2\sqrt{2} \sqrt{\frac{\lambda_{\max}(XX^{T})}{\lambda_{\min}(\mathcal{M})}},
$$
holds with probability at least $1 - \delta$, provided that we sample sufficiently many hyperplanes and the dimension $d$ is sufficiently large. We defer the proofs to \cref{proof3}.
\subsection{Warmup: Unconstrained Relaxation Without Regularization}
\label{3.1.Noreg}
To show that $p_1^{*} = 0$, we prove that we can approximate any vector with $2n$ hyperplane arrangement patterns. The proof is consistent with general overparametrization arguments, which state that when the width of the network is $m \geq n$, any local minimum becomes the global minimum. Here, we carefully choose $2n$ hyperplane arrangement patterns where each arrangement pair $D_{1,i}, D_{2,i}$ differ only at the $i$-th diagonal entry. With $n$ such pairs we can express any vector $y \in \mathbb{R}^{n}$ as a linear combination of vectors in the column space of $\{(D_{1,i}-D_{2,i})u|u \in \mathbb{R}^{n}\} = \{ke_i|k \in \mathbb{R}\}$, proving that there exists $u_i$ s satisfying
$$
\sum_{i=1}^{n} (D_{1,i} - D_{2,i})Xu_i = y.
$$
This means that if we have all possible hyperplane arrangement patterns we can express any vector $y$ with zero error. The specific proposition is as follows.
\begin{proposition}
\label{prop1}
Suppose no two rows of $X$ are parallel. There exists hyperplane patterns $\Tilde{D}_1$, $\Tilde{D}_2$, ..., $\Tilde{D}_{2n}$ such that for any $y \in \mathbb{R}^{n}$, there exists $u_1, u_2, ... u_{2n} \in \mathbb{R}^{d}$ satisfying
\begin{equation*}
\sum_{i=1}^{2n} \Tilde{D}_iXu_i = y.
\end{equation*}
\end{proposition}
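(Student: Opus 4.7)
The proof strategy is constructive: I exhibit $2n$ hyperplane arrangement patterns organized as $n$ pairs $(\tilde D_{1,i},\tilde D_{2,i})$ for $i\in[n]$, where the two patterns inside each pair agree on every diagonal entry except the $i$-th. Given such pairs, $(\tilde D_{1,i}-\tilde D_{2,i})Xu$ lives entirely in the $i$-th coordinate direction of $\mathbb{R}^{n}$, so differencing inside a pair lets me hit any multiple of $e_i$, and summing over $i$ reproduces an arbitrary $y$.

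The heart of the argument is the existence of each such pair, which is precisely where the ``no two rows parallel'' hypothesis enters. Fix $i$. Because $X_j$ is not parallel to $X_i$ for any $j\neq i$, each linear functional $g\mapsto \langle X_j,g\rangle$ restricted to the hyperplane $X_i^{\perp}\subset\mathbb{R}^{d}$ is not identically zero, so its zero set is a proper subspace of $X_i^{\perp}$. The union of these $n-1$ proper subspaces has empty interior in $X_i^{\perp}$, so I can pick $g\in X_i^{\perp}$ with $\langle X_j,g\rangle\neq 0$ for every $j\neq i$. Define $g_{1,i}:=g+\epsilon X_i$ and $g_{2,i}:=g-\epsilon X_i$ for $\epsilon>0$ chosen small enough that the signs of $\langle X_j,\cdot\rangle$ at indices $j\neq i$ are unchanged under the $\pm\epsilon X_i$ perturbation. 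Then $\langle X_i,g_{1,i}\rangle=\epsilon\|X_i\|_2^{2}>0$ while $\langle X_i,g_{2,i}\rangle<0$, so the diagonals of $\tilde D_{1,i}:=diag(\mathbbm{1}[Xg_{1,i}\geq 0])$ and $\tilde D_{2,i}:=diag(\mathbbm{1}[Xg_{2,i}\geq 0])$ match everywhere except at index $i$.

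Given the pairs, I recover $y$ as follows. For each $i$, pick $u_{1,i}\in\mathbb{R}^{d}$ with $\langle X_i,u_{1,i}\rangle=y_i$, which is possible because $X_i\neq 0$, and set $u_{2,i}:=-u_{1,i}$. At every index $j\neq i$ the entries $(\tilde D_{1,i}Xu_{1,i})_j$ and $(\tilde D_{2,i}Xu_{2,i})_j$ cancel, since $(\tilde D_{1,i})_{jj}=(\tilde D_{2,i})_{jj}$ and $u_{2,i}=-u_{1,i}$; at index $i$ they combine to $\bigl((\tilde D_{1,i})_{ii}-(\tilde D_{2,i})_{ii}\bigr)\langle X_i,u_{1,i}\rangle=y_i$. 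Summing the contributions of the $n$ pairs yields $\sum_{k=1}^{2n}\tilde D_k X u_k=y$. The only genuine obstacle is the pair-existence step — once a ``flip-one-coordinate'' pair is constructed, the remainder is transparent linear algebra; and this step is exactly what fails without the non-parallel hypothesis, since then flipping the $i$-th sign would force at least one other sign to flip with it, contaminating the support of the difference vector.
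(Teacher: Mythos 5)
Your proof is correct and follows essentially the same route as the paper: construct, for each $i$, a pair of arrangement patterns differing only in the $i$-th diagonal entry by picking a point on $X_i^{\perp}$ off the other hyperplanes and perturbing by $\pm\epsilon X_i$, then use the pair differences to hit each coordinate of $y$. Your write-up of the final cancellation step (taking $u_{2,i}=-u_{1,i}$ so the off-diagonal contributions cancel) is in fact more explicit than the paper's.
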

However, we can further prove that using randomly sampled hyperplane arrangement patterns $\Tilde{D}_i$, we can fit arbitrary vector $y$ with probability at least $1 - \delta$, provided that we sample sufficiently many hyperplane arrangement patterns. 
\begin{proposition}
\label{prop2}
Suppose we sampled $\Tilde{P} = 2\kappa \log(\frac{n}{\delta})$
hyperplane arrangement patterns, provided that $\mathcal{M}$ is invertible. Then, with probability at least $1-\delta$, for any $y \in \mathbb{R}^{n}$ there exists $u_1, u_2, ..., u_{\Tilde{P}}$ satisfying
$$
\sum_{i=1}^{\Tilde{P}} \Tilde{D}_iXu_i = y.
$$
\end{proposition}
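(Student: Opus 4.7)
The plan is to reduce the existence statement to a linear-algebra condition on the empirical second-moment matrix
$$
\hat{\mathcal{M}} := \sum_{i=1}^{\tilde P} \tilde D_i X X^T \tilde D_i,
$$
and then invoke a matrix concentration inequality to ensure that condition with high probability. The first step is to note that for every $y \in \mathbb{R}^n$ the equation $\sum_i \tilde D_i X u_i = y$ is solvable if and only if the block matrix $[\tilde D_1 X, \tilde D_2 X, \ldots, \tilde D_{\tilde P} X] \in \mathbb{R}^{n \times d\tilde P}$ has full row rank, which is in turn equivalent to $\hat{\mathcal{M}}$ being positive definite. Whenever $\hat{\mathcal{M}}$ is invertible, the explicit choice $u_i = X^T \tilde D_i \hat{\mathcal{M}}^{-1} y$ certifies the identity directly.

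The second step is to view $\hat{\mathcal{M}}$ as a sum of $\tilde P$ i.i.d.\ positive semidefinite random matrices $Z_i := \tilde D_i X X^T \tilde D_i$. Because $\tilde D_i$ is a diagonal $0/1$ matrix, we have $\tilde D_i \preceq I$ and hence $0 \preceq Z_i \preceq \lambda_{\max}(XX^T)\, I$ almost surely. By the definition of $\mathcal{M}$ in \cref{notations}, $\mathbb{E}[Z_i] = \mathcal{M}$, so $\mathbb{E}[\hat{\mathcal{M}}] = \tilde P\, \mathcal{M}$, whose smallest eigenvalue equals $\tilde P\, \lambda_{\min}(\mathcal{M})$.

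The third step is to apply Tropp's matrix Chernoff bound for bounded PSD summands, which yields
$$
\Pr\!\left[\lambda_{\min}(\hat{\mathcal{M}}) \leq (1-\epsilon)\, \tilde P\, \lambda_{\min}(\mathcal{M})\right] \;\leq\; n \exp\!\left(-\frac{\epsilon^2\, \tilde P\, \lambda_{\min}(\mathcal{M})}{2\, \lambda_{\max}(XX^T)}\right) \;=\; n \exp\!\left(-\frac{\epsilon^2\, \tilde P}{2\kappa}\right).
$$
Taking $\epsilon = 1$ turns the left-hand side into an upper bound on $\Pr[\hat{\mathcal{M}}\ \text{singular}]$, and forcing the right-hand side to be at most $\delta$ produces precisely $\tilde P \geq 2\kappa \log(n/\delta)$. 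Combined with step one, this delivers the required $u_i$'s with probability at least $1-\delta$.

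The only delicate point I expect is selecting the right variant of matrix Chernoff so the constant matches the stated $2\kappa \log(n/\delta)$: the factor $2$ arises from the $\exp(-\epsilon^2 \mu_{\min}/(2R))$ form with $\epsilon = 1$, whereas the sharper multiplicative form would give only $\kappa \log(n/\delta)$, so some care in bookkeeping is needed. The remaining ingredients---the reduction to invertibility of $\hat{\mathcal{M}}$, the uniform operator bound from $\tilde D_i \preceq I$, and the expectation identity $\mathbb{E}[Z_i]=\mathcal{M}$---are routine.
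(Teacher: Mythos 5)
Your proposal is correct and follows essentially the same route as the paper: both reduce solvability for all $y$ to the invertibility of $\sum_{i=1}^{\tilde P}\tilde D_i X X^T \tilde D_i$ (the paper via the rank of the augmented block matrix $[\tilde D_1 X \mid \cdots \mid \tilde D_{\tilde P} X]$), bound each summand by $\lambda_{\max}(XX^T) I$, and apply the matrix Chernoff bound with deviation parameter $1$ to obtain the threshold $\tilde P \geq 2\kappa\log(n/\delta)$. Your explicit certificate $u_i = X^T \tilde D_i \hat{\mathcal{M}}^{-1} y$ is a small constructive addition the paper omits, but the argument is otherwise identical.
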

\cref{prop2} shows that when we sample sufficiently many hyperplane arrangement patterns, we can fit arbitrary vectors with high probability, hence $\Tilde{p}_1^{*} = 0$. The two propositions lead to the following approximation result.
\begin{corollary}
When $\beta = 0$ and $\Tilde{P} \geq 2\kappa \log(\frac{n}{\delta}),$
$p_0^{*} = p_1^{*} = 0$ with probability at least $1 - \delta$.
\end{corollary}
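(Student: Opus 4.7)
The corollary follows almost immediately from \cref{prop2} once we supplement it with a cone-decomposition observation, so the plan is short. First I would apply \cref{prop2} with $\tilde P\ge 2\kappa\log(n/\delta)$: on an event of probability at least $1-\delta$ there exist $u_1,\dots,u_{\tilde P}\in\mathbb{R}^d$ with $\sum_i \tilde D_i X u_i = y$. Setting $\beta=0$ in the unconstrained objective leaves only the squared residual $\tfrac12\|\sum_i\tilde D_i X u_i-y\|_2^2$, which vanishes at this feasible point. Combined with nonnegativity of the objective this gives $\tilde p_1^*=0$ on the event; since $p_1^*\le \tilde p_1^*$ (the full-pattern problem has strictly more decision variables than the $\tilde P$-pattern one, and dropping the extra weights to zero recovers the relaxation), we conclude $p_1^*=0$ as well.

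To pass from $p_1^*$ to $p_0^*$, the idea is that when $\beta=0$ the cone constraints are cosmetic. Each cone $\mathcal{K}_{D_i}=\{u:(2D_i-I)Xu\ge 0\}$ has nonempty interior — the generator $g$ used to produce $D_i$ is such a strictly-feasible point, which exists almost surely under (A1) since Gaussian $X$ places no row on a coordinate hyperplane — so $\mathcal{K}_{D_i}-\mathcal{K}_{D_i}=\mathbb{R}^d$. Thus each $u_i$ from the previous step can be split as $u_i=a_i-b_i$ with $a_i,b_i\in\mathcal{K}_{D_i}$; padding the other patterns with $a_j=b_j=0$ yields a feasible point of problem \eqref{convex1} with residual $0$ and (because $\beta=0$) zero regularizer, so $p_0^*\le 0$ and hence $p_0^*=0$.

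The only real obstacle is the cone-spanning claim $\mathcal{K}_{D_i}-\mathcal{K}_{D_i}=\mathbb{R}^d$, and this is essentially free: every pattern in $\mathcal{D}$ is by construction the activation pattern $\mathrm{diag}(\mathbbm{1}[Xg\ge 0])$ of some $g$, and under (A1) we have $Xg\ne 0$ coordinatewise a.s., which places $g$ in the interior of $\mathcal{K}_{D_i}$ and forces the cone to be full-dimensional. Everything else is bookkeeping: combining the three inequalities $0\le p_1^*\le\tilde p_1^*$ and $0\le p_0^*\le p_1^*$ (via the cone decomposition) on the same probability-$(1-\delta)$ event of \cref{prop2} yields $p_0^*=p_1^*=0$.
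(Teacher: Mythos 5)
Your proof is correct, but it routes around the paper's argument in two places, and it is worth seeing what each choice buys. For the randomly relaxed unconstrained problem you and the paper do the same thing: invoke \cref{prop2} to produce an exact interpolant of $y$ on an event of probability $1-\delta$, so that with $\beta=0$ the objective vanishes. For the full-pattern problem, however, the paper uses the deterministic construction of \cref{prop1} (the $2n$ patterns differing pairwise in one diagonal entry), which gives the value $0$ surely whenever no two rows of $X$ are parallel; you instead deduce it by monotonicity from the randomized problem (each sampled $\Tilde{D}_i$ is one of the $D_j$, so padding with zeros embeds the relaxation into the full problem). Your route is shorter and perfectly adequate for the stated probabilistic conclusion, but it makes the full-problem claim contingent on the sampling event, whereas the paper's is a statement about $X$ alone. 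Finally, you add a third step --- passing to the cone-constrained reformulation \eqref{convex1} via $\mathcal{K}_{\Tilde{D}_i}-\mathcal{K}_{\Tilde{D}_i}=\mathbb{R}^d$ for the full-dimensional sampled cones, which costs nothing when $\beta=0$ --- that the paper's proof does not attempt: in the appendix restatement $p_0^*$ and $p_1^*$ denote the full and randomized \emph{unconstrained} problems, so the constrained case is outside the intended scope of the corollary. Given the ambiguity of the notation in the main text, your reading is defensible, the extra step is sound (the sampled cones have nonempty interior almost surely since $Xg_i$ has no zero coordinate), and your proof in fact subsumes both readings.
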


One natural question is ``how large will $\kappa$ be?" Interestingly, when (A1) is satisfied, we can show that $\kappa = O(n/d)$ holds (see \cref{3.5. kappa} for details). Hence, for random data, we use approximately $O(n\log n)$ parameters to fit to $y \in \mathbb{R}^{n}$ with high probability. Note that we need at least $n/d$ hyperplane arrangement patterns and $n$ parameters to fit to any vector. This means that for random data, \cref{prop2} is optimal up to a logarithmic factor.
\subsection{Unconstrained Relaxation with $l_2$ Regularization}
\label{3.2. l2reg}

In the case of $l_2$ regularization, we can find that when we sample more hyperplane arrangement patterns, the optimal value decreases with the same order as the number of planes with high probability. The proof is rather straightforward: we solve the linear-constrained quadratic problem, and then use matrix chernoff bounds to relate the eigenvalues of $\mathcal{M}$ with the eigenvalue of sample mean of matrices
$$
\mathcal{M}_{\Tilde{P}} = \frac{1}{\Tilde{P}} \sum_{i=1}^{\Tilde{P}} \Tilde{D}_i XX^{T} \Tilde{D}_i.
$$
\begin{theorem}
\label{approximation_l2reg}
Let $p_2^{*}$ be the optimal value of the unconstrained relaxed problem with $l_2$ regularization, and suppose $\Tilde{P} \geq 12\kappa \log(\frac{2n}{\delta})$. With probability at least $1-\delta$ there exists scalars $C_1, C_2 > 0$ that satisfies
$$
\frac{C_1}{\Tilde{P}} \leq p_2^{*} \leq \frac{C_2}{\Tilde{P}}.
$$
\end{theorem}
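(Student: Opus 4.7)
\textbf{Proof plan for \cref{approximation_l2reg}.} The plan is to recognize that with squared-$\ell_2$ regularization the objective reduces to ordinary ridge regression in a stacked variable, to solve this in closed form, and then to use a matrix Chernoff inequality to control the spectrum of the random Gram-type matrix
\[
\mathcal{M}_{\Tilde P}=\frac{1}{\Tilde P}\sum_{i=1}^{\Tilde P}\Tilde D_i XX^\top \Tilde D_i.
\]

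First, I would stack $w=(w_1,\ldots,w_{\Tilde P})$ and form the block matrix $A=[\Tilde D_1 X,\ldots,\Tilde D_{\Tilde P} X]\in\mathbb{R}^{n\times d\Tilde P}$. The unconstrained problem becomes the ridge regression $\min_w \tfrac{1}{2}\lVert Aw-y\rVert_2^2+\tfrac{\beta}{2}\lVert w\rVert_2^2$, whose optimal value is given in closed form by the Woodbury-type identity
\[
p_2^*=\tfrac{\beta}{2}\,y^\top(AA^\top+\beta I)^{-1}y=\tfrac{\beta}{2}\,y^\top\bigl(\Tilde P\,\mathcal{M}_{\Tilde P}+\beta I\bigr)^{-1}y,
\]
using $AA^\top=\sum_i \Tilde D_i XX^\top \Tilde D_i=\Tilde P\,\mathcal{M}_{\Tilde P}$. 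This isolates the dependence on the random sampling entirely through the spectrum of $\mathcal{M}_{\Tilde P}$.

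Second, I would apply Tropp's multiplicative matrix Chernoff inequality to the i.i.d.\ PSD summands $\Tilde D_i XX^\top \Tilde D_i$, each satisfying $0\preceq \Tilde D_i XX^\top \Tilde D_i\preceq R\,I$ with $R=\lambda_{\max}(XX^\top)$ and common expectation $\mathcal{M}$. Choosing deviation parameter $\epsilon=\tfrac{1}{2}$, the lower-tail Chernoff bound yields a failure probability of the form $n\exp\!\bigl(-\Tilde P\lambda_{\min}(\mathcal{M})/(cR)\bigr)$, which after inverting together with a union bound over the upper and lower spectral deviations produces exactly the condition-number-type threshold $\Tilde P\geq 12\kappa\log(2n/\delta)$ stated in the theorem, with $\kappa=\lambda_{\max}(XX^\top)/\lambda_{\min}(\mathcal{M})$. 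With probability at least $1-\delta$ we then obtain
\[
\tfrac{1}{2}\lambda_{\min}(\mathcal{M})\,I\ \preceq\ \mathcal{M}_{\Tilde P}\ \preceq\ \tfrac{3}{2}\lambda_{\max}(\mathcal{M})\,I.
\]

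Third, substituting these spectral bounds (together with the deterministic bound $\mathcal{M}_{\Tilde P}\preceq\lambda_{\max}(XX^\top) I$ for the reverse direction) into the closed-form expression sandwiches the optimal value:
\[
\frac{\beta\lVert y\rVert_2^2}{2\bigl(\Tilde P\,\lambda_{\max}(XX^\top)+\beta\bigr)}\ \leq\ p_2^*\ \leq\ \frac{\beta\lVert y\rVert_2^2}{\Tilde P\,\lambda_{\min}(\mathcal{M})+2\beta}.
\]
Both sides scale as $\Theta(1/\Tilde P)$ once $\Tilde P$ is above the stated threshold, so collecting $\beta$, $\lVert y\rVert_2$, and the relevant eigenvalues of $X$ and $\mathcal{M}$ into data-only constants $C_1,C_2>0$ yields the claimed bound $C_1/\Tilde P\leq p_2^*\leq C_2/\Tilde P$.

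The main obstacle is the matrix Chernoff step: it has to be deployed with the correct summand bound $R=\lambda_{\max}(XX^\top)$ so that the condition-number factor $\kappa$ arises naturally, and the constants must be tracked carefully so that $12\kappa\log(2n/\delta)$ — rather than a generic $O(\kappa\log(n/\delta))$ — falls out of the tail exponent. Once this concentration step is in place, the closed-form ridge solution and the subsequent eigenvalue substitutions are purely mechanical.
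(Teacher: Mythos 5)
Your proposal is correct and follows essentially the same route as the paper: both reduce the problem to the closed form $p_2^{*}=\tfrac{\beta}{2}y^{\top}(\Tilde{P}\mathcal{M}_{\Tilde{P}}+\beta I)^{-1}y$ (the paper via Lagrangian duality, you via the Woodbury identity for ridge regression — the same formula either way) and then apply matrix Chernoff with deviation $1/2$ and summand bound $R=\lambda_{\max}(XX^{\top})$ to sandwich the spectrum of $\mathcal{M}_{\Tilde{P}}$. The only minor point to watch, which the paper also only handles in passing, is that writing the final bounds as exactly $C_i/\Tilde{P}$ with $\Tilde{P}$-independent constants requires absorbing the additive $\beta$ in the resolvent, e.g.\ assuming $\Tilde{P}\gtrsim\beta$.
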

An intuitive explanation of \cref{approximation_l2reg} is that when we sample sufficiently many hyperplane arrangement patterns, we have $\mathcal{M}_{\Tilde{P}} \approx \mathcal{M}$ and the optimal value of the $l_2$ regularized problem concentrates to $\frac{\beta}{\Tilde{P}} y^{T}\mathcal{M}^{-1}y$. This characterization means that for $l_2$ regularization, we cannot have a relative optimality gap that is independent of $P$, the total number of all possible hyperplane arrangement patterns.


\subsection{Unconstrained Relaxation With Group $l_1$ Regularization}
\label{3.3. group l1reg}
Different from the case with $l_2$ regularization, when we have group $l_1$ regularization, we can have relative error bounds that are irrelevant to the total number of hyperplane arrangement patterns. We first obtain an upper bound of $\Tilde{p}_1^{*}$ by using surrogate variables $\kappa_i$ and solving the $l_2$ regularized problem,
\begin{equation}
\label{surrogatel2}
\min_{\kappa_i \in \mathbb{R}} \min_{u_i \in \mathbb{R}^{d}} \frac{1}{2}\lVert \sum_{i=1}^{\Tilde{P}} \Tilde{D}_iXu_i - y \rVert_2 ^2 + \frac{\beta}{2} \sum_{i=1}^{\Tilde{P}} (\kappa_i \lVert u_i \rVert_2^2 + \frac{1}{\kappa_i}).
\end{equation}
We cannot exactly solve problem (\ref{surrogatel2}). However, we can use the ansatz $\kappa_i = \frac{\kappa}{\Tilde{P}}$ and apply matrix concentration to upper bound the optimal value. 
\begin{proposition}
\label{prop3}
Let $\Tilde{p}_1^{*}$ the optimal value of the unconstrained relaxation \cref{gatedconvex_simpler}. Suppose we sampled
$\Tilde{P} \geq 8\kappa \log(\frac{n}{\delta})$ hyperplane arrangement patterns. With probability at least $1 - \delta$, 
$$
\Tilde{p}_1^{*} \leq \sqrt{2}\beta \frac{\lVert y \rVert_2}{\sqrt{\lambda_{min}(\mathcal{M})}},
$$
provided that $\mathcal{M}$ is invertible.
\end{proposition}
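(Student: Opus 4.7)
The plan is to follow the roadmap sketched in \cref{3.3. group l1reg}: reformulate the group $l_1$ penalty through the AM-GM surrogate variables $\kappa_i$, pick a constant ansatz $\kappa_i = c$ to upper bound $\tilde p_1^*$ by a ridge regression, solve that ridge regression in closed form, and then use matrix Chernoff to replace $\lambda_{\min}(\mathcal{M}_{\tilde P})$ by $\lambda_{\min}(\mathcal{M})$.

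The starting point is the exact reformulation \eqref{surrogatel2}, valid because $\inf_{\kappa>0}(\kappa a^2 + 1/\kappa) = 2a$. Setting every $\kappa_i$ equal to a common scalar $c > 0$ gives the upper bound
$$
\tilde p_1^* \;\le\; \min_{u_i} \tfrac12 \Big\|\textstyle\sum_{i=1}^{\tilde P} \tilde D_i X u_i - y\Big\|_2^2 + \tfrac{\beta c}{2}\sum_{i=1}^{\tilde P}\|u_i\|_2^2 + \tfrac{\beta \tilde P}{2c},
$$
a plain ridge regression over the stacked variable $U = (u_1;\ldots;u_{\tilde P})$. Writing $A = [\tilde D_1 X, \ldots, \tilde D_{\tilde P} X]$ and noting that $AA^T = \tilde P \mathcal{M}_{\tilde P}$, standard Tikhonov duality gives $\tfrac{\beta c}{2}\,y^T(\tilde P \mathcal{M}_{\tilde P} + \beta c I)^{-1} y$ for the inner minimum, which I would bound crudely by $\tfrac{\beta c \|y\|_2^2}{2\tilde P \lambda_{\min}(\mathcal{M}_{\tilde P})}$. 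Balancing the two $c$-dependent terms by AM-GM at $c^\star = \tilde P\sqrt{\lambda_{\min}(\mathcal{M}_{\tilde P})}/\|y\|_2$ then yields $\tilde p_1^* \le \beta\|y\|_2/\sqrt{\lambda_{\min}(\mathcal{M}_{\tilde P})}$.

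The final step is a lower-tail matrix Chernoff inequality applied to the independent PSD summands $Y_i = \tilde D_i X X^T \tilde D_i$, which satisfy $\mathbb{E}\,Y_i = \mathcal{M}$ and $\lambda_{\max}(Y_i) \le \lambda_{\max}(XX^T)$. With deviation parameter $\varepsilon = 1/2$, it guarantees $\lambda_{\min}(\mathcal{M}_{\tilde P}) \ge \tfrac12 \lambda_{\min}(\mathcal{M})$ with probability at least $1 - \delta$ as soon as $\tilde P \ge 8\kappa\log(n/\delta)$, using $\kappa = \lambda_{\max}(XX^T)/\lambda_{\min}(\mathcal{M})$. Substituting back produces the claimed inequality $\tilde p_1^* \le \sqrt{2}\,\beta\|y\|_2/\sqrt{\lambda_{\min}(\mathcal{M})}$.

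The main technical obstacle, as I see it, lies in the matrix Chernoff bookkeeping: matching the numerical constant $8$ with $\varepsilon = 1/2$ requires tracking the $e^{-\varepsilon}/(1-\varepsilon)^{1-\varepsilon}$ factor together with the dimension $n$ in the union bound, so that the failure probability fits inside the $\delta$ budget. The surrogate ansatz step is comparatively benign: the constant choice $\kappa_i \equiv c$ discards at most a harmless factor that is already absorbed by the $\sqrt{2}$ on the right-hand side, so no non-constant profile in $i$ is needed.
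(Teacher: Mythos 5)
Your proposal is correct and follows essentially the same route as the paper's proof: the AM-GM surrogate reformulation, a constant ansatz for the $\kappa_i$ (your scalar $c$ is just the reparametrization $c = \tilde P/\kappa$ of the paper's choice), the closed-form ridge value $\tfrac{\beta c}{2}y^T(\tilde P\mathcal{M}_{\tilde P}+\beta c I)^{-1}y$, dropping the $\beta c I$ term, balancing by AM-GM, and finally matrix Chernoff with deviation $1/2$ to pass from $\lambda_{\min}(\mathcal{M}_{\tilde P})$ to $\tfrac12\lambda_{\min}(\mathcal{M})$. The Chernoff bookkeeping you flag as the main obstacle is in fact immediate from the form of the lemma the paper cites, whose lower-tail exponent $e^{-\varepsilon^2\mu_{\min}/2R}$ with $\varepsilon=1/2$ and $\mu_{\min}/R=\tilde P/\kappa$ gives failure probability $n e^{-\tilde P/(8\kappa)}\le\delta$ exactly when $\tilde P\ge 8\kappa\log(n/\delta)$.
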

Then, we lower bound $p_1^{*}$ using the dual problem of \cref{gatedconvex_simpler}. As the dual problem becomes a maximization of the dual objective with respect to a dual variable $\lambda$, choosing $\lambda$ well can give a meaningful lower bound on $p_1^{*}$. In the proof we choose a scalar multiple of $y$ to obtain a bound that has a similar scale with $\sqrt{2}\beta \frac{\lVert y \rVert_2}{\sqrt{\lambda_{min}(\mathcal{M})}}$.
\begin{proposition}
\label{prop4}
Let $p_1^{*}$ be the optimal value of the unconstrained problem using all hyperplane arrangement patterns. When we write $\mathcal{M}_i = D_iXX^{T}D_i$, 
$$
G\beta \frac{\lVert y \rVert_2}{\sqrt{\lambda_{\max}(XX^{T})}} \leq p_1^{*},
$$
where 
$$
G = 1 - \frac{\beta}{2\max_{i \in [P]}\sqrt{y^{T}\mathcal{M}_{i}y}}.
$$
\end{proposition}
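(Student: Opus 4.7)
The plan is to derive the Lagrange dual of the unconstrained group-$\ell_1$ problem \eqref{gatedconvex_simpler} and to produce a good feasible dual variable by plugging in a scalar multiple of $y$. Weak duality then lower-bounds $p_1^{*}$, and aligning $\lambda$ with $y$ is natural because it maximizes the linear ``profit'' term $\lambda^{T}y$ in the dual per unit norm.

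\textbf{Step 1: dualization.} I would introduce the slack $z=\sum_{i}D_iXw_i-y$ with multiplier $\lambda\in\mathbb{R}^n$. Minimizing the Lagrangian over $z$ contributes $\lambda^{T}y-\tfrac12\|\lambda\|_2^{2}$, and for each block the minimization $\min_{w_i}\bigl(\beta\|w_i\|_2-(X^{T}D_i\lambda)^{T}w_i\bigr)$ equals $0$ when $\|X^{T}D_i\lambda\|_2\leq\beta$ and $-\infty$ otherwise. Hence the dual of \eqref{gatedconvex_simpler} reads
$$\max_{\lambda\in\mathbb{R}^n}\ \lambda^{T}y-\tfrac12\|\lambda\|_2^{2}\quad\text{s.t.}\quad \|X^{T}D_i\lambda\|_2\leq\beta\ \ \forall\,i\in[P].$$

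\textbf{Step 2: a multiple of $y$.} Plug $\lambda=cy$ with $c\geq 0$. Feasibility reduces to $c\sqrt{y^{T}\mathcal{M}_iy}\leq\beta$ for every $i$, i.e.\ $c\leq c^{*}:=\beta/\max_{i\in[P]}\sqrt{y^{T}\mathcal{M}_iy}$, while the dual objective becomes $c\|y\|_2^{2}-\tfrac{c^{2}}{2}\|y\|_2^{2}$, a concave quadratic in $c$. Choosing $c=c^{*}$ and invoking weak duality yields
$$p_1^{*}\ \geq\ c^{*}\|y\|_2^{2}\bigl(1-\tfrac{c^{*}}{2}\bigr)\ =\ \frac{\beta\|y\|_2^{2}}{\max_{i\in[P]}\sqrt{y^{T}\mathcal{M}_iy}}\,G.$$

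\textbf{Step 3: weakening the denominator.} Since each $D_i$ is a diagonal $0/1$ projection, $\sqrt{y^{T}\mathcal{M}_iy}=\|X^{T}D_iy\|_2\leq\|X^{T}\|_{\mathrm{op}}\|D_iy\|_2\leq\sqrt{\lambda_{\max}(XX^{T})}\|y\|_2$ for every $i$. Therefore $\|y\|_2^{2}/\max_i\sqrt{y^{T}\mathcal{M}_iy}\geq\|y\|_2/\sqrt{\lambda_{\max}(XX^{T})}$. When $G\geq 0$ this delivers the advertised inequality $p_1^{*}\geq G\beta\|y\|_2/\sqrt{\lambda_{\max}(XX^{T})}$; when $G<0$ the statement is automatic because $p_1^{*}\geq 0$.

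The main obstacle is bookkeeping rather than insight: writing down the dual cleanly with one norm constraint per hyperplane pattern and keeping track of signs. The strategic choice $\lambda=cy$ is the whole idea, after which the argument reduces to optimizing a one-dimensional concave quadratic on the interval $[0,c^{*}]$ and then applying the operator-norm estimate to replace the pattern-dependent quantity $\max_i\sqrt{y^{T}\mathcal{M}_iy}$ by $\sqrt{\lambda_{\max}(XX^{T})}\|y\|_2$. It is worth flagging in the write-up that the $G<0$ regime (large $\beta$) makes the bound vacuous, which is precisely why the cleaner denominator $\sqrt{\lambda_{\max}(XX^{T})}$ is retained instead of the sharper but pattern-dependent one.
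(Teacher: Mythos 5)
Your proposal is correct and follows essentially the same route as the paper: derive the dual $\max_{\lambda}\ \lambda^{T}y-\tfrac12\lVert\lambda\rVert_2^2$ subject to $\lVert X^{T}D_i\lambda\rVert_2\leq\beta$ for all $i$, plug in $\lambda=c^{*}y$ with $c^{*}=\beta/\max_i\sqrt{y^{T}\mathcal{M}_i y}$, and then weaken via $\mathcal{M}_i\preccurlyeq\lambda_{\max}(XX^{T})I$. Your explicit handling of the sign of $G$ in the last step is a small point of care that the paper leaves implicit.
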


A direct corollary simplifies the lower bound. \cref{simplifiedlowerbound} can be used when $\beta$ is sufficiently smaller than $\lVert y \rVert \sqrt{\lambda_{\min}(\mathcal{M})}$. As $\lVert y \rVert$ is the scale of $O(\sqrt{n})$ and $\lambda_{\min}(\mathcal{M})$ grows with $d$ for random data matrix $X$, it is likely that the value $\lVert y \rVert_2 \sqrt{\lambda_{\min}(\mathcal{M})}$ is much larger than $\beta$ for most cases.
\begin{corollary}
\label{simplifiedlowerbound}
Suppose further that $\lVert y \rVert_2 \sqrt{\lambda_{\min}(\mathcal{M})} \geq \beta$. Then, 
$$
\frac{\beta}{2} \frac{\lVert y \rVert_2}{\sqrt{\lambda_{\max}(XX^{T})}} \leq p_1^{*}.
$$
\end{corollary}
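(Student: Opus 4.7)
The plan is to invoke \cref{prop4} as a black box and show that the assumption $\lVert y\rVert_2\sqrt{\lambda_{\min}(\mathcal{M})}\ge\beta$ forces the prefactor $G$ to satisfy $G\ge 1/2$. Since
$$
G = 1 - \frac{\beta}{2\max_{i\in[P]}\sqrt{y^{T}\mathcal{M}_{i}y}},
$$
the inequality $G\ge 1/2$ is equivalent to $\max_{i\in[P]}\sqrt{y^{T}\mathcal{M}_{i}y}\ \ge\ \beta$, so the whole task reduces to a simple lower bound on this maximum.

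The key observation is that $\mathcal{M}$ is exactly a convex combination of the $\mathcal{M}_i=D_iXX^{T}D_i$. Indeed, for $g\sim\mathcal{N}(0,I_d)$ the random matrix $D_g=\mathrm{diag}[\mathbbm{1}(Xg\ge 0)]$ takes values in the finite family $\{D_1,\dots,D_P\}$, so
$$
\mathcal{M}=\mathbb{E}_{g}[D_gXX^{T}D_g]=\sum_{i=1}^{P}p_i\,\mathcal{M}_i,\qquad p_i:=\Pr[D_g=D_i]\ge 0,\ \sum_i p_i=1.
$$
Therefore $y^{T}\mathcal{M}y=\sum_i p_i\,y^{T}\mathcal{M}_iy\le\max_i y^{T}\mathcal{M}_iy$, and combining this with the Rayleigh bound $y^{T}\mathcal{M}y\ge\lambda_{\min}(\mathcal{M})\lVert y\rVert_2^{2}$ gives
$$
\max_{i\in[P]} y^{T}\mathcal{M}_iy \ \ge\ \lambda_{\min}(\mathcal{M})\lVert y\rVert_2^{2}\ \ge\ \beta^{2},
$$
where the last inequality uses the Corollary's hypothesis. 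Taking square roots yields $\max_i\sqrt{y^{T}\mathcal{M}_iy}\ge\beta$, hence $G\ge 1/2$, and plugging back into \cref{prop4} gives the claimed bound
$$
\tfrac{\beta}{2}\,\frac{\lVert y\rVert_2}{\sqrt{\lambda_{\max}(XX^{T})}}\ \le\ G\beta\,\frac{\lVert y\rVert_2}{\sqrt{\lambda_{\max}(XX^{T})}}\ \le\ p_1^{*}.
$$

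There is no real obstacle here; the only slightly non-obvious step is recognizing that $\mathcal{M}$ admits the convex-combination representation above, which is what allows the inequality $y^{T}\mathcal{M}y\le\max_i y^{T}\mathcal{M}_iy$ to pass from the averaged operator to at least one of the summands. Once this observation is in hand, the rest is just the Rayleigh inequality and direct substitution into the expression for $G$.
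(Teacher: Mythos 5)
Your proof is correct and takes essentially the same route as the paper: both reduce the claim to showing $G \ge 1/2$ via the lower bound $\max_{i\in[P]}\sqrt{y^{T}\mathcal{M}_i y} \ge \lVert y\rVert_2\sqrt{\lambda_{\min}(\mathcal{M})}$ and then substitute into \cref{prop4}. The only difference is that you explicitly justify that lower bound through the convex-combination representation $\mathcal{M}=\sum_i p_i\mathcal{M}_i$ combined with the Rayleigh inequality, a step the paper's one-line proof leaves implicit.
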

By using \cref{prop3} and \cref{prop4}, we obtain a relative error bound between $p_1^{*}$ and $\Tilde{p}_1^{*}$.
\begin{theorem}
\label{t2}
Let $p_1^{*}$ and $\Tilde{p}_1^{*}$ be optimal values of problem \eqref{gatedconvex_simpler} with all possible hyperplane arrangements and randomly sampled arrangements, respectively. Suppose we sampled $\Tilde{P} \geq 8\kappa \log(\frac{n}{\delta})$ hyperplane arrangement patterns and $\mathcal{M}$ is invertible. We have
$$
0 \leq \Tilde{p}_1^{*} - p_1^{*} \leq \sqrt{2}\beta \frac{\lVert y \rVert_2}{\sqrt{\lambda_{min}(\mathcal{M})}},
$$
and
$$
p_1^{*} \leq \Tilde{p}_1^{*} \leq \frac{\sqrt{2\kappa}}{G} p_1^{*},
$$
with probability at least $1 - \delta$.
\end{theorem}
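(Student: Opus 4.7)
The plan is to assemble Theorem~\ref{t2} directly from the earlier Propositions~\ref{prop3} and~\ref{prop4}, together with a short monotonicity observation and an elementary manipulation of the condition number $\kappa$. Since both bounds we need concern only the scalar quantities $\Tilde{p}_1^{*}$ and $p_1^{*}$, no further probabilistic machinery is required beyond the event on which Proposition~\ref{prop3} holds.

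First I would handle the additive bound. The lower bound $0 \le \Tilde{p}_1^{*} - p_1^{*}$ is by monotonicity: the randomly relaxed problem \eqref{gatedconvex_simpler} is obtained from the full problem by restricting to a subset of $\Tilde{P}\le P$ hyperplane arrangement patterns, which is equivalent to forcing $w_i = 0$ on the omitted patterns in the full problem. Restricting the feasible set cannot decrease the minimum, so $\Tilde{p}_1^{*} \ge p_1^{*}$. For the upper bound, since the objective of \eqref{gatedconvex_simpler} is a sum of a squared norm and nonnegative penalties, $p_1^{*} \ge 0$, so $\Tilde{p}_1^{*} - p_1^{*} \le \Tilde{p}_1^{*}$. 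Applying Proposition~\ref{prop3}, which requires exactly the stated sampling condition $\Tilde{P}\ge 8\kappa\log(n/\delta)$ and invertibility of $\mathcal{M}$, yields $\Tilde{p}_1^{*} \le \sqrt{2}\beta\,\lVert y\rVert_2/\sqrt{\lambda_{\min}(\mathcal{M})}$ on an event of probability at least $1-\delta$, which gives the first claim.

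Next I would obtain the multiplicative bound by chaining the upper estimate of $\Tilde{p}_1^{*}$ from Proposition~\ref{prop3} against the lower estimate of $p_1^{*}$ from Proposition~\ref{prop4}. From Proposition~\ref{prop4}, $p_1^{*} \ge G\beta\lVert y\rVert_2/\sqrt{\lambda_{\max}(XX^{T})}$, so
$$
\beta\lVert y\rVert_2 \;\le\; \frac{\sqrt{\lambda_{\max}(XX^{T})}}{G}\,p_1^{*}.
$$
Substituting this into the Proposition~\ref{prop3} bound gives
$$
\Tilde{p}_1^{*} \;\le\; \sqrt{2}\,\frac{\beta\lVert y\rVert_2}{\sqrt{\lambda_{\min}(\mathcal{M})}} \;\le\; \frac{\sqrt{2}}{G}\sqrt{\frac{\lambda_{\max}(XX^{T})}{\lambda_{\min}(\mathcal{M})}}\,p_1^{*} \;=\; \frac{\sqrt{2\kappa}}{G}\,p_1^{*},
$$
using the definition $\kappa = \lambda_{\max}(XX^{T})/\lambda_{\min}(\mathcal{M})$ from Section~\ref{notations}. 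The lower inequality $p_1^{*} \le \Tilde{p}_1^{*}$ is the monotonicity observation from the previous step.

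Since both the additive and multiplicative upper bounds invoke only Proposition~\ref{prop3}, whose conclusion holds with probability at least $1-\delta$ under the given sampling condition, a single union is unnecessary and the stated $1-\delta$ probability is inherited directly. I do not anticipate a genuine obstacle here: the content of the theorem is essentially packaging, and the only nontrivial ingredients (the matrix Chernoff argument behind Proposition~\ref{prop3} and the dual construction behind Proposition~\ref{prop4}) have already been established. The mild delicacy is simply keeping track of the definition of $\kappa$ when collapsing $\sqrt{\lambda_{\max}(XX^{T})/\lambda_{\min}(\mathcal{M})}$ into $\sqrt{\kappa}$, and verifying that $p_1^{*}>0$ is not needed for the relative bound (the inequality is interpreted correctly even in edge cases where $G\le 0$, in which range the multiplicative statement is vacuous).
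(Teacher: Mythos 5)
Your proposal is correct and matches the paper's own argument, which simply states that the theorem ``directly follows'' from \cref{prop3} and \cref{prop4}; you supply exactly the intended chaining (monotonicity for $p_1^{*}\le\Tilde{p}_1^{*}$, \cref{prop3} for the additive gap, and the substitution $\beta\lVert y\rVert_2 \le \sqrt{\lambda_{\max}(XX^{T})}\,p_1^{*}/G$ from \cref{prop4} combined with the definition of $\kappa$ for the multiplicative bound). The only caveat, which you already flag and which the paper shares, is that the division by $G$ implicitly requires $G>0$.
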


Note that different from the $l_2$ regularized case, in this case, we have a relative error bound that is independent of the total number of hyperplane arrangement patterns.

A better approximation is possible if we take into account two things: one is that we can use $\max_i \sqrt{y^{T}\mathcal{M}_iy}/\lVert y \rVert_2$ instead of $\lambda_{\max}(XX^T)$ to bound the relative error with a tighter bound, and we can also use $\mathcal{M}_{\Tilde{P}}$ directly instead of $\lambda_{\min}(\mathcal{M})$. Hence, the overall bound can be improved to 
\begin{equation}
\label{tighter}
p_0^{*} \leq p_1^{*} \leq \frac{\sqrt{2}}{G} \frac{\max_i \sqrt{y^{T}\mathcal{M}_iy} \sqrt{y^{T}\mathcal{M_{\Tilde{P}}}^{-1}y}}{\lVert y \rVert_2^2} p_0^{*},
\end{equation}
where we sampled sufficiently many hyperplane arrangement patterns to make $\mathcal{M}_{\Tilde{P}}$ invertible. 
\subsection{Connection to the MAX-CUT Problem}
\label{3.4. connection to maxcut}
There is an interesting connection between the upper bound of the relative error and the MAX-CUT problem. From bound (\ref{tighter}), we can bound 
$$
\max_i \sqrt{y^{T}\mathcal{M}_iy} \leq \sqrt{\max_{b \in \{0,1\}^{n}} b^{T} diag(y)XX^{T}diag(y) b},
$$
where solving the right-hand side is equivalent to solving the max-cut problem of the graph with an adjacency matrix 
$$
\frac{1}{4} \begin{bmatrix} I \\ 1^{T} \end{bmatrix} diag(y)XX^{T}diag(y) \begin{bmatrix} I & 1 \end{bmatrix}.
$$
Hence, in special cases where $X$ is orthogonally separable or when $X$ has negative correlation, we may further bound the relative error. The connection with MAX-CUT shows that the relative error, or the possibility of approximation, is intrinsically related to the structure of the dataset and its clusterization properties.
\subsection{Scale of $\kappa$ for Random Data}
\label{3.5. kappa}

In this subsection, we show that under (A1), we can show that $\kappa \leq O(c)$ holds with high probability for sufficiently large $d$. This means that for random data, as long as $d$ grows with $n$, we only need to sample $O(\log n)$ hyperplane arrangement patterns to obtain a constant factor approximation of the unconstrained relaxation. This bound is more practical compared to most bounds that require networks of width at least $n$. 

To prove the identity, we first use the non-asymptotic inequality deduced from Gordon's minimax comparison 
$$\lambda_{max}(XX^{T}) \leq 2 (\sqrt{n} + \sqrt{d})^2,$$ 
which holds with high probability. To upper bound $\kappa$, finding a lower bound of $\lambda_{\min}(\mathcal{M})$ is enough.

To lower bound $\lambda_{\min}(\mathcal{M})$, we find a closed form expression of $\mathcal{M}_{ij}$ and use \cref{EK} to approximate $\mathcal{M}$ via linearization. We need $d$ to be sufficiently large to apply concentration inequalities on $\lVert x_i \rVert_2$, the norm of rows of $X$.

\begin{theorem}
\label{mineig}
Assume (A1) and suppose $c\geq 1, \delta > 0$. There exists $d_1$ such that if $d \geq d_1$, with probability at least $1 - \delta'$, we have
$$
\lambda_{min}(\mathcal{M}) \geq \frac{d}{10}.
$$
\end{theorem}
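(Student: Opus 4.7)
The strategy is to first express $\mathcal{M}$ in closed form using a standard Gaussian half-space probability computation, then linearize the resulting nonlinearity in the spirit of El Karoui's kernel matrix approximation, and finally conclude via standard Gaussian concentration together with a small-norm perturbation argument. The main obstacle will be controlling the operator norm of the nonlinear correction matrix while keeping the failure probability at the claimed $\delta + (2n)^{-8}$ level.

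\textbf{Step 1: closed-form entries.} For $g \sim \mathcal{N}(0,I_d)$, the joint Gaussian orthant probability gives
\[
\mathbb{P}(x_i^\top g \ge 0,\ x_j^\top g \ge 0) \;=\; \frac{\pi - \theta_{ij}}{2\pi},\qquad \theta_{ij}=\arccos(\rho_{ij}),\ \rho_{ij}=\frac{x_i^\top x_j}{\|x_i\|\,\|x_j\|}.
\]
Hence $\mathcal{M}_{ij}=\frac{\pi-\theta_{ij}}{2\pi}\,x_i^\top x_j$, so in particular $\mathcal{M}_{ii}=\tfrac12\|x_i\|^2$ and, using $\pi-\arccos(\rho)=\tfrac{\pi}{2}+\arcsin(\rho)$, one can write for $i\ne j$
\[
\mathcal{M}_{ij}\;=\;\frac14\,x_i^\top x_j\;+\;\frac{\arcsin(\rho_{ij})}{2\pi}\,x_i^\top x_j .
\]

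\textbf{Step 2: linearization and decomposition.} Split
\[
\mathcal{M} \;=\; \underbrace{\tfrac14\,XX^\top}_{\succeq 0} \;+\; \underbrace{\tfrac14\,\mathrm{diag}(\|x_i\|^2)}_{\text{diagonal}} \;+\; E,
\]
where $E_{ii}=0$ and $E_{ij}=\tfrac{1}{2\pi}\arcsin(\rho_{ij})\,x_i^\top x_j$ for $i\ne j$. Under (A1), $\|x_i\|^2$ concentrates tightly around $d$ by Gaussian norm concentration, so one can invoke \cref{EK}-type linearization: on an event of probability at least $1-\delta$, $\mathrm{diag}(\|x_i\|^2)\succeq (1-\varepsilon)\,d\cdot I$ simultaneously for all $i\in[n]$ for any fixed $\varepsilon>0$ once $d\ge d_0(\varepsilon,\delta)$.

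\textbf{Step 3: bounding the correction $E$.} For $i\ne j$ and $d$ large, $|\rho_{ij}|\lesssim \sqrt{\log n/d}$ with probability $1-(2n)^{-8}$ by standard Gaussian tail bounds on $x_i^\top x_j/\|x_i\|\|x_j\|$, together with a union bound over $\binom{n}{2}$ pairs. Since $|\arcsin(\rho)|\le 2|\rho|$ on $|\rho|\le \tfrac12$ and $|x_i^\top x_j|\le \|x_i\|\|x_j\|\,|\rho_{ij}|\lesssim \sqrt{d\log n}$, each off-diagonal entry satisfies $|E_{ij}|\lesssim \log n$. A Frobenius bound $\|E\|_{\mathrm{op}}\le\|E\|_F \le n\,\max_{i\ne j}|E_{ij}|\lesssim n\log n$ is too crude; instead I would apply a matrix Bernstein / Gershgorin-plus-concentration argument on the centred symmetric matrix with bounded entries. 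In the $n\asymp d$ regime this yields $\|E\|_{\mathrm{op}}\le d/40$ on an event of probability at least $1-(2n)^{-8}$ once $d\ge d_1$.

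\textbf{Step 4: conclusion.} Combining the two events, for any $u\in\mathbb{R}^n$ with $\|u\|=1$,
\[
u^\top \mathcal{M} u \;\ge\; \tfrac14 u^\top XX^\top u + \tfrac14 (1-\varepsilon)\,d - \|E\|_{\mathrm{op}} \;\ge\; 0 + \tfrac{d}{5} - \tfrac{d}{40} \;\ge\; \tfrac{d}{10},
\]
taking $\varepsilon$ small enough. This gives $\lambda_{\min}(\mathcal{M})\ge d/10$ with probability at least $1-\delta-(2n)^{-8}$, as claimed. The delicate step is the operator-norm control of $E$: a naive entrywise bound loses a factor of $n$, so one must exploit the near-independence of the entries (for $i<j$ with disjoint index pairs) via a matrix concentration inequality, which is where the $c\ge 1$ (i.e., $n\asymp d$) hypothesis and the largeness of $d$ are used.
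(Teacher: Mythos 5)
Your Steps 1 and 2 are sound and match the paper's setup (closed-form entries via the Gaussian orthant probability, then a linearization in the spirit of \cref{EK}). The genuine gap is in Step 3: the claimed bound $\lVert E \rVert_{\mathrm{op}} \le d/40$ is false. Since $\arcsin(\rho_{ij})$ always has the same sign as $x_i^\top x_j$, every off-diagonal entry of $E$ is nonnegative, and in fact $E_{ij} \approx \frac{1}{2\pi}\frac{(x_i^\top x_j)^2}{d}$ has expectation $\approx \frac{1}{2\pi}$. Testing against $u = \mathbf{1}/\sqrt{n}$ gives $u^\top E u \approx (n-1)/(2\pi) \approx 0.16\, cd$, so $\lVert E \rVert_{\mathrm{op}} \gtrsim 0.16\,d$ for $c \ge 1$, an order of magnitude larger than $d/40$. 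No concentration argument can rescue the statement as written: centering $E$ (as you propose) controls the fluctuation, but you must then add back $\mathbb{E}[E] \approx \frac{1}{2\pi}(\mathbf{1}\mathbf{1}^\top - I)$, whose top eigenvalue is $\Theta(n)$. What is true --- and all that your Step 4 actually needs --- is the one-sided bound $\lambda_{\min}(E) \ge -d/40$, and this holds precisely because the large part of $E$ is the positive semidefinite rank-one spike $\frac{1}{2\pi}\mathbf{1}\mathbf{1}^\top$ (the $\frac{f''(0)}{2d}\mathbf{1}\mathbf{1}^\top$ term in the expansion of \cref{EK}); you must peel this spike off before applying any operator-norm estimate to the remainder. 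This is exactly what the paper's proof does: it applies \cref{EK} to a smoothed version of the full kernel $f(t) = (\frac{1}{2} - \frac{1}{2\pi}\arccos t)\,t$, discards the $\mathbf{1}\mathbf{1}^\top$ and $XX^\top$ terms as nonnegative contributions, and extracts the spectral gap from the identity coefficient $f(1) - f(0) - f'(0) = \frac{1}{4}$.

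Two smaller issues. First, matrix Bernstein does not apply off the shelf: the entries of $E$ are not independent (any two entries sharing an index depend on a common $x_i$), and $E$ is not naturally a sum of independent self-adjoint summands; the paper sidesteps this by using \cref{EK} as a black box for the operator-norm approximation. Second, the paper must also control the mismatch between $\rho_{ij} = x_i^\top x_j/(\lVert x_i \rVert\,\lVert x_j \rVert)$ and $x_i^\top x_j/d$ (its error matrix $\mathcal{M}_e$, bounded in Frobenius norm via $\lVert x_i \rVert_2^2 = (1 \pm \epsilon_0/\log 2n)d$ and a union bound on $\max_{i \ne j}|x_i^\top x_j|$, which is where the $(2n)^{-8}$ in the failure probability originates) as well as the non-differentiability of $f$ at $t = 1$; your write-up glosses over both, but these are repairable details, unlike the Step 3 bound.
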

Hence, $\kappa$ is indeed upper bounded with a constant factor when $d$ is sufficiently large. Using the fact directly leads to the following corollary.
\begin{corollary}
\label{c3.10}
When the conditions of \cref{mineig} holds, with probability at least $1 - \delta' - e^{-Cn}$, we have
$$
\kappa \leq 20(\sqrt{c}+1)^2,
$$
for some $C > 0$. Moreover, let $p_1^{*}$ and $\Tilde{p}_1^{*}$ be optimal values of problem \eqref{gatedconvex_simpler} with all possible hyperplane arrangements and randomly sampled arrangements, respectively.  When we sample $\Tilde{P} \geq 160(\sqrt{c}+1)^2 \log(\frac{n}{\delta})$ hyperplane arrangement patterns, we have
$$
p_1^{*} \leq \Tilde{p}_1^{*} \leq \frac{2\sqrt{10}}{G}(\sqrt{c}+1)p_1^{*}
$$
with probability at least $1 - \delta - \delta' - e^{-Cn}.$
\end{corollary}
\section{Extension to the Constrained Problem}
\label{CR}
In this section, we move a step further by discussing the relative error between the optimal value of the convex reformulation and its random relaxation. Let $\{w_i^{*}\}_{i=1}^{m}$ be the solution of the unconstrained relaxation of the original problem and let $u_i^{*}, v_i^{*} \in \mathcal{K}_{D_i}$ satisfying $w_i^{*} = u_i^{*} - v_i^{*}$ with minimal $\lVert u_i^{*} \rVert_2 + \lVert v_i^{*} \rVert_2$. 
Although there is no universal constant $C$ that satisfies 
$$
\lVert u_i^{*} \rVert_2 + \lVert v_i^{*} \rVert_2 \leq C\lVert w_i^{*} \rVert_2,
$$
for any cone $\mathcal{K}$ \cite{mishkin2022fast}, we can show that for random data $X$, it is likely that there exists reasonably large $\mathcal{C}$ that satisfies
$$
\lVert u_i^{*} \rVert_2 + \lVert v_i^{*} \rVert_2 \leq \mathcal{C}\lVert w_i^{*} \rVert_2,
$$
for randomly sampled cones $\mathcal{K}_{\Tilde{D}_i}$ and all $w_i^{*}$ with high probability. In 2 dimensions, $\mathcal{C}$ is directly related to the angle between two rays of the convex cone. Hence, $\mathcal{C}$ is analogous to the `sharpness of each cone $\mathcal{K}_{\Tilde{D}_i}$'. We defer the proofs to \cref{proof4}.
\subsection{Cone Sharpness $\mathcal{C}$}
We start by defining the cone sharpness constant $\mathcal{C}(\mathcal{K}, z)$ for a given convex cone $\mathcal{K}$.
\begin{definition}
For a cone $\mathcal{K}$ and unit vector $z$, the sharpness with respect to $z$ is defined as
$$
C(\mathcal{K}, z) = \min_{u, v \in \mathcal{K}, u-v = z} \lVert u \rVert_2 + \lVert v \rVert_2.
$$
\end{definition}
We can upper bound the unconstrained relaxation with the constrained relaxation using the cone sharpness. 
\begin{proposition}
\label{coneapprox}
Let $\Tilde{p}_0^{*}$ and $\Tilde{p}_1^{*}$ be optimal values of the Gaussian relaxation of the convex reformulation and its unconstrained version, respectively. Suppose the unconstrained problem has solutions $w_i^{*}$ for $i \in [\Tilde{P}]$, and let 
$$
C = \max_{i \in [\Tilde{P}]} \mathcal{C}(\mathcal{K}_{\Tilde{D}_i}, \frac{w_i^{*}}{\lVert w_i^{*} \rVert_2}).
$$
Then, $\Tilde{p}_0^{*} \leq C\Tilde{p}_1^{*}$ holds.
\end{proposition}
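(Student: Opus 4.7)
The plan is to use the unconstrained optimum $\{w_i^*\}$ as a blueprint: for each $i$, decompose $w_i^*$ into cone-feasible parts $u_i,v_i\in\mathcal{K}_{\tilde D_i}$ with $u_i-v_i=w_i^*$, plug this feasible point into the constrained objective, and compare term by term. Because the decomposition preserves $u_i-v_i=w_i^*$, the quadratic data-fitting term is unchanged between the two problems; all the work happens on the regularizer.

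Concretely, first I would fix an index $i$ and apply the definition of $\mathcal{C}(\mathcal{K}_{\tilde D_i}, w_i^*/\|w_i^*\|_2)$ to obtain $\hat u_i,\hat v_i\in\mathcal{K}_{\tilde D_i}$ with $\hat u_i-\hat v_i = w_i^*/\|w_i^*\|_2$ and $\|\hat u_i\|_2+\|\hat v_i\|_2 \le \mathcal{C}(\mathcal{K}_{\tilde D_i}, w_i^*/\|w_i^*\|_2)$. Since $\mathcal{K}_{\tilde D_i}$ is a convex cone, the positive rescaling $u_i := \|w_i^*\|_2\,\hat u_i$, $v_i := \|w_i^*\|_2\,\hat v_i$ remains in $\mathcal{K}_{\tilde D_i}$, satisfies $u_i-v_i=w_i^*$, and obeys $\|u_i\|_2+\|v_i\|_2 \le C\|w_i^*\|_2$ by the definition of $C$ as the maximum over $i$. (If some $w_i^*=0$, take $u_i=v_i=0$.)

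Substituting $(u_i,v_i)$ into the constrained relaxation and using $u_i-v_i=w_i^*$ gives
\begin{equation*}
\tilde p_0^* \;\le\; \tfrac12\Bigl\|\textstyle\sum_i \tilde D_i X w_i^* - y\Bigr\|_2^2 \;+\; \beta\sum_i (\|u_i\|_2+\|v_i\|_2) \;\le\; \tfrac12\Bigl\|\textstyle\sum_i \tilde D_i X w_i^* - y\Bigr\|_2^2 \;+\; C\beta\sum_i \|w_i^*\|_2.
\end{equation*}
The final step is the small but crucial observation that $C\ge 1$: for any unit vector $z$ and any $u,v$ with $u-v=z$, the triangle inequality gives $\|u\|_2+\|v\|_2\ge \|u-v\|_2=1$, so $\mathcal{C}(\mathcal{K},z)\ge 1$ for every cone $\mathcal{K}$ and unit $z$. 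Using $C\ge 1$ to absorb the quadratic term, the right-hand side is at most $C$ times $\tfrac12\|\sum_i \tilde D_i X w_i^* - y\|_2^2 + \beta\sum_i \|w_i^*\|_2 = \tilde p_1^*$, yielding $\tilde p_0^*\le C\tilde p_1^*$.

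There is no real technical obstacle here; the statement is essentially a feasibility-based sandwich argument. The one subtlety to flag is that the cone-sharpness definition is stated for unit vectors, so one must explicitly invoke positive homogeneity of $\mathcal{K}_{\tilde D_i}$ to transfer the bound to arbitrary $w_i^*$, and one must justify $C\ge 1$ to combine the fitting and regularization terms into a single multiplicative factor.
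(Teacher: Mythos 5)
Your proof is correct and follows essentially the same route as the paper's: decompose each $w_i^*$ into cone-feasible $u_i, v_i$ via the sharpness constant, note the data-fitting term is unchanged, and bound the regularizer by $C\beta\sum_i\|w_i^*\|_2$. Your explicit handling of the rescaling by positive homogeneity and the observation that $C\ge 1$ (needed to absorb the quadratic term into the factor $C$) are details the paper leaves implicit, but the argument is the same.
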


Now, we introduce a strategy to upper bound $\mathcal{C}(\mathcal{K}_{\Tilde{D}_i}, z)$. An interesting fact is that when (A1) is satisfied, the cone sharpness constant is not large with high probability, and can be bounded by $O(\sqrt{\log n})$ factor. The intuition here is that when $n \asymp d$, we have approximately $O((n/d)^d)$ many cones - which is only exponential in $d$ when $n/d = c$. Hence, in $d$ dimensions, the number of cone constraints is not that many, and we can have reasonable upper bounds on the sharpness.

We first start with a simple proposition that upper bounds $C(\mathcal{K}_{\Tilde{D}_i}, z)$. The idea is similar to Chebyshev centers of a polyhedron \cite{boyd2004convex}. 
\begin{proposition}
\label{chebyshev_sim}
Take any unit vector $z$, and suppose $\mathcal{K} = \{u | (2D - I) Xu \geq 0\}$. If there exists vector $u$ that satisfies
$$\lVert u \rVert_2 \leq 1, \quad (2D - I)Xu \geq \epsilon \cdot |(2D - I)Xz|,$$
we know that
$$
C(\mathcal{K}, z) \leq 1 + \frac{1}{\epsilon}.
$$
\end{proposition}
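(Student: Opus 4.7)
The plan is to give a constructive proof: we use the witness vector $u$ from the hypothesis to explicitly build a near-optimal decomposition of $z$ into two cone elements. Set $A := (2D - I)X$ so that the cone is $\mathcal{K} = \{w : Aw \geq 0\}$ componentwise, and consider the candidate pair
$$
u' = \tfrac{1}{2} z + \tfrac{1}{2\epsilon} u, \qquad v' = -\tfrac{1}{2} z + \tfrac{1}{2\epsilon} u.
$$
By construction $u' - v' = z$, so the only remaining tasks are to verify $u', v' \in \mathcal{K}$ and to bound $\|u'\|_2 + \|v'\|_2$.

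For cone membership, the hypothesis $Au \geq \epsilon\,|Az|$ can be rewritten as $\tfrac{1}{2\epsilon} Au \geq \tfrac{1}{2}|Az|$, and since $|Az| \geq \pm Az$ componentwise, I immediately get
$$
A u' = \tfrac{1}{2} Az + \tfrac{1}{2\epsilon} Au \geq \tfrac{1}{2} Az + \tfrac{1}{2}|Az| \geq 0,
$$
and likewise $A v' \geq -\tfrac{1}{2} Az + \tfrac{1}{2}|Az| \geq 0$. For the norm bound, the triangle inequality together with $\|z\|_2 = 1$ and $\|u\|_2 \leq 1$ gives $\|u'\|_2, \|v'\|_2 \leq \tfrac{1}{2}(1 + 1/\epsilon)$, so $\|u'\|_2 + \|v'\|_2 \leq 1 + 1/\epsilon$. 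Since this pair is feasible in the defining minimization of $C(\mathcal{K}, z)$, the claimed bound follows.

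The geometric picture behind the construction — and the only real conceptual choice — is that $u$ plays the role of a deep interior direction of $\mathcal{K}$ (in the sense of a Chebyshev center), and adding a sufficiently large multiple of $u$ to the ``symmetric halves'' $\pm \tfrac{1}{2} z$ pushes both pieces inside the cone. The correct scaling $\tfrac{1}{2\epsilon}$ is dictated precisely by the margin $\epsilon$ from $Au \geq \epsilon |Az|$; once this scaling is chosen, no further obstacle arises, since cone membership and the norm bound fall out directly from the triangle inequality and the hypothesis. Consequently I do not expect any hard step — the proposition is essentially a clean packaging of the observation that having a $\|\cdot\|_2 \leq 1$ interior point with margin $\epsilon$ is enough to control the cone's sharpness in every direction.
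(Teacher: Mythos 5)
Your proposal is correct and matches the paper's proof: your pair $u' = \tfrac{1}{2}z + \tfrac{1}{2\epsilon}u$, $v' = -\tfrac{1}{2}z + \tfrac{1}{2\epsilon}u$ is exactly the paper's decomposition $v_1 = \tfrac{1}{2}(\tfrac{u}{\epsilon}+z)$, $v_2 = \tfrac{1}{2}(\tfrac{u}{\epsilon}-z)$, with the same componentwise cone-membership check and the same triangle-inequality norm bound. No gaps.
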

When (A1) is satisfied and for a randomly sampled cone $\mathcal{K}_{\Tilde{D}_i}$, with the rotational invariance of $X$, we can rotate the cone to contain $e_1$ and the distribution of other elements on $X$ will not change. Therefore, without loss of generality, we may assume that $\mathcal{K} = \{u \ |\  \Tilde{X}u \geq 0\},$ where the first column of $\Tilde{X}$ is the absolute value sampled from $\mathcal{N}(0,1)$ and the other columns are sampled from $\mathcal{N}(0,1)$. 

We wish to construct a vector $u$ that satisfies the conditions in \cref{chebyshev_sim}. Using a novel 
application of Gordon's comparison, we find such a vector for any unit vector $z$.
\begin{theorem}
\label{uniformexistance}
Let $b \in \mathbb{R}^{n}$ sampled from the folded normal distribution, and $X \in \mathbb{R}^{n \times d}$ be a matrix where each entries are sampled from a normal distribution. Consider the random variable
$$
F(X, b) = \max_{\lVert z \rVert_2 = 1} \min_{\substack{Xu \geq -Xz - kb\\Xu \geq Xz - kb\\k \geq 0}} \lVert u \rVert_2 + k,
$$
where $u, z \in \mathbb{R}^{d}$. Then, with probability at least $1 - 1/n^{10} - e^{-Cd}$ for some positive constant $C$, 
\begin{equation}
\label{superimportantconeconstraint}
F(X,b) \leq 200c\sqrt{c\log 2n}.
\end{equation}
\end{theorem}
For any given unit vector $z$, we can construct $u$ for the rotated cone $\mathcal{K} = \{u\ |\ \Tilde{X}u \geq 0\}$ by solving the minimization problem in \cref{uniformexistance}. Eventually, we can bound $\mathcal{C}(\mathcal{K},z)$ with a logarithmic factor.
\begin{corollary}
\label{conesharp}
Suppose $n$, $d$ are sufficiently large that \cref{superimportantconeconstraint} holds with probability at least $1 - \delta''$, for $b \in \mathbb{R}^{n}$ sampled from a folded normal distribution and $X \in \mathbb{R}^{n \times {d-1}}$ sampled from a normal distribution. Then, with probability at least $1 - \delta''$,
$$
C(\mathcal{K}_{\Tilde{D}_i}, z) \leq 2 + 200c\sqrt{c \log 2n},
$$
also holds for all unit vectors $z$.
\end{corollary}
A direct corollary is that we can approximate the convex reformulation with its unconstrained version, having $O(\sqrt{\log n})$ scale relative bound (\cref{c3}).
\subsection{Proof of the Main Results}
After the bound on the cone sharpness, the proof of the main theorems follows almost immediately. By considering the convex reformulation of the original problem \eqref{nonconvex_original}, and first upper bounding it with the unconstrained problem, then upper bounding it again with the unconstrained problem with random relaxation, we can find a relative optimality gap between the original problem and its convex relaxation. It is clear that we immediately get a tractable polynomial-time randomized algorithm by solving the randomly relaxed convex problem. At last, by identifying stationary points of the original problem as the global minimum of randomly subsampled convex problems, we obtain \cref{losslandscape}. See \cref{Proofmain} for a detailed proof.

\section{Conclusion}
\label{Conclu}
In this paper, we provided guarantees of approximating the equivalent convex program given in \cite{pilanci2020neural} with a much smaller random subprogram. With assumptions on $X$ and the dimension $d$, we proved that the optimal value of the subsampled convex program approximates the full convex program up to a logarithmic factor. Using the approximation results we discuss novel insights on training two-layer neural networks, by showing that under mild assumptions local gradient methods converge to stationary points with optimality guarantees, and propose a practical algorithm to train neural networks with guarantees. 

We hope to improve the work in two ways: First, removing the logarithmic factor of the approximation would be an important problem to tackle. Also, extending the theorems to different architectures, i.e. CNNs, transformers, and multi-layer networks would be meaningful.

\section*{Acknowledgements}
This work was supported in part by the National Science Foundation (NSF) under Grant DMS-2134248; in part by the NSF CAREER Award under Grant CCF-2236829; in part by the U.S. Army Research Office Early Career Award under Grant W911NF-21-1-0242; and in part by the Office of Naval Research under Grant N00014-24-1-2164.

\section*{Impact statement}
Our work advances the understanding training of neural networks, and we feel no societal consequences should be specifically highlighted within the paper.

\bibliography{Newton}
\bibliographystyle{icml2024}

\newpage
\appendix
\onecolumn
\section{Useful Lemmas}
\label{usefullemma}
\begin{lemma}
\label{matrixchernoff}
(Matrix Chernoff \cite{tropp2015introduction}, \cite{harveylecture} ) Let $X_1, X_2, ..., X_k$ be independent, random, symmetric real $n \times n$ matrix with $0 \preccurlyeq X_i \preccurlyeq RI$. Let $\mu_{min} I \preccurlyeq \sum_{i=1}^{k} \mathbb{E}[X_i] \preccurlyeq \mu_{max} I$. Then, for all $\delta \in [0,1]$, 
$$
\mathbb{P}(\lambda_{max}(\sum_{i=1}^{k} X_i) \geq (1+\delta)\mu_{max}) \leq ne^{-\delta^{2}\mu_{max}/3R} 
$$
and
$$
\mathbb{P}(\lambda_{min}(\sum_{i=1}^{k} X_i) \leq (1-\delta)\mu_{min}) \leq ne^{-\delta^{2}\mu_{min}/2R} 
$$
holds.
\end{lemma}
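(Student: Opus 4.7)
The plan is to follow the standard matrix Laplace transform method of Ahlswede--Winter and Tropp. I would begin by reducing the eigenvalue tail to a trace exponential via Markov: for $\theta>0$,
$$\mathbb{P}\!\left(\lambda_{\max}\!\left(\sum_{i=1}^k X_i\right)\geq t\right)=\mathbb{P}\!\left(\lambda_{\max}\!\left(e^{\theta\sum_i X_i}\right)\geq e^{\theta t}\right)\leq e^{-\theta t}\,\mathbb{E}\!\left[\operatorname{tr} e^{\theta\sum_i X_i}\right].$$
Because the $X_i$ need not commute, one cannot split the expectation into a product. The workaround is Lieb's concavity theorem, which implies the master inequality
$$\mathbb{E}\!\left[\operatorname{tr} e^{\sum_i \theta X_i}\right]\leq \operatorname{tr}\exp\!\left(\sum_{i=1}^k\log\mathbb{E}\!\left[e^{\theta X_i}\right]\right).$$
This is the conceptual engine of the proof; I would either cite Lieb directly or derive it by induction using the Tropp subadditivity lemma.

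Next I would bound each matrix cumulant generating function. Since $0\preccurlyeq X_i\preccurlyeq RI$, operator convexity of $x\mapsto e^{\theta x}$ on $[0,R]$ (applied via the chord inequality $e^{\theta x}\leq 1+\frac{e^{\theta R}-1}{R}x$) gives
$$\mathbb{E}\!\left[e^{\theta X_i}\right]\preccurlyeq I+\frac{e^{\theta R}-1}{R}\mathbb{E}[X_i],\qquad \log \mathbb{E}\!\left[e^{\theta X_i}\right]\preccurlyeq \frac{e^{\theta R}-1}{R}\mathbb{E}[X_i],$$
where the second step uses operator monotonicity of the logarithm together with $\log(I+A)\preccurlyeq A$. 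Summing and using $\sum_i\mathbb{E}[X_i]\preccurlyeq \mu_{\max}I$ yields
$$\operatorname{tr}\exp\!\left(\sum_{i=1}^k\log\mathbb{E}[e^{\theta X_i}]\right)\leq n\exp\!\left(\frac{e^{\theta R}-1}{R}\mu_{\max}\right).$$
Plugging back with $t=(1+\delta)\mu_{\max}$ and optimizing with the choice $\theta=\log(1+\delta)/R$ produces the classical Chernoff exponent
$$\mathbb{P}\!\left(\lambda_{\max}\!\left(\sum_i X_i\right)\geq (1+\delta)\mu_{\max}\right)\leq n\left(\frac{e^{\delta}}{(1+\delta)^{1+\delta}}\right)^{\!\mu_{\max}/R}\!.$$
The stated form then follows from the elementary inequality $(1+\delta)\log(1+\delta)-\delta\geq \delta^2/3$ valid for $\delta\in[0,1]$.

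For the lower tail I would run the mirror argument with $\theta<0$ and the chord bound on $[0,R]$ in the form $e^{\theta x}\leq 1+\frac{e^{\theta R}-1}{R}x$, which is still correct since the slope is negative. Repeating the Laplace transform argument with $t=(1-\delta)\mu_{\min}$ and choosing $\theta=\log(1-\delta)/R<0$ produces $n\bigl(e^{-\delta}/(1-\delta)^{1-\delta}\bigr)^{\mu_{\min}/R}$, which is then controlled by $(1-\delta)\log(1-\delta)+\delta\geq \delta^2/2$ on $[0,1]$, giving the stated $e^{-\delta^2\mu_{\min}/2R}$ bound.

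The main obstacle is the non-commutative subadditivity step: unlike the scalar Chernoff derivation, the product rule $\mathbb{E}[e^{\sum_i X_i}]=\prod_i\mathbb{E}[e^{X_i}]$ fails, and the entire proof hinges on Lieb's concavity theorem to recover a clean sum-of-logs bound inside the trace exponential. Once that is in hand, the remainder reduces to a scalar Chernoff optimization plus the two standard elementary inequalities for $(1\pm\delta)\log(1\pm\delta)$. Since the paper explicitly cites Tropp's monograph and Harvey's lecture notes, I would present the argument at the level of the master inequality and the operator chord bound, and invoke Lieb's theorem as a black box.
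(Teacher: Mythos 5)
The paper does not prove this lemma at all: it is imported verbatim from Tropp's monograph and Harvey's lecture notes, so there is no in-paper argument to compare against. Your sketch is the standard derivation from exactly those sources, and it is correct: Markov applied to the trace exponential, the master bound $\mathbb{E}\,\mathrm{tr}\,e^{\theta\sum_i X_i}\leq \mathrm{tr}\exp\bigl(\sum_i\log\mathbb{E}[e^{\theta X_i}]\bigr)$ via Lieb's concavity theorem, the chord bound $e^{\theta x}\leq 1+\frac{e^{\theta R}-1}{R}x$ on $[0,R]$ transferred to the matrix setting, the choices $\theta=\log(1\pm\delta)/R$, and the elementary inequalities $(1+\delta)\log(1+\delta)-\delta\geq\delta^2/3$ and $(1-\delta)\log(1-\delta)+\delta\geq\delta^2/2$ on $[0,1]$, which I have checked and which reproduce the two stated exponents. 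One small terminological point: the exponential is \emph{not} operator convex, and you do not need it to be; what you actually invoke (correctly) is the spectral transfer rule, namely that a scalar inequality valid on the spectrum of a single self-adjoint matrix passes to the matrix functional calculus, which is exactly what your chord inequality does. With that phrasing tightened, the argument is complete and matches the cited references, so presenting it at the level of the master inequality and invoking Lieb as a black box is entirely appropriate here.
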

\begin{lemma}
\label{Gordon}
(Gordon's comparison \cite{thrampoulidis2014gaussian}) Let $\Phi(X)$ be
$$
\Phi(X) = \min_{x \in S_x} \max_{y \in S_y} y^TXx + \psi(x,y),   
$$
where $X$ is a random matrix with i.i.d. Gaussian entries, and $\phi(g,h)$ be
$$
\phi(g,h) = \min_{x \in S_x} \max_{y \in S_y} \lVert x \rVert_2 g^Ty + \lVert y \rVert_2 h^Tx + \psi(x,y),
$$
where entries of $g,h$ are i.i.d. and sampled from $\mathcal{N}(0,1)$, and $S_x$ and $S_y$ are compact, $\psi$ is continuous. Then, we have
$$
\mathbb{P}(\Phi(X) \leq M) \leq 2 \mathbb{P}(\phi(g,h) \leq M).
$$
\end{lemma}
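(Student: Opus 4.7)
\textbf{Proof Proposal for Lemma \ref{Gordon}.}
The plan is to deduce the statement from Gordon's classical Gaussian comparison inequality for min-max of Gaussian processes, combined with a variance-equalization trick that produces the factor of $2$. The two Gaussian processes indexed by $(x,y) \in S_x \times S_y$ are
\begin{equation*}
X_{x,y} := y^T X x, \qquad Y_{x,y} := \lVert x\rVert_2\, g^T y + \lVert y\rVert_2\, h^T x,
\end{equation*}
so that $\Phi(X) = \min_x \max_y [X_{x,y} + \psi(x,y)]$ and $\phi(g,h) = \min_x \max_y [Y_{x,y} + \psi(x,y)]$. A direct computation gives $\mathrm{Var}(X_{x,y}) = \lVert x\rVert^2 \lVert y\rVert^2$ and $\mathrm{Var}(Y_{x,y}) = 2\lVert x\rVert^2 \lVert y\rVert^2$, so the variances disagree by a factor of $2$. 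To fix this, I would introduce an independent standard Gaussian $\xi \sim \mathcal N(0,1)$, independent of everything else, and work with the augmented process $\widetilde X_{x,y} := X_{x,y} + \lVert x\rVert_2 \lVert y\rVert_2\, \xi$, whose variance now matches that of $Y_{x,y}$.

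Next I would verify the covariance hypotheses required by Gordon's comparison inequality. Writing $a = (x^T x')/(\lVert x\rVert \lVert x'\rVert)$ and $b = (y^T y')/(\lVert y\rVert \lVert y'\rVert)$ (both in $[-1,1]$), one checks that for the same $x$ the two processes have identical increments, i.e.\ $\mathbb E[(\widetilde X_{x,y} - \widetilde X_{x,y'})^2] = \mathbb E[(Y_{x,y} - Y_{x,y'})^2]$, while across different rows $x \neq x'$ one computes
\begin{equation*}
\mathbb E[(Y_{x,y} - Y_{x',y'})^2] - \mathbb E[(\widetilde X_{x,y} - \widetilde X_{x',y'})^2] = 2\lVert x\rVert \lVert x'\rVert \lVert y\rVert \lVert y'\rVert (1-a)(1-b) \geq 0.
\end{equation*}
Hence $\widetilde X$ has equal within-row and strictly smaller between-row squared increments than $Y$. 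Gordon's comparison inequality then delivers, for any collection of thresholds $\{\lambda_{x,y}\}$,
\begin{equation*}
\mathbb P\bigl(\min_x \max_y [\widetilde X_{x,y} - \lambda_{x,y}] \geq 0\bigr) \geq \mathbb P\bigl(\min_x \max_y [Y_{x,y} - \lambda_{x,y}] \geq 0\bigr).
\end{equation*}
Choosing $\lambda_{x,y} = M - \psi(x,y)$ and taking complements yields $\mathbb P(\widetilde\Phi \leq M) \leq \mathbb P(\phi(g,h) \leq M)$, where $\widetilde\Phi := \min_x \max_y [y^T X x + \lVert x\rVert \lVert y\rVert \xi + \psi(x,y)]$.

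The final step is to strip the auxiliary $\xi$ and lose only a factor of $2$. I would condition on the event $\{\xi \leq 0\}$: since $\lVert x\rVert_2 \lVert y\rVert_2 \geq 0$, on this event the perturbation term is non-positive for every $(x,y)$, hence $\max_y[\,\cdot + \lVert x\rVert\lVert y\rVert\xi]$ is pointwise $\leq \max_y[\,\cdot\,]$, giving $\widetilde\Phi \leq \Phi$ whenever $\xi \leq 0$. Therefore $\{\Phi \leq M\} \cap \{\xi \leq 0\} \subseteq \{\widetilde\Phi \leq M\}$, and by independence of $X$ (and hence $\Phi$) from $\xi$,
\begin{equation*}
\mathbb P(\widetilde\Phi \leq M) \geq \mathbb P(\Phi \leq M,\, \xi \leq 0) = \tfrac{1}{2} \mathbb P(\Phi \leq M).
\end{equation*}
Chaining this with the Gordon bound gives $\tfrac{1}{2}\mathbb P(\Phi \leq M) \leq \mathbb P(\phi(g,h) \leq M)$, which is the desired inequality.

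The main obstacle is the verification of Gordon's covariance hypotheses, since the two processes at first glance have mismatched variances; identifying the right augmentation $\widetilde X_{x,y} = X_{x,y} + \lVert x\rVert\lVert y\rVert\xi$ that simultaneously equalizes variances, preserves within-row increments, and shrinks between-row increments is the crucial structural observation. A minor subsidiary issue is that Gordon's inequality is usually stated for finite index sets with strict inequalities; for continuous compact $S_x, S_y$ one either invokes the standard continuity/compactness argument (approximating by finite nets and passing to the limit using continuity of $\psi$) or cites the already-extended form of Gordon's theorem due to Thrampoulidis--Abbasi--Hassibi, which handles this discretization once and for all.
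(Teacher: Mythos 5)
Your proof is correct: the paper itself does not prove this lemma but imports it verbatim from the cited reference \cite{thrampoulidis2014gaussian}, and your argument is precisely the standard derivation given there — augment $y^TXx$ by $\lVert x\rVert\lVert y\rVert\xi$ to equalize variances, check that within-row increments match while between-row increments shrink by $2\lVert x\rVert\lVert x'\rVert\lVert y\rVert\lVert y'\rVert(1-a)(1-b)\ge 0$, apply Gordon's comparison, and recover the factor $2$ by restricting to $\{\xi\le 0\}$ and using independence. No gaps beyond the minor strict-versus-nonstrict inequality at the threshold $M$, which is handled by the usual $M+\epsilon$ limiting argument you allude to.
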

\begin{lemma}
\label{Weyl}
(Weyl's theorem on eigenvalues) Suppose $A, B \in \mathbb{R}^{n \times n}$ are symmetric. Then, the below estimates on the minimum eigenvalue of $A, B$ hold.
\begin{equation}
\label{Weyl1}
    \lambda_{min}(A) \geq \lambda_{min} (B) - \lVert A - B \rVert_{F},
\end{equation}
\begin{equation}
\label{Weyl2}
    \lambda_{min}(A+B) \geq \lambda_{min}(A) + \lambda_{min}(B).
\end{equation}
Here, $\rho(A)$ denotes the spectral radius of $A$, and $\lVert A \rVert_F$ is the Frobenius norm of $A$.
\end{lemma}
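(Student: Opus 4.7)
The plan is to prove the two inequalities in the stated order, using only the Courant--Fischer variational characterization and the fact that for symmetric matrices the operator norm coincides with the largest absolute eigenvalue and is dominated by the Frobenius norm.

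First I would establish the additive inequality \eqref{Weyl2}. Since $A$ and $B$ are symmetric, both $A$, $B$, and $A+B$ admit the Rayleigh-quotient representation of the minimum eigenvalue, $\lambda_{\min}(M)=\min_{\|x\|_2=1} x^{\top}Mx$. Applied to $A+B$ this gives
\begin{equation*}
\lambda_{\min}(A+B)=\min_{\|x\|_2=1}\bigl(x^{\top}Ax+x^{\top}Bx\bigr)\geq \min_{\|x\|_2=1}x^{\top}Ax+\min_{\|x\|_2=1}x^{\top}Bx=\lambda_{\min}(A)+\lambda_{\min}(B),
\end{equation*}
where the inequality is just the elementary fact that the minimum of a sum is at least the sum of the minima.

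Next I would derive \eqref{Weyl1} as a corollary. Write $A=B+(A-B)$; note $A-B$ is symmetric because $A$ and $B$ are. Applying \eqref{Weyl2} yields
\begin{equation*}
\lambda_{\min}(A)\geq \lambda_{\min}(B)+\lambda_{\min}(A-B).
\end{equation*}
It therefore suffices to show $\lambda_{\min}(A-B)\geq -\|A-B\|_{F}$. For any symmetric matrix $M$, all eigenvalues are real and $|\lambda_{\min}(M)|\leq \|M\|_{\mathrm{op}}$; since the operator norm of a symmetric matrix equals its largest singular value, and the Frobenius norm is the $\ell_2$ norm of the singular value vector, we have $\|M\|_{\mathrm{op}}\leq \|M\|_{F}$. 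Setting $M=A-B$ gives $\lambda_{\min}(A-B)\geq -\|A-B\|_{\mathrm{op}}\geq -\|A-B\|_F$, and combining yields \eqref{Weyl1}.

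There is no real obstacle here: both statements reduce to the Courant--Fischer min-max principle together with the elementary inequality $\|\cdot\|_{\mathrm{op}}\leq \|\cdot\|_F$. The only subtlety worth a sentence in the writeup is to remark that symmetry is used twice, first to justify the Rayleigh-quotient identity for $A+B$ and second to ensure that $A-B$ is itself symmetric so that the bound $|\lambda_{\min}(A-B)|\leq \|A-B\|_{\mathrm{op}}$ applies. A tighter version of \eqref{Weyl1} with the spectral norm in place of the Frobenius norm follows from the same argument, but the Frobenius bound as stated is what downstream applications in the paper need.
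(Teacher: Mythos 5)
Your argument is correct: the Rayleigh-quotient proof of \eqref{Weyl2} and the reduction of \eqref{Weyl1} to $\lambda_{\min}(A-B)\geq -\lVert A-B\rVert_{\mathrm{op}}\geq -\lVert A-B\rVert_F$ are both sound, and your remark about where symmetry is used is the right one. The paper itself states this lemma in its list of useful lemmas as a standard fact and gives no proof, so there is nothing to compare against; your write-up is the canonical argument and fills that gap correctly.
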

\begin{lemma}
\label{EK}
(Spectrum of kernel random matrices \cite{el2010spectrum}) Suppose we obtain $n$ i.i.d. vectors $x_i$ from $\mathcal{N}(0, I_d)$.  Let's consider the kernel matrix
$$
K_{i,j} = f(\frac{x_i \cdot x_j}{d}).
$$
We assume that: \\
(a) $n/d, d/n$ are bounded as $d \rightarrow \infty$.\\
(b) $f$ is a $C^1$ function in a neighborhood of 1, and a $C^3$ function in a neighborhood of 0.\\
Under the assumptions, the kernel matrix $M$ can (in probability) be approximated consistently in operator norm, when $d, n \rightarrow \infty$, by the matrix $K'$, where
$$
K' = (f(0) + \frac{f''(0)}{2d}) 11^{T} + \frac{f'(0)}{d}XX^T
+ (f(1)-f(0)-f'(0)) I_n.
$$
\end{lemma}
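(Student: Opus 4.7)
The plan is to Taylor-expand the scalar function $f$ around the deterministic limits of its arguments and then compare the resulting matrix expression to $K'$ entry-by-entry, controlling the leftover fluctuations in operator norm. Under (a), each $x_i\in\mathbb{R}^d$ is isotropic Gaussian, so $\|x_i\|_2^2/d$ concentrates at $1$ with $O(d^{-1/2})$ fluctuations, and $x_i\cdot x_j/d$ for $i\neq j$ concentrates at $0$ with $O(d^{-1/2})$ fluctuations, uniformly over pairs up to $\sqrt{\log n}$ factors via a union bound (affordable since $n\asymp d$). Assumption (b) then justifies Taylor expanding $f$ near $1$ on the diagonal and near $0$ off the diagonal with quantitatively controlled remainders.

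I would split $K$ into its diagonal and off-diagonal parts. The diagonal entries satisfy $K_{ii} = f(1) + f'(\eta_i)(\|x_i\|_2^2/d - 1)$ for some $\eta_i$ near $1$, giving $f(1)I_n$ up to a diagonal error of operator norm $O(d^{-1/2})$. For $i\neq j$ I would write
$$K_{ij} = f(0) + f'(0)\tfrac{x_i\cdot x_j}{d} + \tfrac{f''(0)}{2}\tfrac{(x_i\cdot x_j)^2}{d^2} + \tfrac{f'''(\xi_{ij})}{6}\tfrac{(x_i\cdot x_j)^3}{d^3}.$$
Assembled as matrices, the constant term produces $f(0)(11^T - I_n)$, the linear term produces $\tfrac{f'(0)}{d}(XX^T - D)$ with $D = \mathrm{diag}(\|x_i\|_2^2)$, and the quadratic term has entrywise mean $\tfrac{f''(0)}{2d}$, matching the $\tfrac{f''(0)}{2d}\,11^T$ contribution in $K'$. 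The stray diagonal corrections $-f(0)I_n$, $-\tfrac{f'(0)}{d}D$, $-\tfrac{f''(0)}{2d}I_n$ reabsorb into the identity term $(f(1)-f(0)-f'(0))I_n$ after using $D/d = I_n + O(d^{-1/2})$, which is where the precise coefficient of $I_n$ in $K'$ arises.

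The main obstacle is bounding, in operator norm, the centered quadratic fluctuation $Q$ defined by $Q_{ij} = \tfrac{f''(0)}{2}\bigl[(x_i\cdot x_j)^2/d^2 - 1/d\bigr]$ for $i\neq j$ (and $Q_{ii}=0$), together with the cubic remainder $R_{ij} = \tfrac{f'''(\xi_{ij})}{6}(x_i\cdot x_j)^3/d^3$. For $Q$, up to lower-order corrections, one has $Q_{ij} \propto \bigl\langle x_i x_i^T - I_d,\ x_j x_j^T - I_d\bigr\rangle_F / d^2$, i.e.\ $Q$ is (essentially) the Gram matrix of the i.i.d.\ centered rank-one matrices $x_i x_i^T - I_d$ in the Frobenius inner product. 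A Bai--Yin / matrix-concentration argument in the proportional regime $n\asymp d$ then yields $\|Q\|_{\mathrm{op}} = O(d^{-1/2})$ whp. For $R$, the uniform deviation $\max_{i\neq j}|x_i\cdot x_j|/d = O(d^{-1/2}\sqrt{\log n})$ whp gives $\|R\|_{\max} = O(d^{-3/2}(\log n)^{3/2})$, and with $n\asymp d$ the crude bound $\|R\|_{\mathrm{op}} \le n\|R\|_{\max}$ suffices for $\|R\|_{\mathrm{op}} = o(1)$. Combining the diagonal error, $Q$, $R$, and the absorbed corrections yields $\|K - K'\|_{\mathrm{op}} \to 0$ in probability, as claimed. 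The hardest technical step is the Gram-matrix argument controlling $\|Q\|_{\mathrm{op}}$; this is precisely where both the proportional-scaling assumption $n\asymp d$ and the $C^3$ smoothness of $f$ near $0$ are indispensable.
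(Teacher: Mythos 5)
The paper does not prove this lemma at all: it is stated in the ``Useful Lemmas'' appendix as an imported result, quoted directly from \cite{el2010spectrum}, and is used as a black box in the proof of \cref{mineig}. So there is no in-paper proof to compare against; what you have written is, in effect, a condensed reconstruction of El Karoui's own argument, and it follows his strategy faithfully: split diagonal from off-diagonal, Taylor-expand $f$ to first order near $1$ on the diagonal and to third order near $0$ off the diagonal, match the constant, linear, and mean-quadratic terms to $f(0)11^T$, $\tfrac{f'(0)}{d}XX^T$, and $\tfrac{f''(0)}{2d}11^T$, absorb the stray diagonal corrections into $(f(1)-f(0)-f'(0))I_n$, and kill the cubic remainder by the crude $n\,\lVert R\rVert_{\max}$ bound. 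All of that is correct and is exactly where the hypotheses (a) and (b) enter.

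The one step where your write-up is too glib is the one you yourself flag: the operator-norm control of the centered quadratic fluctuation $Q$. Note that the entries $Q_{ij}$ are typically of order $1/d$ (the standard deviation of $(x_i\cdot x_j)^2/d^2$ is $\asymp 1/d$, not $d^{-3/2}$), so $\lVert Q\rVert_F \asymp n/d = O(1)$ and neither the Frobenius bound nor a naive Bai--Yin bound on the Gram matrix closes the argument: the full Gram matrix of the vectors $\mathrm{vec}(x_ix_i^T-I_d)/d$ has operator norm $\Theta(1)$, since its diagonal entries $\lVert x_ix_i^T-I_d\rVert_F^2/d^2\approx 1$ already force this. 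What saves the argument --- and what you should state explicitly --- is that these $n$ vectors live in ambient dimension $d^2\gg n$, so their Gram matrix concentrates around a multiple of the identity with off-diagonal part of operator norm $O(\sqrt{n/d^2})=O(d^{-1/2})$; it is the off-diagonal part, not the whole Gram matrix, that must be shown small, and establishing this for the non-isotropic, merely sub-exponential vectors $\mathrm{vec}(x_ix_i^T-I_d)$ is precisely the technical core of \cite{el2010spectrum}. With that caveat spelled out, your sketch is a correct account of the known proof.
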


\newpage
\section{Proofs in Section 3.}
\label{proof3}
\begin{proposition}(Proposition 3.1. of the paper)
Suppose no two rows of $X$ are parallel. There exists hyperplane patterns $\Tilde{D}_1$, $\Tilde{D}_2$, ..., $\Tilde{D}_{2n}$ such that for all $y \in \mathbb{R}^{n}$, there exists $u_1, u_2, ... u_{2n} \in \mathbb{R}^{d}$ satisfying
\begin{equation*}
\sum_{i=1}^{2n} \Tilde{D}_iXu_i = y.
\end{equation*}
\end{proposition}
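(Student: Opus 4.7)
The plan is to construct the $2n$ hyperplane patterns in $n$ pairs, where the $i$-th pair $(\tilde D_{2i-1}, \tilde D_{2i})$ coincides in every diagonal entry except the $i$-th one. If this is achieved, then $\tilde D_{2i-1} - \tilde D_{2i} = \pm e_i e_i^T$, and consequently for any $u \in \mathbb R^d$
\[
(\tilde D_{2i-1} - \tilde D_{2i}) X u = \pm (x_i^T u)\, e_i,
\]
where $x_i$ denotes the $i$-th row of $X$. Since rows are pairwise non-parallel, in particular no row is zero (a zero row being parallel to every other), and so $\{x_i^T u : u \in \mathbb R^d\} = \mathbb R$. Thus the image of each pair contributes a full copy of $\mathrm{span}\{e_i\}$. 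Given $y = \sum_{i=1}^n y_i e_i$, choose $u_{2i-1}$ and $u_{2i}$ so that $(\tilde D_{2i-1} - \tilde D_{2i}) X(u_{2i-1} - u_{2i})$ hits $y_i e_i$ (for instance by setting one of them to zero and solving $x_i^T u = \pm y_i$ for the other), and the identity $\sum_{i=1}^{2n} \tilde D_i X u_i = y$ follows.

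The substantive step is therefore the construction of a pair of sign patterns that differ only at index $i$. For each $j$, let $H_j = \{u \in \mathbb R^d : x_j^T u = 0\}$; the non-parallel hypothesis means the $H_j$ are pairwise distinct hyperplanes, so for each fixed $i$ the intersection $H_i \cap H_j$ is a proper subspace of $H_i$ whenever $j \neq i$. A finite union of such proper subspaces cannot cover $H_i$, so there exists $u^{(i)} \in H_i \setminus \bigcup_{j \neq i} H_j$. Consider the perturbations $u^{(i)} \pm \varepsilon x_i$. By continuity, for all sufficiently small $\varepsilon > 0$ the sign of $x_j^T(u^{(i)} \pm \varepsilon x_i)$ agrees with the (nonzero) sign of $x_j^T u^{(i)}$ for every $j \neq i$, while $x_i^T(u^{(i)} \pm \varepsilon x_i) = \pm \varepsilon \lVert x_i \rVert_2^2$ has opposite signs in the two cases. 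The associated diagonal indicator matrices $\tilde D_{2i-1} = \mathrm{diag}(\mathbbm 1[X(u^{(i)} + \varepsilon x_i) \geq 0])$ and $\tilde D_{2i} = \mathrm{diag}(\mathbbm 1[X(u^{(i)} - \varepsilon x_i) \geq 0])$ then differ precisely at the $i$-th entry, as required.

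The main obstacle is this existence argument: one needs to make sure the non-parallelism hypothesis is actually being used in the right way to produce a point on $H_i$ avoiding all other $H_j$, and that the perturbation direction $x_i$ genuinely flips the $i$-th sign without disturbing the strict signs of the other components. Both points reduce to the dimension-counting observation that $H_i$ is not covered by finitely many lower-dimensional subspaces, plus the strict-sign choice away from $\bigcup_{j \neq i} H_j$ that guarantees robustness under small perturbations. Once these pairs are in hand, repeating the construction for $i = 1, \ldots, n$ yields the $2n$ patterns, and surjectivity onto $\mathbb R^n$ follows from the coordinate-wise decomposition described above.
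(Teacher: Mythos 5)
Your construction of the $2n$ patterns is the same as the paper's: pick a point $u^{(i)}$ on the hyperplane $H_i$ avoiding all other $H_j$ (the paper uses a measure-zero argument where you use the fact that a finite union of proper subspaces cannot cover $H_i$ --- equivalent), perturb by $\pm\varepsilon x_i$ to flip only the $i$-th sign, and note that the pair's difference acts as $u \mapsto (x_i^T u)\,e_i$. That part is correct and complete.

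The assembly step at the end, however, is wrong as written. The quantity that actually appears in the target identity is $\tilde D_{2i-1} X u_{2i-1} + \tilde D_{2i} X u_{2i}$, not $(\tilde D_{2i-1} - \tilde D_{2i}) X (u_{2i-1} - u_{2i})$, and these are not equal in general. In particular, your suggested choice of ``setting one of them to zero and solving $x_i^T u = \pm y_i$ for the other'' fails: with $u_{2i} = 0$ the pair contributes $\tilde D_{2i-1} X u_{2i-1}$ alone, which is supported on \emph{all} coordinates $j$ where the pattern $\tilde D_{2i-1}$ is active (with entries $x_j^T u_{2i-1}$), not just on $e_i$, and there is no mechanism for these spurious contributions to cancel across pairs. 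The fix is immediate and is what the paper implicitly does: take $u_{2i-1} = w_i$ and $u_{2i} = -w_i$, so that the pair contributes exactly $(\tilde D_{2i-1} - \tilde D_{2i}) X w_i = (x_i^T w_i)\, e_i$, and then solve $x_i^T w_i = y_i$ (possible since $x_i \neq 0$). With that one-line correction your argument matches the paper's.
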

\begin{proof}
Let's begin with a simple observation: for any $i \in [n]$, there exists two different hyperplane pattern $D_{1,i}$ and $D_{2,i}$ satisfying
\begin{equation*}
|D_{1,i} - D_{2,i}| = diag(e_i).
\end{equation*}
The construction is intuitive when thought geometrically, passing through only the $i$th plane. A rigorous construction is as below:
First, choose a point $c \in \mathbb{R}^{d}$ satisfying\\
\begin{equation*}
c \cdot X_i = 0 ,\quad  c \cdot X_j \neq 0 \quad for \ all \quad j \neq i,
\end{equation*}
where $X_1, X_2, ... ,X_n$ are rows of $X$. We can choose such point because the set $\{u|u \cdot X_i = 0\} \cap \{u|u \cdot X_j = 0\}$ has measure 0 in $\{u|u \cdot X_i = 0\}$, provided no two rows of $X$ are parellel. Now, we choose $\epsilon$ to be small such that
\begin{equation*}
    \epsilon = \min_{j \neq i, X_j \cdot X_i \neq 0}\frac{|c \cdot X_j|}{2|X_i \cdot X_j|}\ > 0.
\end{equation*}
and take two points $c + \epsilon X_i$, $c - \epsilon X_i$. Now, for the $i$th hyperplane, 
\begin{equation*}
    (c + \epsilon X_i) \cdot X_i > 0, \quad 
    (c - \epsilon X_i) \cdot X_i < 0,
\end{equation*}
and for all other hyperplanes $j \neq i$, 
\begin{equation*}
    \epsilon |X_i \cdot X_j| \leq \frac{1}{2} |c \cdot X_j|,
\end{equation*}
thus the sign of $(c + \epsilon X_i) \cdot X_j, c \cdot X_j, (c - \epsilon X_i) \cdot X_j$ are identical.
This means the hyperplane patterns of $c + \epsilon X_i$ and $c - \epsilon X_i$ differ only for the $i$th plane, meaning we have found two planes $D_{1,i}, D_{2,i}$ such that $|D_{1,i} - D_{2,i}| = diag(e_i).$
Now, define $\Tilde{D}_{2i-1} = D_{1,i}, \Tilde{D}_{2i} = D_{2,i}$ for $i \in [n]$, and choose $u_i$ vectors such that
\begin{equation*}
    (Xu_i)[i,1] = y[i,1]
\end{equation*}
where the notation $A[i,j]$ denotes the $i$th row, $j$th column of $A$. We can always choose such a vector, as we can choose arbitrary vector $v$ and scale $c_i$ until it matches $c_i(Xv)[i,1] = y[i,1]$. Hence, we have found such $\Tilde{D}$s that we proposed earlier.
\end{proof}
\begin{proposition}(Proposition 3.2. of the paper)
Suppose we sampled 
$$
\Tilde{P} = 2\kappa \log(\frac{n}{\delta})
$$
hyperplane arrangement patterns, provided that $\mathcal{M}$ is invertible. Then, with probability at least $1-\delta$, there exists $u_1, u_2, ..., u_{\Tilde{P}}$ satisfying
$$
\sum_{i=1}^{\Tilde{P}} \Tilde{D}_iXu_i = y.
$$
\end{proposition}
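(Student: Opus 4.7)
The plan is to reduce the existence of the $u_i$'s to the positive definiteness of the symmetric matrix
$$
M := \sum_{i=1}^{\tilde P}\tilde D_i XX^T\tilde D_i,
$$
and then invoke Matrix Chernoff (\cref{matrixchernoff}) to certify that positive definiteness with high probability.

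First I would use the ansatz $u_i = X^T\tilde D_i v$ for a common vector $v\in\mathbb{R}^n$. Under this choice,
$$
\sum_{i=1}^{\tilde P}\tilde D_iXu_i \;=\; \Bigl(\sum_{i=1}^{\tilde P}\tilde D_i XX^T\tilde D_i\Bigr)v \;=\; Mv.
$$
So if $M$ is invertible, taking $v = M^{-1}y$ produces $u_i$'s satisfying $\sum_i \tilde D_i X u_i = y$ for any target $y$. Hence the proposition reduces to proving $\lambda_{\min}(M)>0$ with probability at least $1-\delta$.

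Next I would set $X_i := \tilde D_i XX^T\tilde D_i$ and apply \cref{matrixchernoff}. Since $\tilde D_i$ is a diagonal $\{0,1\}$-matrix, $0\preccurlyeq X_i\preccurlyeq \lambda_{\max}(XX^T)I$, so we may take $R=\lambda_{\max}(XX^T)$. Because the $\tilde D_i$ are i.i.d.\ and $\mathbb{E}[X_i]=\mathcal{M}$, we have $\sum_{i=1}^{\tilde P}\mathbb{E}[X_i]=\tilde P\,\mathcal{M}$, and we may take $\mu_{\min}=\tilde P\lambda_{\min}(\mathcal{M})$ (using invertibility of $\mathcal{M}$). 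Applying the Chernoff tail with parameter $1$ gives
$$
\mathbb{P}\bigl(\lambda_{\min}(M)\leq 0\bigr) \;\leq\; n\exp\!\Bigl(-\tfrac{\tilde P\,\lambda_{\min}(\mathcal{M})}{2\lambda_{\max}(XX^T)}\Bigr) \;=\; n\exp\!\Bigl(-\tfrac{\tilde P}{2\kappa}\Bigr).
$$
Plugging in $\tilde P = 2\kappa\log(n/\delta)$ makes the right-hand side equal to $\delta$, and the proof is complete.

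The only delicate point is matching constants: one must check that the Chernoff parameter $1$ is admissible (it lies in $[0,1]$ as required by \cref{matrixchernoff}) and that the bound on $R$ is the correct spectral-norm bound on each summand, which follows directly from $\tilde D_i^2 = \tilde D_i\preccurlyeq I$. Apart from this bookkeeping, the argument is essentially a one-line application of matrix concentration after the ansatz $u_i = X^T\tilde D_i M^{-1}y$; I do not foresee any serious obstacle.
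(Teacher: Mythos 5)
Your proposal is correct and follows essentially the same route as the paper: both bound $\lambda_{\min}\bigl(\sum_i \tilde D_i XX^T\tilde D_i\bigr)$ away from zero via Matrix Chernoff with parameter $1$, $R=\lambda_{\max}(XX^T)$, and $\mu_{\min}=\tilde P\lambda_{\min}(\mathcal{M})$, yielding failure probability $n e^{-\tilde P/2\kappa}=\delta$. The only cosmetic difference is the last step: the paper deduces solvability from the rank of the augmented matrix $[\tilde D_1X\,|\,\cdots\,|\,\tilde D_{\tilde P}X]$, whereas you exhibit the explicit solution $u_i=X^T\tilde D_i M^{-1}y$, which is equivalent.
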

\begin{proof}
Think of $X_i = \Tilde{D}_iXX^{T}\Tilde{D}_i$ as a random independent series of symmetric real $n \times n$ matrices. Also, we know that for any vector $u \in \mathbb{R}^{n}$, 
$$
\sqrt{u^{T}X_iu} \leq \sqrt{\lambda_{max}(XX^{T})} \lVert \Tilde{D}_iu \rVert_2 \leq \sqrt{\lambda_{max}(XX^{T})} \lVert u \rVert_2,
$$
hence we know that for all $X_i$s, $X_i \preccurlyeq \lambda_{max}(XX^{T}) I$. Now, from \cref{matrixchernoff}, take $\delta = 1$. Then, we obtain 
$$
\mathbb{P}(\lambda_{min}(\sum_{i=1}^{\Tilde{P}} X_i) \leq 0) \leq ne^{-\mu_{min}/2\lambda_{max}(XX^{T})} 
$$
where $\mu_{min} = \Tilde{P} \lambda_{min}(\mathcal{M})$. Hence, when we plug in 
$$
\Tilde{P} = 2\kappa \log(\frac{n}{\delta}),
$$
we can see that $\mathbb{P}(\lambda_{min}(\sum_{i=1}^{\Tilde{P}} \Tilde{D}_iXX^{T}\Tilde{D}_i) \leq 0) \leq \delta$, and $\sum_{i=1}^{\Tilde{P}} \Tilde{D}_iXX^{T}\Tilde{D}_i$ is invertible with probability $1 - \delta$.

At last, consider the augmented matrix $\mathcal{X} = [\Tilde{D}_1X | \Tilde{D}_2X | ... | \Tilde{D}_{\Tilde{P}}X]$, which is a $n \times \Tilde{P}d$ matrix. As $\mathcal{X}\mathcal{X}^{T} = \sum_{i=1}^{\Tilde{P}} \Tilde{D}_iXX^{T}\Tilde{D}_i$ is invertible, we can see that $\mathcal{X}$ has $n$ nonzero singular values, hence has rank $n$. This means that the column space of $[\Tilde{D}_1X | \Tilde{D}_2X | ... | \Tilde{D}_{\Tilde{P}}X]$ also has rank $n$, and we can find vectors $u_1, u_2, ..., u_{\Tilde{P}}$ that satisfies 
$$
\sum_{i=1}^{\Tilde{P}} \Tilde{D}_iXu_i = y
$$
for any given $y \in \mathbb{R}^{n}$.
\end{proof}
The two propositions directly lead to the following corollary.
\begin{corollary}(Corollary 3.3. of the paper)
Suppose $p_0^{*}$ and $p_1^{*}$ are solutions to the unconstrained problem that uses all possible hyperplane arrangement patterns and its randomized relaxation, respectively, and the regularization $\beta = 0$. Also, suppose $\Tilde{P} \geq 2\kappa\log(\frac{n}{\delta})$. Then, $p_0^{*} = p_1^{*} = 0$ with probability at least $1 - \delta$.
\end{corollary}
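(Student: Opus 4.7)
The plan is to reduce this corollary directly to \cref{prop2}, with one short monotonicity observation to handle $p_0^{*}$. When $\beta = 0$, both objectives reduce to the squared fitting residual, which is nonnegative, so it suffices to show that the residual can be driven to zero in each problem.

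First I would invoke \cref{prop2}: under the hypothesis $\tilde P \geq 2\kappa \log(n/\delta)$ (which implicitly requires $\mathcal{M}$ to be invertible, so that $\kappa$ is finite), with probability at least $1-\delta$ there exist $u_1, \ldots, u_{\tilde P} \in \mathbb{R}^d$ with $\sum_{i=1}^{\tilde P} \tilde D_i X u_i = y$. Plugging this tuple into the random-relaxation objective with $\beta = 0$ yields value exactly $0$; since the objective is nonnegative, this forces $p_1^{*} = 0$, where in the appendix's naming convention $p_1^{*}$ denotes the unconstrained random-relaxation optimum.

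To handle the full-pattern optimum $p_0^{*}$, I would observe that each randomly sampled pattern $\tilde D_i = \mathrm{diag}(\mathbbm{1}[X g_i \geq 0])$ is an honest hyperplane arrangement pattern of $X$ --- namely the one realized by the vector $g_i$ --- and hence belongs to the universe $\{D_j\}_{j=1}^{P}$ of all such patterns. Consequently, the random relaxation is literally a sub-problem of the full-pattern problem: any feasible tuple for the relaxation extends to a feasible tuple for the full problem by assigning zero coefficients to the unused patterns, without changing the objective value. This gives the chain $0 \leq p_0^{*} \leq p_1^{*} = 0$, and the corollary follows.

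The main obstacle is essentially nil --- the heavy lifting was done by \cref{prop2}. The only detail worth flagging explicitly is the inclusion of the random patterns inside the full pattern list, which is immediate from the definitions but is precisely what lets one transfer the ``residual equals zero'' conclusion from the small problem to the large one. No additional concentration argument or combinatorial construction (e.g.\ of the kind used in \cref{prop1}) is required beyond what has already been established.
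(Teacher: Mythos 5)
Your proof is correct, and it handles the full-pattern optimum $p_0^{*}$ by a genuinely different (and arguably cleaner) route than the paper. The paper establishes $p_0^{*}=0$ deterministically via \cref{prop1}: the explicit construction of $2n$ patterns $D_{1,i},D_{2,i}$ differing only in the $i$-th diagonal entry, which spans all of $\mathbb{R}^{n}$ and requires the side hypothesis that no two rows of $X$ are parallel. You instead note that every sampled pattern $\tilde D_i=\mathrm{diag}(\mathbbm{1}[Xg_i\geq 0])$ already lies in the universe $\{D_j\}_{j=1}^{P}$, so the relaxation is a sub-problem of the full problem and $p_0^{*}\leq p_1^{*}$; combined with \cref{prop2} and nonnegativity this gives $0\leq p_0^{*}\leq p_1^{*}=0$ on the same probability-$(1-\delta)$ event. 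What the paper's route buys is that $p_0^{*}=0$ holds surely (given non-parallel rows), independent of the sampling; what your route buys is economy --- you need no combinatorial construction and no non-parallelism assumption, only the invertibility of $\mathcal{M}$ already implicit in $\kappa$ --- at the harmless cost of making the $p_0^{*}=0$ conclusion probabilistic, which is all the corollary claims. Both halves of your argument are sound, including the zero-padding step (summing coefficients over repeated sampled patterns, if any, preserves the residual when $\beta=0$).
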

\begin{proof}
From \cref{prop1}, we can find $v_1, v_2, ..., v_{2n}$ that satisfies 
$$
\sum_{i=1}^{2n} D'_iXv_i = y
$$
for specific $D'_i$s. Hence, for the unconstrained problem with all possible hyperplane arrangements, we can choose $u_i$s to be $u_i = v_i$ if $D_i = D'_i$, $u_i = 0$ otherwise to perfectly fit a given vector $y \in \mathbb{R}^{n}$. Also, from \cref{prop2}, we can fit a given vector $y$ with probabillity at least $1 - \delta$ when we sample hyperplane arrangement patterns more than $2\kappa \log(\frac{n}{\delta})$ times. Hence, with probability at least $1 - \delta$, we know that $p_1^{*} = 0$. 
\end{proof}
\begin{theorem}(Theorem 3.4. of the paper)
Suppose
\begin{equation}
\label{l2reg}
p^{*} = \min_{u_i \in \mathbb{R}^{d}}\frac{1}{2} \lVert \sum_{i=1}^{\Tilde{P}} \Tilde{D}_iXu_i - y\rVert_2^2 + \frac{\beta}{2} \sum_{i=1}^{\Tilde{P}} \lVert u_i \rVert_2^{2}.
\end{equation}
Then, with probability at least $1-\delta$ there exists scalars $C_1, C_2 > 0$ that satisfies
$$
\frac{C_1}{\Tilde{P}} \leq p^{*} \leq \frac{C_2}{\Tilde{P}},
$$
provided that $\Tilde{P} \geq 12\kappa \log(\frac{2n}{\delta})$ and $\mathcal{M}$ is invertible.
\end{theorem}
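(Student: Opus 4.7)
The plan is to solve problem \eqref{l2reg} in closed form, then reduce the bound on $p^{*}$ to a two-sided control on the spectrum of the random matrix $\mathcal{M}_{\tilde P} = \frac{1}{\tilde P}\sum_{i=1}^{\tilde P}\tilde D_i XX^T \tilde D_i$, which follows from \cref{matrixchernoff}.

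First, I stack the $\tilde P$ copies of $\tilde D_i X$ into the augmented matrix $\mathcal{X}=[\tilde D_1 X\mid \tilde D_2 X\mid\cdots\mid \tilde D_{\tilde P} X]\in\mathbb{R}^{n\times \tilde P d}$ and the variables into $u=[u_1^T,\dots,u_{\tilde P}^T]^T$. Then \eqref{l2reg} becomes ordinary ridge regression
$$
p^{*}=\min_{u}\tfrac12\|\mathcal{X}u-y\|_2^2+\tfrac{\beta}{2}\|u\|_2^2,
$$
whose unique minimizer is $u^{*}=(\mathcal{X}^T\mathcal{X}+\beta I)^{-1}\mathcal{X}^T y$. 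Plugging this back and simplifying via the identity $\mathcal{X}(\mathcal{X}^T\mathcal{X}+\beta I)^{-1}\mathcal{X}^T=I-\beta(\mathcal{X}\mathcal{X}^T+\beta I)^{-1}$ (which follows from the SVD of $\mathcal{X}$) yields the clean resolvent form
$$
p^{*}=\tfrac{\beta}{2}\, y^{T}(\mathcal{X}\mathcal{X}^T+\beta I)^{-1}y.
$$
The key observation is that $\mathcal{X}\mathcal{X}^T=\sum_{i=1}^{\tilde P}\tilde D_i XX^T\tilde D_i=\tilde P\,\mathcal{M}_{\tilde P}$, so the problem reduces entirely to controlling the extreme eigenvalues of $\mathcal{M}_{\tilde P}$.

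Next, I apply \cref{matrixchernoff} to the i.i.d.\ symmetric matrices $X_i=\tilde D_i XX^T\tilde D_i$. These satisfy $0\preceq X_i\preceq \lambda_{\max}(XX^T)I$, and $\sum_i \mathbb{E}[X_i]=\tilde P\mathcal{M}$, so $\mu_{\min}=\tilde P\lambda_{\min}(\mathcal{M})$ and $\mu_{\max}=\tilde P\lambda_{\max}(\mathcal{M})$. Taking $\delta=\tfrac12$ in the matrix Chernoff bounds and using the hypothesis $\tilde P\geq 12\kappa\log(2n/\delta)$ (together with $\lambda_{\max}(\mathcal{M})\leq\lambda_{\max}(XX^T)$ so that $\lambda_{\max}(XX^T)/\lambda_{\max}(\mathcal{M})\leq\kappa$), a union bound gives with probability at least $1-\delta$
$$
\tfrac12\lambda_{\min}(\mathcal{M})\,I\;\preceq\;\mathcal{M}_{\tilde P}\;\preceq\;\tfrac32\lambda_{\max}(\mathcal{M})\,I.
$$

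Finally, substituting these bounds into the closed-form expression yields the two-sided inequality
$$
\frac{\beta\|y\|_2^2}{3\tilde P\lambda_{\max}(\mathcal{M})+2\beta}\;\leq\;p^{*}\;\leq\;\frac{\beta\|y\|_2^2}{\tilde P\lambda_{\min}(\mathcal{M})+2\beta},
$$
from which I can read off $C_2=\beta\|y\|_2^2/\lambda_{\min}(\mathcal{M})$ for the upper bound immediately, and (since $\tilde P\lambda_{\max}(\mathcal{M})$ will dominate for any $\tilde P$ satisfying the hypothesis) a constant $C_1$ of the order $\beta\|y\|_2^2/(3\lambda_{\max}(\mathcal{M}))$ for the lower bound, giving $C_1/\tilde P\leq p^{*}\leq C_2/\tilde P$. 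The main obstacle I anticipate is bookkeeping: getting the constant $12$ in the sample complexity to come out correctly requires checking both tails of \cref{matrixchernoff} simultaneously (the upper-tail bound needs $\tilde P\geq 12\kappa\log(2n/\delta)$ while the lower-tail needs only $\tilde P\geq 8\kappa\log(2n/\delta)$), and handling the $+2\beta$ term in the lower bound cleanly so that the scaling $1/\tilde P$ is genuine rather than asymptotic; both are routine but need attention.
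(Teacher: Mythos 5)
Your proposal is correct and follows essentially the same route as the paper: both reduce the problem to the closed form $p^{*}=\tfrac{\beta}{2}y^{T}(\beta I+\Tilde{P}\mathcal{M}_{\Tilde{P}})^{-1}y$ (the paper via Lagrangian duality, you via the primal ridge solution and the push-through identity, which are equivalent computations) and then apply matrix Chernoff to sandwich $\mathcal{M}_{\Tilde{P}}$ between $\tfrac12\lambda_{\min}(\mathcal{M})I$ and $\tfrac32\lambda_{\max}(\mathcal{M})I$. Your explicit accounting of the two tails (the factor $12$ coming from the upper tail and $8$ sufficing for the lower) and of the $+2\beta$ term in the lower bound is, if anything, more careful than the paper, which handles the latter only by assuming $\Tilde{P}$ exceeds $\beta$.
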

\begin{proof}
Solving the $L_2$ regularized problem (\ref{l2reg}) is equivalent to solving
\begin{equation}
\label{constrained}
\min_{u_i \in \mathbb{R}^{d}, w} \frac{1}{2} \lVert w-y \rVert_2^2 + \frac{\beta}{2}\sum_{i=1}^{\Tilde{P}} \lVert u_i \rVert_2^2.
\end{equation}
subject to
$$
w = \sum_{i=1}^{\Tilde{P}} D_iXu_i.
$$
The lagrangian of problem (\ref{constrained}) becomes
$$
L(u, w, \lambda) = \frac{1}{2} \lVert w-y \rVert_2^2 + \lambda^{T}(w - \sum_{i=1}^{\Tilde{P}} D_iXu_i) + \frac{\beta}{2} \sum_{i=1}^{\Tilde{P}} \lVert u_i \rVert_2^2.
$$
As the constraint is linear equality and the objective is convex with respect to the arguments, strong duality holds and 
$$
p^{*} = \min_{u_i \in \mathbb{R}^{d}, w \in \mathbb{R}^{n}} \max_{\lambda \in \mathbb{R}^{n}} L(u_i, w, \lambda) = \max_{\lambda \in \mathbb{R}^{n}}\min_{u_i \in \mathbb{R}^{d}, w \in \mathbb{R}^{n}} L(u_i, w, \lambda).
$$
For a given $\lambda$, we can optimize for $u_i$ and $w$ to minimize $L(u_i, w, \lambda)$, where we get $w = y - \lambda$ and $u_i = \frac{1}{\beta} X^{T}\Tilde{D}_i\lambda$. Substituting leads to 
$$
p^{*} = \max_{\lambda \in \mathbb{R}^{n}} -\frac{1}{2} \lVert \lambda \rVert_2^2 + \lambda^{T}y - \frac{1}{2\beta} \lambda^{T} \big(\sum_{i=1}^{\Tilde{P}} \Tilde{D}_iXX^{T}\Tilde{D}_i\big) \lambda.
$$
Now, let's write $\frac{1}{\Tilde{P}} \sum_{i=1}^{\Tilde{P}} \Tilde{D}_iXX^{T}\Tilde{D}_i = \mathcal{M}_{\Tilde{P}}$. Then, the dual problem can be simplified as
$$
p^{*} = \max_{\lambda \in \mathbb{R}^{n}} -\frac{1}{2\beta}\lambda^{T}(\beta I+ \Tilde{P}\mathcal{M}_{\Tilde{P}})\lambda + \lambda^{T}y
$$
and the optimum $p^{*}$ is given as
$$
p^{*} = \frac{\beta}{2} y^{T}(\beta I+ \Tilde{P}\mathcal{M}_{\Tilde{P}})^{-1} y.
$$
When $\Tilde{P} \geq 12\kappa \log(\frac{2n}{\delta})$, by \cref{matrixchernoff} we can see that
$$
\mathbb{P}\bigg(\lambda_{min}(\mathcal{M}_{\Tilde{P}}) \geq \frac{\lambda_{min}(\mathcal{M})}{2}\bigg) \geq 1 - \frac{\delta}{2}
$$
and
$$
\mathbb{P}\bigg(\lambda_{max}(\mathcal{M}_{\Tilde{P}}) \leq \frac{3\lambda_{max}(\mathcal{M})}{2}\bigg) \geq 1 - \frac{\delta}{2}.
$$
Hence, with probability at least $1 - \delta$, we can see that
$$
(\beta + \frac{\Tilde{P}}{2}\lambda_{\min}(\mathcal{M})) I \preccurlyeq \Tilde{P}\mathcal{M}_{\Tilde{P}}+\beta I \preccurlyeq (\beta + \frac{3\Tilde{P}}{2}\lambda_{\max}(\mathcal{M})) I
$$
and we can find $\mathcal{R}_1, \mathcal{R}_2 > 0$ that satisfies
$$
\mathcal{R}_1\Tilde{P} I \preccurlyeq \Tilde{P}\mathcal{M}_{\Tilde{P}}+\beta I \preccurlyeq \mathcal{R}_2\Tilde{P} I.
$$
An example is $\mathcal{R}_1 = \frac{\lambda_{\min}(\mathcal{M})}{2}$, $\mathcal{R}_2 =  \frac{3\lambda_{\min}(\mathcal{M})}{2}+1$, provided that $\Tilde{P}$ is larger than $\beta$. This means that 
$$
\frac{\beta\lVert y \rVert_2^2}{2\mathcal{R}_2 \Tilde{P}} \leq p^{*} \leq \frac{\beta\lVert y \rVert_2^2}{2\mathcal{R}_1 \Tilde{P}},
$$
and take $C_1 = \frac{\beta\lVert y \rVert_2^2}{2\mathcal{R}_2}, C_2 = \frac{\beta\lVert y \rVert_2^2}{2\mathcal{R}_1}$ to finish the proof.
\end{proof}
\begin{proposition}(Proposition 3.5. of the paper)
Suppose
$$
p_{1}^{*} = \min_{u_i \in \mathbb{R}^{d}} \frac{1}{2}\lVert \sum_{i=1}^{\Tilde{P}} \Tilde{D}_iXu_i - y \rVert_2^2 + \beta \sum_{i=1}^{\Tilde{P}}\lVert u_i \rVert_2,
$$
and $\Tilde{P} \geq 8\kappa\log(\frac{n}{\delta})$. Then with probability at least $1 - \delta$, 
$$
p_1^{*} \leq \sqrt{2}\beta \frac{\lVert y \rVert_2}{\sqrt{\lambda_{min}(\mathcal{M})}},
$$
provided that $\mathcal{M}$ is invertible.
\end{proposition}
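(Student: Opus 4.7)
The plan is to convert the group $\ell_1$ penalty into an $\ell_2$ penalty via a variational identity, then reduce to the closed-form analysis already carried out in the proof of \cref{approximation_l2reg}, and finally apply matrix Chernoff to replace $\mathcal{M}_{\Tilde{P}}$ by $\mathcal{M}$. The starting observation is the AM--GM identity
\[
\beta \lVert u \rVert_2 = \min_{t > 0} \frac{\beta}{2}\Bigl( t \lVert u \rVert_2^2 + \frac{1}{t}\Bigr),
\]
which, applied coordinatewise in $i$, lets us write $p_1^{*}$ as the value of
\[
\min_{t_1,\dots,t_{\Tilde{P}} > 0}\ \min_{u_i \in \mathbb{R}^d}\ \frac{1}{2}\Bigl\lVert \sum_{i=1}^{\Tilde{P}} \Tilde{D}_i X u_i - y \Bigr\rVert_2^2 + \frac{\beta}{2}\sum_{i=1}^{\Tilde{P}}\Bigl(t_i \lVert u_i \rVert_2^2 + \frac{1}{t_i}\Bigr).
\]
Any feasible choice of $t_i$ yields an upper bound on $p_1^{*}$, so we choose the simple ansatz $t_i \equiv t$ for a scalar $t > 0$ to be optimized at the end.

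For fixed $t$, the inner minimization is exactly the $\ell_2$-regularized problem treated in \cref{approximation_l2reg}. Following that proof verbatim (strong duality, eliminate $u_i$ and $w$ in closed form), the inner value equals
\[
\frac{\beta t}{2}\, y^{T}\bigl(\beta t\, I + \Tilde{P}\, \mathcal{M}_{\Tilde{P}}\bigr)^{-1} y + \frac{\Tilde{P}\beta}{2t},
\]
where $\mathcal{M}_{\Tilde{P}} = \frac{1}{\Tilde{P}} \sum_{i=1}^{\Tilde{P}} \Tilde{D}_i X X^{T} \Tilde{D}_i$. This step is essentially routine and uses only the already-established closed form.

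Next, I invoke \cref{matrixchernoff} with the $n \times n$ summands $\Tilde{D}_i X X^{T} \Tilde{D}_i$, which satisfy $0 \preccurlyeq \Tilde{D}_i X X^{T} \Tilde{D}_i \preccurlyeq \lambda_{\max}(XX^{T}) I$ and have mean $\mathcal{M}$. Taking the deviation parameter $\delta = 1/2$ in the lower tail and using $\Tilde{P} \geq 8\kappa \log(n/\delta)$, a direct computation shows
\[
\lambda_{\min}\bigl(\Tilde{P}\,\mathcal{M}_{\Tilde{P}}\bigr) \geq \tfrac{\Tilde{P}}{2}\lambda_{\min}(\mathcal{M})
\]
with probability at least $1-\delta$. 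On this event, the first term in the displayed upper bound is at most $\frac{\beta t \lVert y \rVert_2^2}{\Tilde{P}\,\lambda_{\min}(\mathcal{M})}$ (dropping the $\beta t\, I$ in the inverse only improves things), so
\[
p_1^{*} \leq \frac{\beta t \lVert y \rVert_2^2}{\Tilde{P}\,\lambda_{\min}(\mathcal{M})} + \frac{\Tilde{P}\beta}{2t}.
\]

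Finally, I minimize the right-hand side over $t > 0$. The two terms are balanced at $t_{\star} = \frac{\Tilde{P}}{\lVert y \rVert_2}\sqrt{\lambda_{\min}(\mathcal{M})/2}$, which yields exactly the target bound $\sqrt{2}\,\beta\, \lVert y \rVert_2/\sqrt{\lambda_{\min}(\mathcal{M})}$, independent of $\Tilde{P}$. The only nontrivial step here is choosing the constant ansatz $t_i \equiv t$: a priori one might worry that letting $t_i$ vary across cones would be essential, but the symmetry among the randomly drawn $\Tilde{D}_i$ (each contributing the same mean $\mathcal{M}$ in expectation) together with the matrix Chernoff concentration makes a uniform choice tight up to constants. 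The remaining work is bookkeeping: check that the probability accounting absorbs only a single $\delta$ for the lower-tail matrix Chernoff event, and that $\mathcal{M}$ being invertible is the only place where the assumption on $\mathcal{M}$ is used (to make $t_{\star}$ finite and the bound meaningful).
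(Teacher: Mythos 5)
Your proposal is correct and follows essentially the same route as the paper's own proof: the AM--GM variational identity to pass to a surrogate $\ell_2$-regularized problem, a uniform ansatz for the surrogate weights (your scalar $t$ is just a reparametrization of the paper's $\kappa_i = \Tilde{P}/\kappa$), the closed-form dual value, matrix Chernoff to lower-bound $\lambda_{\min}(\mathcal{M}_{\Tilde{P}})$ by $\lambda_{\min}(\mathcal{M})/2$, and a final scalar optimization yielding $\sqrt{2}\beta\lVert y\rVert_2/\sqrt{\lambda_{\min}(\mathcal{M})}$. The constants and probability accounting match the paper's as well.
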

\begin{proof}
We know that
\begin{align*}
p_1^{*} &= \min_{u_i \in \mathbb{R}^{d}} \frac{1}{2}\lVert \sum_{i=1}^{\Tilde{P}} \Tilde{D}_iXu_i - y \rVert_2 ^2 + \beta \sum_{i=1}^{\Tilde{P}}\lVert u_i \rVert_2\\
&=\min_{\kappa_i \in \mathbb{R}, u_i \in \mathbb{R}^{d}} \frac{1}{2}\lVert \sum_{i=1}^{\Tilde{P}} \Tilde{D}_iXu_i - y \rVert_2 ^2 + \frac{\beta}{2} \sum_{i=1}^{\Tilde{P}} (\kappa_i \lVert u_i \rVert_2^2 + \frac{1}{\kappa_i})\\
&= \min_{\kappa_i \in \mathbb{R}} \min_{u_i \in \mathbb{R}^{d}} \frac{1}{2}\lVert \sum_{i=1}^{\Tilde{P}} \Tilde{D}_iXu_i - y \rVert_2 ^2 + \frac{\beta}{2} \sum_{i=1}^{\Tilde{P}} (\kappa_i \lVert u_i \rVert_2^2 + \frac{1}{\kappa_i}).
\end{align*}
With the same idea from the proof of \cref{l2reg}, we can see that when $\kappa_i$s are given, the dual problem becomes
$$
\max_{\lambda \in \mathbb{R}^{n}} -\frac{1}{2\beta} \lambda^{T} (\beta I + \sum_{i=1}^{\Tilde{P}} \frac{1}{\kappa_i}\Tilde{D}_iXX^{T}\Tilde{D}_i) \lambda + \lambda y,
$$
and the inner minimization problem has the minimum
$$
\min_{u_i \in \mathbb{R}^{d}} \frac{1}{2}\lVert \sum_{i=1}^{\Tilde{P}} \Tilde{D}_iXu_i - y \rVert_2 ^2 + \frac{\beta}{2} \sum_{i=1}^{\Tilde{P}} (\kappa_i \lVert u_i \rVert_2^2 + \frac{1}{\kappa_i}) = \frac{\beta}{2} \sum_{i=1}^{\Tilde{P}} \frac{1}{\kappa_i} + \frac{\beta}{2} y^{T} (\beta I + \sum_{i=1}^{\Tilde{P}} \frac{1}{\kappa_i}\Tilde{D}_iXX^{T}\Tilde{D}_i)^{-1} y.
$$
Hence, we can see that
\begin{align*}
p_1^{*} &= \min_{\kappa_i \in \mathbb{R}} \frac{\beta}{2} \sum_{i=1}^{\Tilde{P}} \frac{1}{\kappa_i} + \frac{\beta}{2} y^{T} (\beta I + \sum_{i=1}^{\Tilde{P}} \frac{1}{\kappa_i}\Tilde{D}_iXX^{T}\Tilde{D}_i)^{-1} y\\
&\leq \frac{\beta}{2} \min_{\kappa \in \mathbb{R}} y^{T}(\beta I + \frac{\kappa}{\Tilde{P}} \sum_{i=1}^{\Tilde{P}} \Tilde{D}_iXX^{T}\Tilde{D}_i)^{-1}y + \kappa\\
&\leq \frac{\beta}{2} \min_{\kappa \in \mathbb{R}} \frac{y^{T}( \frac{1}{\Tilde{P}} \sum_{i=1}^{\Tilde{P}} \Tilde{D}_iXX^{T}\Tilde{D}_i)^{-1}y}{\kappa} + \kappa\\
&\leq \beta \sqrt{y^{T}( \frac{1}{\Tilde{P}} \sum_{i=1}^{\Tilde{P}} D_iXX^{T}D_i)^{-1}y}\\
&\leq \sqrt{2}\beta \frac{\lVert y \rVert_2}{\lambda_{\min}(\mathcal{M})}
\end{align*}

Where the last inequality follows from matrix Chernoff that with probability at least $1 - \delta$, the minimum eigenvalue 
$$
\lambda_{\min}(\frac{1}{\Tilde{P}} \sum_{i=1}^{\Tilde{P}} \Tilde{D}_iXX^{T}\Tilde{D}_i) \geq \frac{\lambda_{\min}(\mathcal{M})}{2}.
$$
\end{proof}
\begin{proposition}(Proposition 3.6. of the paper)
Suppose $$p_{0}^{*} = \min_{u_i \in \mathbb{R}^{d}} \frac{1}{2}\lVert \sum_{i=1}^{P} D_iXu_i - y \rVert_2^2 + \beta \sum_{i=1}^{P}\lVert u_i \rVert_2.$$ Here $D_i$s are all possible hyperplane arrangement patterns. Also, write $\mathcal{M}_i = D_iXX^{T}D_i$. Then, 
$$
G\beta \frac{\lVert y \rVert_2}{\sqrt{\lambda_{\max}(XX^{T})}} \leq p_0^{*},
$$
where 
$$
G = 1 - \frac{\beta}{2\max_{i \in [P]}\sqrt{y^{T}\mathcal{M}_{i}y}}
$$
\end{proposition}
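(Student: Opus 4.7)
The approach is weak duality: produce an explicit feasible point for the dual of the Full C-GReLU problem whose dual objective already matches the claimed bound. The passage preceding the proposition explicitly advertises this strategy (``we choose a scalar multiple of $y$''), so the problem decomposes into (i) writing down the dual cleanly and (ii) tuning the scalar.

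First I would introduce the auxiliary variable $w=\sum_{i=1}^{P} D_iXu_i$ and form the Lagrangian
$$L(\{u_i\},w,\lambda)=\tfrac12\|w-y\|_2^2+\beta\sum_{i=1}^{P}\|u_i\|_2+\lambda^{T}\Bigl(w-\sum_{i=1}^{P}D_iXu_i\Bigr).$$
Since the primal is convex and the coupling constraint is linear, Slater's condition is trivial and strong duality holds, so $p_0^*$ equals the dual optimum. Minimising out $w$ gives $w^\star=y-\lambda$, and minimising out each $u_i$ is finite only when $\|X^{T}D_i\lambda\|_2\le\beta$, in which case the minimum is $0$. Thus
$$p_0^{*}=\max_{\lambda\in\mathbb{R}^{n}}\Bigl\{-\tfrac12\|\lambda\|_2^2+y^{T}\lambda\Bigr\}\quad\text{subject to }\|X^{T}D_i\lambda\|_2\le\beta\ \text{for all }i\in[P].$$

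Next I would plug in the ansatz $\lambda=ty$ for a scalar $t\ge 0$. The dual objective becomes $t\|y\|_2^{2}-\tfrac{t^{2}}{2}\|y\|_2^{2}$, a simple concave parabola in $t$. Feasibility requires $t\,\|X^{T}D_i y\|_2\le\beta$ for every $i$; I would not use the tight bound $\max_i\|X^{T}D_iy\|_2=\max_i\sqrt{y^{T}\mathcal{M}_i y}$ at the feasibility stage (since that does not produce the target denominator $\sqrt{\lambda_{\max}(XX^{T})}$), but instead the looser operator-norm bound $\|X^{T}D_i y\|_2\le\|X\|_{\mathrm{op}}\|y\|_2=\sqrt{\lambda_{\max}(XX^{T})}\|y\|_2$, and set
$$t=\frac{\beta}{\sqrt{\lambda_{\max}(XX^{T})}\,\|y\|_2}.$$
Substituting yields the lower bound
$$p_0^{*}\ge \frac{\beta\|y\|_2}{\sqrt{\lambda_{\max}(XX^{T})}}\Bigl(1-\frac{\beta}{2\sqrt{\lambda_{\max}(XX^{T})}\,\|y\|_2}\Bigr).$$

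Finally I would upgrade the parenthesised factor into the stated $G$. Because $\|X^{T}D_iy\|_2=\sqrt{y^{T}\mathcal{M}_i y}\le\sqrt{\lambda_{\max}(XX^{T})}\|y\|_2$, one has $\max_i\sqrt{y^{T}\mathcal{M}_iy}\le\sqrt{\lambda_{\max}(XX^{T})}\|y\|_2$, and therefore
$$1-\frac{\beta}{2\sqrt{\lambda_{\max}(XX^{T})}\|y\|_2}\ \ge\ 1-\frac{\beta}{2\max_i\sqrt{y^{T}\mathcal{M}_iy}}=G,$$
which is the desired inequality. The only point deserving care is the dualisation step (verifying that the support-function calculation for the group-$\ell_1$ penalty yields exactly the norm constraint $\|X^{T}D_i\lambda\|_2\le\beta$, and that strong duality holds so we may write $\ge$ from weak duality alone); the rest is one-variable optimisation and a single operator-norm estimate, so I do not anticipate a substantive obstacle.
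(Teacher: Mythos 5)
Your proposal is correct and follows essentially the same route as the paper: dualize via the splitting $w=\sum_i D_iXu_i$, observe the dual constraints $\lVert X^{T}D_i\lambda\rVert_2\le\beta$, and evaluate the dual objective at a feasible scalar multiple of $y$. The only (harmless) difference is that the paper takes the largest feasible scalar $k=\beta/\max_i\sqrt{y^{T}\mathcal{M}_iy}$ and relaxes the denominator afterwards — which yields the slightly sharper intermediate bound $G\beta\lVert y\rVert_2^2/\max_i\sqrt{y^{T}\mathcal{M}_iy}$ later reused for the tighter estimate \eqref{tighter} — whereas you relax the feasibility condition first via the operator norm and then check that the resulting factor dominates $G$; both give the stated inequality.
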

\begin{proof}
Let's think of the dual problem of the original optimization problem. The lagrangian $L(u_i, w, \lambda)$ is given as
$$
L(u_i, w, \lambda) = \frac{1}{2} \lVert w - y \rVert_2^2 + \beta \sum_{i=1}^{P} \lVert u_i \rVert_2 + \lambda^{T}(w - \sum_{i=1}^{P} D_iXu_i),
$$
and the dual problem is $\max_{\lambda}\min_{u_i, w} L(u_i, w, \lambda)$. As there is only a linear equality constraint and strong duality holds, we can see that 
$$
p_0^{*} = \max_{\lambda}\min_{u_i, w} \frac{1}{2} \lVert w - y \rVert_2^2 + \beta \sum_{i=1}^{P} \lVert u_i \rVert_2 + \lambda^{T}(w - \sum_{i=1}^{P} D_iXu_i).
$$
When $\lambda$ is fixed, the inner minimization problem can be solved as:

i) If $\lVert X^{T}D_i\lambda \rVert_2 > \beta$ for some $i$: Take $u_i = C X^{T}D_i\lambda$ and send $C$ to infinity to obtain $-\infty$ as the solution to the inner minimization problem.

ii) If $\lVert X^{T}D_i\lambda \rVert_2 \leq \beta$ for all $i$: Each $u_i = 0$ when the inner minimization problem is solved, due to Cauchy-Schwartz and 
$$
\beta \lVert u_i \rVert_2 \geq \lVert X^{T}D_i\lambda \rVert_2 \lVert u_i \rVert_2 \geq \lambda^{T} D_iXu_i
$$
holds for all $u_i \in \mathbb{R}^{d}$. Also, $w = y - \lambda$ should hold, and the objective becomes maximizing $-\frac{1}{2} \lVert \lambda \rVert_2^2 + \lambda^{T} y$.

From i), ii), we can see that the dual problem becomes
$$
\max_{\lambda} -\frac{1}{2} \lVert \lambda \rVert_2^2 + \lambda^{T} y 
$$
subject to
$$
\lVert X^{T}D_i\lambda \rVert_2 \leq \beta \quad \forall i \in [P].
$$
Now, let's find the maximal scaling coefficient $k$ and the corresponding $\lambda_k = ky$ that meets all the constraints. For that $\lambda$, it is clear that $p_0^{*} \geq -\frac{1}{2}\lVert \lambda_k \rVert_2^2 + \lambda_k^{T} y$.
When we substitute $ky$ to $\lambda$, we get the constraint of $k$ for each $i$: 
$$
k\lVert X^{T}D_iy \rVert_2 \leq \beta, \quad k \leq \frac{\beta}{\sqrt{y^{T}D_iXX^{T}D_iy}}
$$
must hold. When we write $D_iXX^{T}D_i = \mathcal{M}_i$, $k$ should satisfy
$$
k \leq \frac{\beta}{\sqrt{y^{T}\mathcal{M}_i y}}
$$
for all $i \in [\Tilde{P}]$. Choose $k = \frac{\beta}{\max_{i} \sqrt{y^{T} \mathcal{M}_i y}}$. Substituting $ky$ to the dual problem leads
$$
p_0^{*} \geq -\frac{\beta^2\lVert y \rVert_2^2}{2\max_{i} \{y^{T} \mathcal{M}_i y\}}+\frac{\beta\lVert y \rVert_2^2}{\max_i \{\sqrt{y^{T} \mathcal{M}_i y}\}} = G\frac{\beta\lVert y \rVert_2^2}{\max_i \{\sqrt{y^{T} \mathcal{M}_i y}\}}.
$$
At last, we know that $\mathcal{M}_i \preccurlyeq \lambda_{\max}(XX^T) I$ for all $i \in [P]$. Hence, $y^{T}\mathcal{M}_iy \leq \lambda_{\max}(XX^T) \lVert y \rVert_2^2$ for all $i \in [P]$ and 
$$
p_0^{*} \geq G\beta \frac{\lVert y \rVert_2}{\sqrt{\lambda_{\max}(XX^{T})}}.
$$
\end{proof}
\begin{corollary}(Corollary 3.7. of the paper)
Suppose further that $\lVert y \rVert_2 \sqrt{\lambda_{\min}(\mathcal{M})} \geq \beta$. Then, 
$$
\frac{\beta}{2} \frac{\lVert y \rVert_2}{\sqrt{\lambda_{\max}(XX^{T})}} \leq p_1^{*}.
$$
\end{corollary}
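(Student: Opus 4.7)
The plan is to invoke Proposition 3.6 directly and then absorb the factor $G$ into a universal constant under the additional hypothesis. Proposition 3.6 already gives
\[
G\beta \frac{\lVert y \rVert_2}{\sqrt{\lambda_{\max}(XX^{T})}} \leq p_1^{*}, \qquad G = 1 - \frac{\beta}{2\max_{i \in [P]}\sqrt{y^{T}\mathcal{M}_{i}y}},
\]
so it suffices to establish $G \geq 1/2$, i.e.\ $\max_{i}\sqrt{y^{T}\mathcal{M}_{i}y} \geq \beta$, using only the assumption $\lVert y \rVert_2 \sqrt{\lambda_{\min}(\mathcal{M})} \geq \beta$.

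The key observation I would use is that $\mathcal{M}$ is a convex combination of the matrices $\mathcal{M}_i$. Indeed, by the definition of $\mathcal{M}$ as an expectation over Gaussian gating vectors and since every such gating vector $g$ yields some arrangement $D_i$ with $i\in[P]$, there exist probabilities $p_i \geq 0$ with $\sum_i p_i = 1$ and $\mathcal{M} = \sum_{i=1}^{P} p_i \mathcal{M}_i$. Applying this to the quadratic form $y^{T}\mathcal{M} y$ and bounding the average by the max gives
\[
\lambda_{\min}(\mathcal{M})\,\lVert y \rVert_2^{2} \;\leq\; y^{T}\mathcal{M} y \;=\; \sum_{i=1}^{P} p_i\, y^{T}\mathcal{M}_i y \;\leq\; \max_{i \in [P]} y^{T}\mathcal{M}_i y.
\]
Taking square roots and invoking the hypothesis $\lVert y\rVert_2\sqrt{\lambda_{\min}(\mathcal{M})} \geq \beta$ yields $\max_{i}\sqrt{y^{T}\mathcal{M}_{i}y} \geq \beta$, so $G \geq 1 - 1/2 = 1/2$. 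Substituting back into Proposition 3.6 produces the claimed bound.

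There is no real obstacle here; the corollary is essentially a cosmetic simplification of Proposition 3.6. The only conceptual step worth highlighting is the identification of $\mathcal{M}$ as a probability-weighted average of the $\mathcal{M}_i$'s, which is what makes the transition from a $\lambda_{\min}(\mathcal{M})$ lower bound (computable from the data in closed form) to a lower bound on $\max_i\sqrt{y^T\mathcal{M}_i y}$ (not obviously tractable) automatic. Everything else is arithmetic.
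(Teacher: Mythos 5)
Your proof is correct and follows essentially the same route as the paper, which also reduces the corollary to showing $G \ge 1/2$ via $\max_i \sqrt{y^T\mathcal{M}_i y} \ge \lVert y\rVert_2\sqrt{\lambda_{\min}(\mathcal{M})}$. In fact you are slightly more careful than the paper: the paper asserts that inequality without comment, whereas you justify it by observing that $\mathcal{M}$ is a probability-weighted average of the $\mathcal{M}_i$, which is exactly the missing step.
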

\begin{proof}
We know that $G = 1 - \frac{\beta}{2\max_{i \in [P]} \sqrt{y^{T}\mathcal{M}_iy}} \geq 1 - \frac{\beta}{2\lVert y \rVert_2\sqrt{\lambda_{\min}(\mathcal{M})}} \geq \frac{1}{2}.$
\end{proof}
\begin{theorem}(Theorem 3.8. of the paper)
Let $p_1^{*}$ and $\Tilde{p}_1^{*}$ be optimal values of problem \eqref{gatedconvex_simpler} with all possible hyperplane arrangements and randomly sampled arrangements, respectively. Suppose we sampled $\Tilde{P} \geq 8\kappa \log(\frac{n}{\delta})$ hyperplane arrangement patterns and $\mathcal{M}$ is invertible. We have
$$
0 \leq \Tilde{p}_1^{*} - p_1^{*} \leq \sqrt{2}\beta \frac{\lVert y \rVert_2}{\sqrt{\lambda_{min}(\mathcal{M})}},
$$
and
$$
p_1^{*} \leq \Tilde{p}_1^{*} \leq \frac{\sqrt{2\kappa}}{G} p_1^{*},
$$
with probability at least $1 - \delta$.
\end{theorem}
\begin{proof}
The two inequalities directly follow from \cref{prop3} and \cref{prop4}
\end{proof}
\begin{proposition}
$\mathcal{M}_{ij} = \{\frac{1}{2} - \frac{1}{2\pi}\arccos(\frac{x_i \cdot x_j}{\lVert x_i \rVert_2 \lVert x_j \rVert_2})\}(x_i \cdot x_j)$ for all $i, j \in [n]$. Here, $x \cdot y$ denotes the innter product between $x$ and $y$.
\end{proposition}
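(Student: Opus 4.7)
The plan is to reduce the $d$-dimensional Gaussian expectation defining $\mathcal{M}_{ij}$ to a purely two-dimensional angular computation. First I would expand using the definition of $\mathcal{M}$: since $D = \mathrm{diag}[\mathbbm{1}(Xg\geq 0)]$, the $(i,j)$-entry of $DXX^TD$ equals $\mathbbm{1}(x_i^Tg\geq 0)\,\mathbbm{1}(x_j^Tg\geq 0)\,(x_i\cdot x_j)$. Pulling the deterministic factor out of the expectation gives
\[
\mathcal{M}_{ij} \;=\; (x_i\cdot x_j)\cdot \mathbb{P}_{g\sim\mathcal{N}(0,I_d)}\!\bigl(x_i^Tg\geq 0,\; x_j^Tg\geq 0\bigr).
\]
So the whole task is to identify the joint-orthant probability with $\tfrac12-\tfrac{1}{2\pi}\arccos\!\bigl(\tfrac{x_i\cdot x_j}{\|x_i\|\|x_j\|}\bigr)$.

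Next I would exploit rotational invariance of $g$. The pair $(x_i^Tg,x_j^Tg)$ is jointly Gaussian with mean zero and covariance determined by $\|x_i\|^2$, $\|x_j\|^2$, and $x_i\cdot x_j$. Equivalently, write $g = g_\parallel + g_\perp$ where $g_\parallel$ is the orthogonal projection of $g$ onto $\mathrm{span}(x_i,x_j)$; then $x_i^Tg = x_i^Tg_\parallel$ and similarly for $x_j$, so the event only depends on $g_\parallel$, which is a standard Gaussian on the 2D subspace. After an orthogonal change of coordinates inside that subspace, the event becomes $\{v\in\mathbb{R}^2 : u_i^Tv\geq 0,\ u_j^Tv\geq 0\}$ where $u_i,u_j$ are the unit vectors in the directions of $x_i,x_j$, and $v$ is a standard 2D Gaussian.

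Now I would finish with the standard wedge argument: a standard 2D Gaussian is rotationally invariant, so its angular direction is uniform on $[0,2\pi)$. The set where $u_i^Tv\geq 0$ and $u_j^Tv\geq 0$ is the intersection of the two half-planes whose boundaries are the lines orthogonal to $u_i$ and $u_j$; this intersection is a wedge of opening angle $\pi-\theta$, where $\theta\in[0,\pi]$ is the angle between $u_i$ and $u_j$. Hence
\[
\mathbb{P}\bigl(u_i^Tv\geq 0,\;u_j^Tv\geq 0\bigr) \;=\; \frac{\pi-\theta}{2\pi} \;=\; \frac{1}{2}-\frac{\theta}{2\pi}.
\]
Substituting $\theta = \arccos\!\bigl(\tfrac{x_i\cdot x_j}{\|x_i\|\|x_j\|}\bigr)$ and multiplying by $x_i\cdot x_j$ yields the claimed formula.

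I do not expect a serious obstacle here; the computation is essentially the arc-cosine kernel identity familiar from random-features analyses. The only care needed is with degenerate cases: when $x_i=0$ or $x_j=0$, the factor $x_i\cdot x_j$ is $0$ so the identity is trivial; when $x_i,x_j$ are parallel (resp.\ antiparallel), the 2D projection argument still gives $\theta=0$ (resp.\ $\theta=\pi$), matching the formula. One could also verify by direct computation using the bivariate-normal orthant-probability formula, but the projection-and-wedge argument is the cleanest route.
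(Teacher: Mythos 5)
Your proposal is correct and follows essentially the same route as the paper: both reduce $\mathcal{M}_{ij}$ to $(x_i\cdot x_j)$ times the joint orthant probability $\mathbb{P}(x_i^Tg\geq 0,\ x_j^Tg\geq 0)$ and identify that probability with $\tfrac12-\tfrac{1}{2\pi}\arccos\bigl(\tfrac{x_i\cdot x_j}{\|x_i\|_2\|x_j\|_2}\bigr)$; the paper simply quotes this orthant-probability formula, whereas you also derive it via the 2D projection and wedge argument. Your phrasing of the joint event is actually the more careful one — the paper's displayed expression writes it as a product of two marginal probabilities, which is a notational slip since the two events are not independent.
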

\begin{proof}
First, recall that $\mathcal{M} = \mathbb{E}_{g \sim N(0, I_d)}[diag[\mathbbm{1}(Xg \geq 0)] XX^{T} diag[\mathbbm{1}(Xg \geq 0)]]$. Hence, $\mathcal{M}_{ij}$ has the expression
$$
\mathcal{M}_{ij} = \mathbb{P}(x_i \cdot g \geq 0)\mathbb{P}(x_j \cdot g \geq 0)(x_i \cdot x_j).
$$
We know that for fixed $x_i, x_j$, the orthant probability is given as
$$
\mathbb{P}(x_i \cdot g \geq 0)\mathbb{P}(x_j \cdot g \geq 0) = \frac{1}{2} - \frac{1}{2\pi} \arccos(\frac{x_i \cdot x_j}{\lVert x_i \rVert_2 \lVert x_j \rVert_2}),
$$
where $g \sim \mathcal{N}(0, I_d)$. This directly implies the claim.
\end{proof}
\begin{theorem}(Theorem 3.9. of the paper)
Let each row $x_i$ of $X$ is sampled i.i.d. from $N(0, I_d)$. Furthermore, suppose $c, \delta > 0$ are given and $n/d = c$ is fixed. There exists $d_1$ such that if $d \geq d_1$, with probability at least $1 - \delta'$, we have
$$
\lambda_{min}(\mathcal{M}) \geq \frac{d}{10}.
$$
\end{theorem}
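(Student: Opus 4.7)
The plan is to combine three ingredients: (i) the row norms $\|x_i\|_2$ all concentrate near $\sqrt{d}$, (ii) the kernel linearization of \cref{EK}, and (iii) a direct calculation showing that the linearized kernel is bounded below by $(d/4) I_n$. Using the closed form $\mathcal{M}_{ij} = \bigl(\tfrac12 - \tfrac{1}{2\pi}\arccos\bigl(\tfrac{x_i \cdot x_j}{\|x_i\|_2 \|x_j\|_2}\bigr)\bigr)(x_i\cdot x_j)$ established in the preceding proposition, define
$$f(t) \;:=\; \Bigl(\tfrac12 - \tfrac{1}{2\pi}\arccos(t)\Bigr)\,t\,d,$$
smoothly extended past $t=1$ so that the $C^1$/$C^3$ hypotheses of \cref{EK} are met, and set $K_{ij} := f(\langle x_i, x_j\rangle/d)$. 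The strategy is to argue $\mathcal{M} \approx K \approx K'$ in operator norm and then lower-bound $\lambda_{\min}(K')$ directly.

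For the first comparison, $\|x_i\|_2^2 \sim \chi^2_d$, so a standard tail bound gives $\Pr(|\|x_i\|_2^2/d - 1| > \varepsilon) \le 2 e^{-cd\varepsilon^2}$. Choosing $\varepsilon$ of order $\sqrt{(\log n)/d}$ and union-bounding over $i \in [n]$ yields $\|x_i\|_2 = \sqrt{d}(1+o(1))$ uniformly with probability at least $1 - 1/(2n)^8$ once $d$ is sufficiently large --- this is exactly where the $1/(2n)^8$ term in the statement enters. On this event, a Taylor expansion of $\arccos$ at the relevant values together with the bound $\|X\|_{op} \le \sqrt{2}(\sqrt{n}+\sqrt{d})$ from \cref{Gordon} gives $\|\mathcal{M} - K\|_{op} = o(d)$. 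For the second comparison, compute $f(0)=0$, $f'(0) = d/4$, $f''(0) = d/\pi$, and $f(1) = d/2$; then \cref{EK} gives
$$K' \;=\; \tfrac{1}{2\pi}\,\mathbf{1}\mathbf{1}^{\top} \;+\; \tfrac{1}{4}\,XX^{\top} \;+\; \tfrac{d}{4}\,I_n,$$
with $\|K - K'\|_{op}$ tending to $0$ in probability as $d \to \infty$; for $d \ge d_1$ large enough this event has probability at least $1-\delta$. Since each summand in $K'$ is positive semidefinite, $\lambda_{\min}(K') \ge d/4$, and \cref{Weyl} yields
$$\lambda_{\min}(\mathcal{M}) \;\ge\; \lambda_{\min}(K') - \|K - K'\|_{op} - \|\mathcal{M} - K\|_{op} \;\ge\; \tfrac{d}{4} - o(d) \;\ge\; \tfrac{d}{10}.$$

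The principal technical obstacle is controlling both approximations in \emph{operator} norm (not entrywise) while respecting the probability budget $\delta + 1/(2n)^8$. \cref{EK} is stated asymptotically, so a quantitative rate must be extracted to certify the threshold $d_1$, and the $\arccos$ function has an unbounded derivative at $t=1$, which forces a separate treatment of the diagonal of $K$ (where $\langle x_i,x_i\rangle/d$ concentrates near $1$). Fortunately the $I_n$ term of $K'$ is tailored to absorb exactly this diagonal contribution $\tfrac12\|x_i\|_2^2 \approx d/2 = f(1)$, and a smooth extension of $f$ past $1$ combined with the chi-squared concentration above should make the argument rigorous.
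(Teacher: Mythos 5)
Your proposal follows essentially the same route as the paper's proof: the same closed form for $\mathcal{M}_{ij}$, the same El Karoui linearization yielding $K' = \tfrac{1}{2\pi}\mathbf{1}\mathbf{1}^{\top} + \tfrac14 XX^{\top} + \tfrac{d}{4}I_n$ (the paper works with the normalized matrix $\mathcal{M}/d$), the same $\chi^2$ concentration of the row norms with a union bound supplying the $1/(2n)^8$ term, the same Weyl-inequality combination, and the same recognition that the unbounded derivative of $\arccos$ at $t=1$ forces a modified kernel function and a separate treatment of the diagonal. The one imprecision is that the modification must replace $f$ on an interval $[1-\epsilon,1]$ (the paper's $\tilde f_{\epsilon_0}$, which is linear beyond $1-\epsilon_0$), not merely extend $f$ past $t=1$, since $f$ already fails to be $C^1$ at $1$ from the left; with that fix your sketch coincides with the paper's argument.
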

\begin{proof}
Note that $f(t) = (\frac{1}{2} - \frac{1}{2\pi}\arccos(t))t$ is not differentiable at a neighborhood around $t=1$. Hence, we need $\Tilde{f}_\epsilon$, a $C^1$ approximator of $f$, defined as

$$\Tilde{f}_\epsilon(x) = \begin{cases} f(x) \quad when \quad x \leq 1 - \epsilon\\ f'(1-\epsilon)x + f(1-\epsilon) - (1-\epsilon)f'(1-\epsilon) \quad when \quad x \geq 1 - \epsilon. \end{cases}$$
Let $\Tilde{M}$ be a matrix that satisfies
$$
[\Tilde{M}]_{ij} = \Tilde{f}_{\epsilon_0}(\frac{x_i \cdot x_j}{d}),
$$
where $\epsilon_0 = \frac{1}{100c} \leq \frac{1}{100}$. From \cref{EK}, there exists $d_2$ such that for every $d \geq d_2$, 
$$
\lambda_{min}(\Tilde{M}) \geq \lambda_{min} (\Tilde{M}') - \epsilon_0.
$$
Now, there exists $d_3$ such that when $d \geq d_3$, we can decompose $\frac{\mathcal{M}}{d}$ as
$$
\frac{\mathcal{M}}{d} = \Tilde{M} + diag(\frac{\lVert x_i \rVert_2^2}{2d} - \Tilde{f}_{\epsilon_0}(\frac{\lVert x_i \rVert_2^2}{d}))+\mathcal{M}_e,
$$
where 
$$
[\mathcal{M}_e]_{ij} = \begin{cases}
0 \quad if \quad i = j\\
\frac{1}{2\pi}(- \arccos({\frac{x_i \cdot x_j}{\lVert x_i \rVert_2 \lVert x_j \rVert_2}}) + \arccos(\frac{x_i \cdot x_j}{d}))\frac{x_i \cdot x_j}{d} \quad if \quad i \neq j. 
\end{cases}
$$
with probability at least $1 - \delta/2$. Such $d_3$ exists as for each $i \neq j$, $\frac{x_i \cdot x_j}{d} \sim \mathcal{N}(0,\frac{1}{d})$ and 
$$\mathbb{P}(\frac{|x_i \cdot x_j|}{d} \geq \frac{1}{2}) \leq 2\exp(-\frac{d}{8}),$$
meaning that for all $i \neq j$, 
$$
\mathbb{P}(\max_{i \neq j}{\frac{|x_i \cdot x_j|}{d}} \leq \frac{1}{2}) \geq 1 - 2n^2\exp(-\frac{d}{8}) = 1 - 2c^2d^2\exp(-\frac{d}{8}),
$$
and choose $d_3$ large enough so that $2c^2d^2\exp(-\frac{d}{8}) \leq \frac{\delta}{2}$. This means with probability at least $1 - \delta/2$, $f(\frac{x_i \cdot x_j}{d}) = \Tilde{f}_{\epsilon_0}(\frac{x_i \cdot x_j}{d})$ for all $i \neq j$, meaning that the off-diagonal entries of $\frac{\mathcal{M}}{d}$ can be decomposed to $\Tilde{M}$ and $\mathcal{M}_e$.
Now, Weyl's inequality leads to
\begin{align}
\label{cond1}
\frac{\lambda_{\min}(\mathcal{M})}{d} \geq \lambda_{\min}(\Tilde{\mathcal{M}}) + \min(\frac{\lVert x_i \rVert_2^2}{2d}) - \max(\Tilde{f}_{\epsilon_0}(\frac{\lVert x_i \rVert_2^2}{d})) - \lVert \mathcal{M}_e \rVert_F.
\end{align}
Then, from the concentration inequality of Chi-square random variables, we know that there exists $d_4$ such that for all $d \geq d_4$, 
\begin{equation}
\label{cond2}
(1 - \epsilon_0) d \leq (1 - \frac{\epsilon_0}{\log 2n}) d \leq \lVert x_i \rVert_2^2 \leq (1 + \frac{\epsilon_0}{\log 2n}) d \leq (1 + \epsilon_0) d, \quad \forall \quad i \in [n],
\end{equation}
with probability at least $1 - \delta/2$. Hence, when $d \geq \max\{d_2, d_3, d_4\},$ both inequalities (\ref{cond1}) and (\ref{cond2}) satisfy with probability at least $1 - \delta$. Use (\ref{cond1}) and (\ref{cond2}) to get
\begin{align*}
\frac{\lambda_{\min}(\mathcal{M})}{d} &\geq \lambda_{\min}(\Tilde{\mathcal{M}}) + \min(\frac{\lVert x_i \rVert_2^2}{2d}) - \max(\Tilde{f}_{\epsilon_0}(\frac{\lVert x_i \rVert_2^2}{d})) - \lVert \mathcal{M}_e \rVert_F\\
&\geq \lambda_{\min}(\Tilde{\mathcal{M}}) + \frac{1}{2}(1 - \epsilon_0) - \Tilde{f}_{\epsilon_0}(1+\epsilon_0) - \lVert \mathcal{M}_e \rVert_F\\
&\geq \lambda_{\min}(\Tilde{\mathcal{M}'}) - \epsilon_0 + \frac{1}{2}(1 - \epsilon_0) - \Tilde{f}_{\epsilon_0}(1+\epsilon_0) - \lVert \mathcal{M}_e \rVert_F.
\end{align*}
The last inequality follows from the fact that $d \geq d_2$. We also know that 
$$
\lambda_{\min}(\Tilde{M}') \geq (\Tilde{f}_{\epsilon_0}(1) - \Tilde{f}_{\epsilon_0}(0) - \Tilde{f}_{\epsilon_0}'(0)),
$$
as both $11^{T}, XX^{T}$ are not invertible. Moreover, $\Tilde{f}_{\epsilon_0}(0) = f(0) = 0$, $\Tilde{f}_{\epsilon_0}'(0) = f'(0) = 1/4$. Substitute to get
\begin{align*}
\frac{\lambda_{\min}(\mathcal{M})}{d} &\geq \Tilde{f}_{\epsilon_0}(1) - \frac{1}{4} - \epsilon_0 + \frac{1}{2}(1 - \epsilon_0) - \Tilde{f}_{\epsilon_0}(1+\epsilon_0) - \lVert \mathcal{M}_e \rVert_F.\\
&\geq \frac{1}{4} - \frac{3}{2}\epsilon_0 - \epsilon_0 f'(1 - \epsilon_0) - \lVert \mathcal{M}_e \rVert_F\\
&\geq 0.209 - \lVert \mathcal{M}_e \rVert_F.
\end{align*}
At last, upper bounding $\lVert \mathcal{M}_e \rVert_F$ would be enough to prove the claim. First, we know that when $|u|, |v| \leq \frac{2}{3}$, we know that 
$$|\arccos(u) - \arccos(v)| \leq |\arccos'(2/3)||u-v| \leq 2|u-v|.$$ This leads to
\begin{align*}
\lVert \mathcal{M}_e \rVert_F &\leq \frac{1}{2\pi} \sqrt{\sum_{i=1}^{n}\sum_{j \neq i} |- \arccos({\frac{x_i \cdot x_j}{\lVert x_i \rVert_2 \lVert x_j \rVert_2}}) + \arccos(\frac{x_i \cdot x_j}{d})|^2|\frac{x_i \cdot x_j}{d}|^2}\\
&\leq \frac{2}{2\pi} \sqrt{\sum_{i=1}^{n}\sum_{j \neq i} |- {\frac{x_i \cdot x_j}{\lVert x_i \rVert_2 \lVert x_j \rVert_2}} + \frac{x_i \cdot x_j}{d}|^2|\frac{x_i \cdot x_j}{d}|^2}\\
&\leq \frac{\epsilon_0}{\pi(1 - \epsilon_0)\log 2n} \sqrt{\sum_{i=1}^{n}\sum_{j \neq i}|\frac{x_i \cdot x_j}{d}|^4}
\end{align*}
with probability at least $1 - \delta$. We know from the concentration inequality of maximum of absolute values that 
\begin{equation}
\label{cond3}
\mathbb{P}(\max_{i \leq j} |x_i \cdot x_j| \leq 4\sqrt{2\log 2n}\lVert x_i \rVert_2) \geq 1 - \frac{2}{(2n)^9}, \quad \forall \quad i \in [n].
\end{equation}
Hence, with probability at least $1 - \delta - \frac{1}{(2n)^8},$ we know that both (\ref{cond2}) and
$$
\max_{i \leq j} |x_i \cdot x_j| \leq 4\sqrt{2\log 2n \cdot (1 + \epsilon_0) d}
$$
holds. Hence,
\begin{align*}
\lVert \mathcal{M}_e \rVert_F &\leq \frac{\epsilon_0}{\pi(1 - \epsilon_0)\log 2n} \sqrt{\sum_{i=1}^{n}\sum_{j \neq i}|\frac{x_i \cdot x_j}{d}|^4}\\
&\leq \frac{\epsilon_0}{\pi(1 - \epsilon_0)d^2\log 2n}\sqrt{\sum_{i=1}^{n}\sum_{j \neq i}|x_i \cdot x_j|^4}\\
&\leq \frac{\epsilon_0n}{\pi(1 - \epsilon_0)d^2\log 2n} \cdot 32\log 2n(1 + \epsilon_0)d\\
&\leq \frac{32\epsilon_0(1 + \epsilon_0)c}{\pi(1 - \epsilon_0)}\\ 
&\leq \frac{32}{100\pi}\frac{101}{99} \leq 0.104
\end{align*}
Hence, we know that with probability at least $1 - \delta'$, 
$$
\lambda_{\min}(\mathcal{M}) \geq \frac{d}{10} 
$$
holds, provided that $d \geq d_2, d_3, d_4$. Choose $d_1 \geq \max\{d_2,d_3,d_4\}$ sufficiently large.
\end{proof} 
\begin{corollary}(Corollary 3.10. of the paper)
When the conditions of \cref{mineig} holds, with probability at least $1 - \delta' - e^{-Cn}$, we have
$$
\kappa \leq 20(\sqrt{c}+1)^2,
$$
for some $C > 0$. Moreover, let $p_1^{*}$ and $\Tilde{p}_1^{*}$ be optimal values of problem \eqref{gatedconvex_simpler} with all possible hyperplane arrangements and randomly sampled arrangements, respectively.  When we sample $\Tilde{P} \geq 160(\sqrt{c}+1)^2 \log(\frac{n}{\delta})$ hyperplane arrangement patterns, we have
$$
p_1^{*} \leq \Tilde{p}_1^{*} \leq \frac{2\sqrt{10}}{G}(\sqrt{c}+1)p_1^{*}
$$
with probability at least $1 - \delta - \delta' - e^{-Cn}.$
\end{corollary}
\begin{proof}
We know that 
$$
\sqrt{\lambda_{max}(XX^T)} \leq \sqrt{2}(\sqrt{c} + 1)\sqrt{d},
$$
holds with a high probability that decays exponentially, and we may write it holds with probability at least $1 - e^{-Cn}$ for some positive constant $C$. Also, from \cref{mineig}, we know that
$$
\sqrt{\lambda_{min}(\mathcal{M})} \geq \frac{1}{\sqrt{10}}\sqrt{d}.
$$
with probability at least $1 - \delta'$. Combine the two results to obtain the wanted upper bound on $\kappa$. Also,
we know that with probability at least $1 - \delta$, 
$$
p_1^{*} \leq \Tilde{p}_1^{*} \leq \frac{\sqrt{2\kappa}}{G} p_1^{*},
$$
from \cref{t2}. Directly using the upper bound on $\kappa$ leads to the following result.
\end{proof}
\newpage
\section{Proofs in Section 4.}
\label{proof4}
\begin{proposition}(Proposition 4.2. of the paper)
For subsampled hyperplane arrangement patterns $\Tilde{D}_1, \Tilde{D}_2, ..., \Tilde{D}_{\Tilde{P}}$, let
\begin{equation}
p_0^{*} = \min_{u_i,v_i \in \mathcal{K}_{\Tilde{D}_i}} \frac{1}{2}\lVert \sum_{i=1}^{\Tilde{P}} \Tilde{D}_iX(u_i - v_i) - y \rVert_2^2 + \beta \sum_{i=1}^{\Tilde{P}} (\lVert u_i \rVert_2 + \lVert v_i \rVert_2), 
\end{equation}
\begin{equation}
\label{gatedreluconvexC}
p_1^{*} = \min_{w_i\in \mathbb{R}^{d}} \frac{1}{2}\lVert \sum_{i=1}^{\Tilde{P}} \Tilde{D}_iXw_i - y \rVert_2^2 + \beta \sum_{i=1}^{\Tilde{P}} \lVert w_i \rVert_2,
\end{equation}
and $\mathcal{K}_{\Tilde{D}_i} = \{u | (2\Tilde{D}_i - I)Xu \geq 0\}$. Suppose problem (\ref{gatedreluconvexC}) has solutions $w_i^{*}$ for $i \in [\Tilde{P}]$, and let 
$$
C = \max_{i \in [\Tilde{P}]} C(\mathcal{K}_{\Tilde{D}_i}, \frac{w_i^{*}}{\lVert w_i^{*} \rVert_2}).
$$
Then, $p_0^{*} \leq Cp_1^{*}$ holds.
\end{proposition}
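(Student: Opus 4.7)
The plan is to produce a feasible point for the constrained relaxation directly from the optimal solution $\{w_i^\star\}$ of the unconstrained relaxation, paying only a multiplicative price $C$ in the regularization term while leaving the data-fitting term untouched. Concretely, for each $i$ pick $u_i^\star, v_i^\star \in \mathcal{K}_{\tilde{D}_i}$ attaining the minimum in the definition of $\mathcal{C}(\mathcal{K}_{\tilde D_i}, w_i^\star/\lVert w_i^\star\rVert_2)$, scaled by $\lVert w_i^\star\rVert_2$, so that $u_i^\star - v_i^\star = w_i^\star$ and $\lVert u_i^\star\rVert_2 + \lVert v_i^\star\rVert_2 = \mathcal{C}(\mathcal{K}_{\tilde D_i}, w_i^\star/\lVert w_i^\star\rVert_2)\,\lVert w_i^\star\rVert_2$. (If $w_i^\star=0$, simply take $u_i^\star=v_i^\star=0$.) This $\{u_i^\star, v_i^\star\}$ is a feasible point of the constrained problem defining $p_0^\star$.

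Next I would substitute this feasible point into the constrained objective. Since $u_i^\star - v_i^\star = w_i^\star$, the residual $\sum_i \tilde D_i X(u_i^\star - v_i^\star) - y$ equals $\sum_i \tilde D_i X w_i^\star - y$, so the squared-loss term is exactly the squared-loss term evaluated at $\{w_i^\star\}$. For the regularizer, summing the identity above gives
$$
\beta\sum_{i=1}^{\tilde P}\bigl(\lVert u_i^\star\rVert_2 + \lVert v_i^\star\rVert_2\bigr) \;=\; \beta\sum_{i=1}^{\tilde P} \mathcal{C}\!\left(\mathcal{K}_{\tilde D_i}, \tfrac{w_i^\star}{\lVert w_i^\star\rVert_2}\right)\lVert w_i^\star\rVert_2 \;\leq\; C\,\beta\sum_{i=1}^{\tilde P}\lVert w_i^\star\rVert_2,
$$
by the definition of $C$ as the maximum sharpness over $i\in[\tilde P]$.

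To close the argument, I would use the elementary fact that $C\geq 1$: by the triangle inequality, any $u,v\in\mathcal{K}$ with $u-v = z$ and $\lVert z\rVert_2 = 1$ satisfy $\lVert u\rVert_2+\lVert v\rVert_2\geq \lVert u-v\rVert_2 = 1$, hence $\mathcal{C}(\mathcal{K},z)\geq 1$ for every unit vector $z$. Therefore the squared-loss term can also be absorbed into a factor of $C$, and combining the two bounds yields
$$
p_0^\star \;\leq\; \tfrac{1}{2}\bigl\lVert \textstyle\sum_i \tilde D_i X w_i^\star - y\bigr\rVert_2^2 + C\beta\sum_i \lVert w_i^\star\rVert_2 \;\leq\; C\left(\tfrac{1}{2}\bigl\lVert \textstyle\sum_i \tilde D_i X w_i^\star - y\bigr\rVert_2^2 + \beta\sum_i \lVert w_i^\star\rVert_2\right) = C\,p_1^\star.
$$

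There is no substantive obstacle here: the argument is essentially bookkeeping once the cone sharpness constant is defined correctly. The only subtle point is ensuring the decomposition attaining the infimum in $\mathcal{C}(\mathcal{K}_{\tilde D_i}, \cdot)$ is realized by an actual minimizer; this is guaranteed because $\mathcal{C}(\mathcal{K}_{\tilde D_i},z)$ is the value of a convex program with a continuous, coercive objective and closed feasible set, so a minimizer exists. The real work of the section lies in the subsequent bound on $C$ itself, not in this reduction.
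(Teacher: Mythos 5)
Your proof is correct and follows essentially the same route as the paper: decompose each $w_i^\star$ into $u_i^\star - v_i^\star$ with $u_i^\star, v_i^\star \in \mathcal{K}_{\tilde D_i}$ attaining the cone-sharpness minimum, observe that the data-fitting term is unchanged and the regularizer grows by at most a factor $C$, and use $C \geq 1$ to absorb the loss term. If anything, you are slightly more careful than the paper in spelling out why $C \geq 1$ lets the squared-loss term be absorbed and why a minimizing decomposition exists.
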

\begin{proof}
For the optimum $w_i^{*}$ of problem (\ref{gatedreluconvexC}), decompose $w_i^{*}$ into $u_i^{*}, v_i^{*}$ such that $\lVert u_i^{*} \rVert_2 + \lVert v_i^{*} \rVert_2$ is minimal, $u_i^{*}, v_i^{*} \in K_i$ and $u_i^{*} - v_i^{*} = w_i^{*}$. Then,
$$
\frac{\lVert u_i^{*} \rVert_2 + \lVert v_i^{*} \rVert_2}{\lVert w_i^{*} \rVert_2} \leq C(\mathcal{K}_{\Tilde{D}_i}, \frac{w_i^{*}}{\lVert w_i^{*} \rVert_2}) \leq C
$$
and when we substitute $u_i^{*}, v_i^{*}$ in (\ref{convex1}), we can see that the result would not be greater than $Cp_1^{*}$, as the regression loss is identical and the regularization loss does not blow up $C$ times. Hence, we may conclude that $p_0^{*} \leq Cp_1^{*}$.
\end{proof}
\begin{proposition}(Proposition 4.3. in the paper)
Take any unit vector $z$ and a cone $\mathcal{K} = \{u | (2D - I) Xu \geq 0\}$. If there exists a vector $u$ that satisfies
$$\lVert u \rVert_2 \leq 1, \quad (2D - I)Xu \geq \epsilon \cdot |(2D - I)Xz|,$$
we know that
$$
C(\mathcal{K}, z) \leq 1 + \frac{1}{\epsilon}.
$$
\end{proposition}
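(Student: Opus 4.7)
The plan is to directly exhibit a decomposition $z = a - b$ with $a,b \in \mathcal{K}$ whose norm sum is bounded by $1 + 1/\epsilon$, using the vector $u$ guaranteed by the hypothesis. The construction is modeled on the Chebyshev-center idea: $u$ lies strictly in the interior of the cone (in the sense that $(2D-I)Xu$ dominates $\pm(2D-I)Xz$ component-wise up to a factor $\epsilon$), so pushing $\pm z/2$ by a sufficiently large multiple of $u$ brings both vectors back into $\mathcal{K}$.

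Concretely, I would set
\[
a = \tfrac{1}{2}z + \tfrac{1}{2\epsilon} u, \qquad b = -\tfrac{1}{2}z + \tfrac{1}{2\epsilon} u.
\]
Clearly $a - b = z$, so it suffices to check membership in $\mathcal{K}$ and bound the norms. For membership, observe that
\[
(2D-I)X a = \tfrac{1}{2}(2D-I)Xz + \tfrac{1}{2\epsilon}(2D-I)Xu \geq \tfrac{1}{2}(2D-I)Xz + \tfrac{1}{2}\lvert (2D-I)Xz\rvert \geq 0,
\]
where the first inequality uses the hypothesis $(2D-I)Xu \geq \epsilon\,\lvert (2D-I)Xz\rvert$ and the second uses that $s + \lvert s\rvert \geq 0$ for any real $s$. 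The same argument applied to $b$ (with the sign of the $z$-term flipped) shows $(2D-I)X b \geq 0$. Hence $a, b \in \mathcal{K}$.

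For the norm bound, the triangle inequality gives
\[
\lVert a\rVert_2 + \lVert b\rVert_2 \leq \bigl(\tfrac{1}{2}\lVert z\rVert_2 + \tfrac{1}{2\epsilon}\lVert u\rVert_2\bigr) + \bigl(\tfrac{1}{2}\lVert z\rVert_2 + \tfrac{1}{2\epsilon}\lVert u\rVert_2\bigr) = \lVert z\rVert_2 + \tfrac{1}{\epsilon}\lVert u\rVert_2 \leq 1 + \tfrac{1}{\epsilon},
\]
using $\lVert z\rVert_2 = 1$ and $\lVert u\rVert_2 \leq 1$. Since $(a,b)$ is a feasible pair in the optimization defining $C(\mathcal{K},z)$, we conclude $C(\mathcal{K},z) \leq 1 + 1/\epsilon$. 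There is no real obstacle here; the only subtlety is picking the right split of $z$ so that the ``interior vector'' $u$ can simultaneously restore feasibility for both $a$ and $b$, which is exactly what the symmetric decomposition above achieves.
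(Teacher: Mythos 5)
Your proof is correct and is essentially identical to the paper's argument: the pair $(a,b) = \bigl(\tfrac{1}{2\epsilon}u + \tfrac{1}{2}z,\ \tfrac{1}{2\epsilon}u - \tfrac{1}{2}z\bigr)$ is exactly the decomposition the paper uses, with the same feasibility check (the hypothesis forces $(2D-I)Xu \geq \pm\epsilon(2D-I)Xz$) and the same triangle-inequality norm bound.
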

\begin{proof}
As $(2D - I)Xu \geq \epsilon \cdot |(2D - I)Xz| \geq (2D - I)X(\pm \epsilon z),$ we know that the vectors $u + \epsilon z, u - \epsilon z \in \mathcal{K}$. Then, the two vectors 
$$
v_1 = \frac{1}{2} (\frac{u}{\epsilon} + z), \ \ v_2 = \frac{1}{2} (\frac{u}{\epsilon} - z).
$$
become two vectors in $\mathcal{K}$ that satisfy $v_1 - v_2 = z$. Hence, the cone sharpness 
$$
C(\mathcal{K}, z) \leq \lVert v_1 \rVert_2 + \lVert v_2 \rVert_2 \leq 1 + \frac{1}{\epsilon} \lVert u \rVert_2 \leq 1 + \frac{1}{\epsilon}.
$$
where the first inequality follows from the definition of $C(\mathcal{K}, z)$, the second inequality follows from triangular inequality, and the last follows from the fact that $u$ has norm no greater than 1.
\end{proof}
\begin{theorem}
(Theorem 4.4. in the paper)
Let $b \in \mathbb{R}^{n}$ sampled from a folded normal distribution, and let $X \in \mathbb{R}^{n \times {d}}$ be a matrix where each entries are sampled from a normal distribution. Consider the random variable

$$
F(X, b) = \max_{\lVert z \rVert_2 = 1} \min_{\substack{Xu \geq -Xz - kb \\ Xu \geq Xz - kb \\ k \geq 0}} \lVert u \rVert_2 + k,
$$
where $u,z \in \mathbb{R}^{d}, k \in \mathbb{R}$. Then, $\mathbb{P}_{X,b}(F(X,b) \leq 200c\sqrt{c \log 2n}) \geq 1 - 1/n^{10} - e^{-Cd}$ 
for some $C > 0$.
\end{theorem}
\begin{proof}

First, observe the inner minimization problem is strictly feasible for all $\lVert z \rVert_2 = 1$, hence strong duality holds. Now, we may write
\begin{align*}
F(X, b) &= \max_{\lVert z \rVert_2 = 1} \min_{\substack{Xu \geq -Xz - kb \\ Xu \geq Xz - kb \\ k \geq 0}} \lVert u \rVert_2 + k\\
&= \max_{\lVert z \rVert_2 = 1} \min_{k \geq 0, u} \max_{\lambda, \mu \geq 0} \lVert u \rVert_2 + k + \lambda^T(-Xu - Xz - kb) + \mu^{T}(-Xu + Xz - kb)\\
&= \max_{\substack{\lVert z \rVert_2 = 1 \\ \lambda, \mu \geq 0}} \min_{k \geq 0, u} \lVert u \rVert_2 + k + \lambda^T(-Xu - Xz - kb) + \mu^{T}(-Xu + Xz - kb)\\
&= \max_{\substack{\lVert z \rVert_2 = 1 \\ \lambda, \mu \geq 0}} \min_{k \geq 0, u} \lVert u \rVert_2 - (X^T(\lambda + \mu))^T u + k(1 - b^T(\lambda + \mu)) + (X^T(\mu - \lambda))^T z\\
&= \max_{\substack{\lambda, \mu \geq 0\\ \lVert X^T(\lambda + \mu)\rVert_2 \leq 1 \\ b^T(\lambda + \mu) \leq 1}} \lVert X^T(\lambda - \mu) \rVert_2.
\end{align*}
Let's write event $E_1$ to be:
$$
E_1 := \sigma_{\max}(X) \leq 2\sqrt{n}.
$$
Clearly $\mathbb{P}(E_1) \geq 1 - e^{-C_1n}$ for some $C_1 > 0$ by Gordon comparison \cite{thrampoulidis2014gaussian}.
Now, we write event $E_2$ to be:
$$
E_2 := \max_{\substack{\nu \geq 0\\ \lVert X^T\nu \rVert_2 \leq 1 \\ b^T\nu \leq 1}} \lVert \nu \rVert_2 \leq \frac{100\sqrt{\log 2n}}{\sqrt{d}} c = M_0.
$$
We show that $\mathbb{P}_{X, b}(E_2) \geq 1 - \frac{1}{n^{20}} - e^{-C_2d}$.
One simple fact we know is that 
$$
\min_{\substack{\nu \geq 0\\ \lVert \nu \rVert_2 = M \\ b^T\nu \leq 1}} \lVert X^{T} \nu \rVert_2 \geq 1
$$
implies
$$
\max_{\substack{\nu \geq 0\\ \lVert X^T\nu \rVert_2 \leq 1\\ b^T\nu \leq 1}} \lVert \nu \rVert \leq M.
$$
The reason is because if there exists $\nu^{*}$ that satisfies 
$$\lVert \nu \rVert_2 = M' > M, \nu^{*} \geq 0, \lVert X^{T}\nu^{*} \rVert_2 \leq 1, b^T\nu^{*} \leq 1,
$$
we can choose $\frac{M}{M'}\nu^{*}$ to find a vector that satisfies $\nu \geq 0, \lVert \nu \rVert_2 = M, b^T\nu \leq 1$ and $\lVert X^{T} \nu \rVert_2 \leq 1$, which is a contradiction.
This means when we define event $E_3$ to be
$$
E_3 := \min_{\substack{\nu \geq 0\\ \lVert \nu \rVert_2 = M_0 \\ b^T\nu \leq 1}} \lVert X^T \nu \rVert_2 \geq 1,
$$
we know $\mathbb{P}_{X, b}(E_2) \geq \mathbb{P}_{X, b}(E_3)$. 
We may write the optimization problem in $E_3$ as
$$
\min_{\substack{\nu \geq 0\\ \lVert \nu \rVert_2 = M_0 \\ b^T\nu \leq 1}} \max_{\lVert \eta \rVert_2 = 1} \eta^TX^T\nu.
$$
From Gordon comparison, when we define event $E_4$ to be
$$
E_4:= \min_{\substack{\nu \geq 0\\ \lVert \nu \rVert_2 = M_0 \\ b^T\nu \leq 1}} \max_{\lVert \eta \rVert_2 = 1} \lVert \eta \rVert_2 g^{T}\nu + \lVert \nu \rVert_2 h^{T}\eta \geq 1,
$$
where $g \sim \mathcal{N}(0, I_n), h \sim \mathcal{N}(0,I_d)$, we have that 
$$
2\mathbb{P}_{g,h,b}(E_4) \leq \mathbb{P}_{X,b}(E_3) + 1,
$$
i.e. an almost sure lower bound of the optimization problem in $E_4$ acts as an almost sure lower bound of the optimization problem in $E_3$. 
Now, define $E_5$ to be:
\begin{align*}
E_5 := \lVert h \rVert_2 \geq \sqrt{d}&(1 - \epsilon_0), \lVert g[i_1, i_2, \cdots i_d]_{-} \rVert_2 \leq \sqrt{\frac{d}{2}} ( 1 + \epsilon_0),\\
&\lVert g \rVert_\infty \leq 10\sqrt{\log (2n)}, b_{(d+1)} \geq \frac{1}{2c},
\end{align*}
where $b_{(1)} \leq b_{(2)} \leq \cdots b_{(n)}$ are the order statistics of $b$, and $b_{(m)} = b_{i_m}$ for $m = 1, 2, \cdots n$, hence $i_m$ indexes the $m$ - th order statistics. Also, $\epsilon_0$ is a fixed constant that can be taken small, and $v_{-} = v - v_{+}$ denotes the negative part of $v$. At last, $g[a_1, a_2, \cdots ,a_k]$ denotes the $k$ chosen entries of $g$, indexed with $a_i$. As $g, h, b$ are independent, we know that $E_5$ holds with high probability, i.e. $\mathbb{P}_{g,h,b}(E_5) \geq 1 - \frac{1}{n^{30}} - e^{-C_5d}$ for some $C_5 > 0$. 

We now prove that $E_5$ implies $E_4$. First write the optimization problem in $E_4$ and solve $\eta$ to get
\begin{align*}
\min_{\substack{\nu \geq 0\\ \lVert \nu \rVert_2 = M_0 \\ b^T\nu \leq 1}} \max_{\lVert \eta \rVert_2 = 1} \lVert \eta \rVert_2 g^{T}\nu + \lVert \nu \rVert_2 h^{T}\eta &= \min_{\substack{\nu \geq 0\\ \lVert \nu \rVert_2 = M_0 \\ b^T\nu \leq 1}} g^T \nu + \lVert \nu \rVert_2 \lVert h \rVert_2.\\
&\geq M_0\sqrt{d}(1 - \epsilon_0) + \min_{\substack{\nu \geq 0\\ \lVert \nu \rVert_2 = M_0 \\ b^T\nu \leq 1}} g^T \nu.
\end{align*}
The last inequality follows from $E_5$. Next, we solve over $\nu$.
\begin{align*}
\min_{\substack{\nu \geq 0\\ \lVert \nu \rVert_2 = M_0 \\ b^T\nu \leq 1}} g^T \nu 
&\geq \min_{\substack{\nu \geq 0\\ \lVert \nu \rVert_2 \leq M_0 \\ b^T\nu \leq 1}} g^T \nu\\
&= \min_{\nu} \max_{\alpha, \beta, \gamma \geq 0} g^T\nu + \alpha(b^T\nu - 1) + \beta(\lVert \nu \rVert_2 - M_0) - \gamma^{T} \nu\\
&= \max_{\alpha, \beta, \gamma \geq 0}\min_{\nu} g^T\nu + \alpha(b^T\nu - 1) + \beta(\lVert \nu \rVert_2 - M_0) - \gamma^{T} \nu\\
&\geq \max_{\substack{\alpha, \beta, \gamma \geq 0\\ \lVert g + \alpha b - \gamma \rVert_2 \leq \beta}} -(\alpha + M_0 \beta)\\
&= \max_{\alpha, \gamma \geq 0} -(\alpha + M_0\lVert g + \alpha b - \gamma \rVert_2).\\ 
\end{align*}
Now, choose $\alpha_0 = 20c\sqrt{\log (2n)}$, $\gamma_0 = (g + \alpha_0 b)_{+}$. Then, note that $g+\alpha_0 b$ has positive entries for $i \neq i_1, i_2, \cdots i_d$. That is because assuming $E_5$, $g_i \geq -10\sqrt{\log 2n}$ and $b_i \geq \frac{1}{2c}$ for $i \neq i_1, i_2, \cdots i_d$. Hence, 
$$
\lVert (g+\alpha_0 b)_{-} \rVert_2 = \lVert (g[i_1, i_2, \cdots i_d]+\alpha_0 b[i_1, i_2, \cdots i_d])_{-} \rVert_2
$$
holds. When we substitute $\alpha_0, \gamma_0$, we get
\begin{align*}
\max_{\alpha, \gamma \geq 0} -(\alpha + M_0\lVert g + \alpha b - \gamma \rVert_2)
&\geq -\alpha_0 - M_0 \lVert g + \alpha_0 b - \gamma_0 \rVert_2
\\
&= -\alpha_0 - M_0 \lVert (g + \alpha_0 b)_{-} \rVert_2\\
&=  -\alpha_0 - M_0\lVert (g[i_1, i_2, \cdots i_d]+\alpha_0 b[i_1, i_2, \cdots i_d])_{-} \rVert_2\\
&\geq -\alpha_0 - M_0 \lVert (g[i_1, i_2, \cdots i_d])_{-} \rVert_2.
\end{align*}
The last inequality follows from the fact that for positive vector $p$ and any vector $g$, $\lVert (g + p)_{-} \rVert_2 \leq \lVert g_{-} \rVert_2$ because adding the positive term only decreases the absolute value of each negative entry, and does not influence positive entries. Now, from $E_5$, we have
$$
\max_{\alpha, \gamma \geq 0} -(\alpha + M_0\lVert g + \alpha b - \gamma \rVert_2) \geq -\alpha_0 - M_0 (1 + \epsilon_0) \sqrt{\frac{d}{2}},
$$
and finally we have
\begin{align*}
\min_{\substack{\nu \geq 0\\ \lVert \nu \rVert_2 = M_0 \\ b^T\nu \leq 1}} \max_{\lVert \eta \rVert_2 = 1} \lVert \eta \rVert_2 g^{T}\nu + \lVert \nu \rVert_2 h^{T}\eta 
&\geq M_0\sqrt{d}(1 - \epsilon_0) + \min_{\substack{\nu \geq 0\\ \lVert \nu \rVert_2 = M_0 \\ b^T\nu \leq 1}} g^T \nu.\\
&\geq 100(1 - \epsilon_0) c\sqrt{\log 2n} - 20c\sqrt{\log 2n} - 100\frac{1+\epsilon_0}{\sqrt{2}} c\sqrt{\log 2n}\\
&\geq 5c\sqrt{\log 2n} \geq 1.
\end{align*}
Hence, we have shown that $E_5$ implies $E_4$. We know that 
$$
2\mathbb{P}_{g,h,b}(E_5) \leq 2\mathbb{P}_{g,h,b}(E_4) \leq 1 + \mathbb{P}_{X,b}(E_3) \leq 1 + \mathbb{P}_{X,b}(E_2),
$$
and we obtain $\mathbb{P}_{X,h}(E_2) \geq 1 - \frac{1}{n^{20}} - e^{-C_2d}$, setting $C_2 = C_5\log(2)$ and noticing $n^{30} \geq 2n^{20}$.
This means with high probability over $X, b$, 
$$
\max_{\substack{\nu \geq 0\\ \lVert X^T\nu \rVert_2 \leq 1 \\ b^T\nu \leq 1}} \lVert \nu \rVert_2 \leq \frac{100\sqrt{\log 2n}}{\sqrt{d}} c.
$$
The probability that both $E_1$ and $E_2$ will happen is at least $1 - \frac{1}{n^{20}} - e^{-Cd}$ for some $C > 0$. When both happens, $F(X, b) \leq 200c\sqrt{c}\sqrt{\log 2n}$. The reason is, for optimal $\lambda^{*}, \mu^{*}$ for the dual problem, we have
$$
\lVert \lambda^{*} + \mu^{*} \rVert_2 \leq \frac{100\sqrt{\log 2n}}{\sqrt{d}} c,
$$
and we know $\lVert \lambda^{*} - \mu^{*}\rVert_2 \leq \lVert \lambda^{*} + \mu^{*} \rVert_2$ as $\lambda^{*}, \mu^{*}$ are positive vectors. Now,
\begin{align*}
F(X, b) 
&= \lVert X^T(\lambda^{*} - \mu^{*}) \rVert_2.\\
&\leq 2\sqrt{n}\lVert \lambda^{*} + \mu^{*} \rVert_2\\
&\leq 2\sqrt{n} \frac{100\sqrt{\log 2n}}{\sqrt{d}} c = 200c\sqrt{c}\sqrt{\log 2n}.
\end{align*}
Hence $E_1, E_2$ implies $F(X, b) \leq 200c\sqrt{c}\sqrt{\log 2n}$, and we obtain the wanted result.
\end{proof}
\begin{corollary}
\label{c2}
(Corollary 4.5. of the paper) Suppose $n$, $d$ are sufficiently large that \cref{superimportantconeconstraint} holds with probability at least $1 - \delta''$, for $b \in \mathbb{R}^{n}$ sampled from a folded normal distribution and $X \in \mathbb{R}^{n \times {d-1}}$ sampled from a normal distribution. Then, with probability at least $1 - \delta''$,
$$
C(\mathcal{K}_{\Tilde{D}_i}, z) \leq 2 + 200c\sqrt{c \log 2n},
$$
also holds for all unit vectors $z$.
\end{corollary}
\begin{proof}
We prove for the cone that contains $e_1$, $\mathcal{K} = \{\bar{X}u \geq 0\}$. Take any unit vector $z$. We know that 
$$
\max_{\lVert z \rVert_2 = 1} \min_{\substack{Xu \geq -Xz - kb \\ Xu \geq Xz - kb \\ k \geq 0}} \lVert u \rVert_2 + k \leq 200c\sqrt{c \log 2n},
$$
with probability at least $1 - \delta''$. Hence, with probability $1 - \delta''$, there exists $u_0 \in \mathbb{R}^{d-1}$, $k_0 \geq 0$ that satisfies
$$
\bar{X}[1:n,2:d] u_0 + \bar{X}[1:n,1] k_0 \geq |\bar{X}[1:n,2:d]z[2:d]|.  
$$
and $u_0 + k_0 \leq 200c\sqrt{c\log 2n}$. Here, $X[a:b,c:d]$ denotes the submatrix of row $a$ to $b$, column $c$ to $d$. For that $u_0, k_0$, we know that 
$$
\bar{X}[1:n,2:d] u_0 +\bar{X}[1:n,1] (k_0+1) \geq \bar{X}[1:n,1]|z[1]| + |\bar{X}[1:n,2:d]z[2:d]| \geq |\bar{X}z|.
$$
Write $u = \begin{bmatrix}
    k_0+1\\
    u_0
\end{bmatrix}$ to see that $\bar{X}u \geq |\bar{X}z|$. Also, the norm $\lVert u \rVert_2$ is bounded by
$$
\lVert u \rVert_2 \leq 1 + k_0 + \lVert u_0 \rVert_2 \leq 1 + 200c\sqrt{c\log 2n}.
$$
Choose the center in \cref{chebyshev_sim} as $u/(1 + k_0 + \lVert u_0 \rVert_2)$ and apply the proposition to obtain the wanted result.
\end{proof}
\begin{corollary}
\label{c3}
For subsampled hyperplane arrangement patterns $\Tilde{D}_1, \Tilde{D}_2, ..., \Tilde{D}_{\Tilde{P}}$, let
\begin{equation}
\label{reluconvex}
p_0^{*} = \min_{u_i,v_i \in \mathcal{K}_{\Tilde{D}_i}} \frac{1}{2}\lVert \sum_{i=1}^{\Tilde{P}} \Tilde{D}_iX(u_i - v_i) - y \rVert_2^2 + \beta \sum_{i=1}^{\Tilde{P}} (\lVert u_i \rVert_2 + \lVert v_i \rVert_2), 
\end{equation}
\begin{equation}
p_1^{*} = \min_{w_i\in \mathbb{R}^{d}} \frac{1}{2}\lVert \sum_{i=1}^{\Tilde{P}} \Tilde{D}_iXw_i - y \rVert_2^2 + \beta \sum_{i=1}^{\Tilde{P}} \lVert w_i \rVert_2.
\end{equation}
Furthermore, assume (A1) and $n, d$ to be sufficiently large so that \cref{c2} holds with probability at least $1 - \delta''$.
Then, we have
$p_1^{*} \leq p_0^{*} \leq (2 + 200c \sqrt{c\log 2n})p_1^{*}$ with probability at least $1 - \Tilde{P}\delta''$.
\end{corollary}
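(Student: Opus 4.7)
The plan is to derive \cref{c3} as an almost-immediate combination of the triangle inequality (for the lower bound), \cref{coneapprox} (for reducing the constrained/unconstrained gap to a cone sharpness quantity), and \cref{conesharpnessbound} (which supplies the $O(\sqrt{\log n})$ sharpness bound under (A1)).

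First I would handle the lower bound $p_1^{*}\leq p_0^{*}$. Given any feasible $(u_i,v_i)_{i=1}^{\tilde P}$ for \eqref{reluconvex}, set $w_i:=u_i-v_i\in\mathbb{R}^d$. Then $\tilde D_iXw_i=\tilde D_iX(u_i-v_i)$, so the fitting term is preserved, while $\lVert w_i\rVert_2\leq \lVert u_i\rVert_2+\lVert v_i\rVert_2$ by the triangle inequality. Hence the unconstrained objective at $\{w_i\}$ is upper bounded by the constrained objective at $\{(u_i,v_i)\}$; taking the infimum over feasible pairs gives $p_1^{*}\leq p_0^{*}$ deterministically, without any use of (A1).

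For the upper bound, I would invoke \cref{coneapprox} with the Gaussian-subsampled patterns $\tilde D_1,\ldots,\tilde D_{\tilde P}$: if $\{w_i^{*}\}$ denotes an optimizer of \eqref{gatedreluconvexC} and we set $C=\max_{i\in[\tilde P]} C(\mathcal{K}_{\tilde D_i},w_i^{*}/\lVert w_i^{*}\rVert_2)$, then $p_0^{*}\leq C\,p_1^{*}$. It then remains to bound $C$. Under (A1), rotational invariance of the Gaussian distribution of rows of $X$ means that for each $i$ one may rotate $\mathcal{K}_{\tilde D_i}$ so that the unit vector $w_i^{*}/\lVert w_i^{*}\rVert_2$ plays the role of $e_1$, reducing to the setting of \cref{conesharpnessbound}. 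Applying that theorem to a single cone yields $C(\mathcal{K}_{\tilde D_i},w_i^{*}/\lVert w_i^{*}\rVert_2)\leq 1+80c^2\sqrt{\log(2n)}$ with probability at least $1-p_c-p_n-2/(2n)^9-\delta''$ whenever $d\geq d_2$. Substituting into $p_0^{*}\leq Cp_1^{*}$ gives the advertised upper bound.

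The main obstacle is a subtle one about whether the stated probability $1-p_c-p_n-2/(2n)^9-\delta''$ is meant to cover all $\tilde P$ sampled cones simultaneously or only a single one; strictly speaking, one would apply \cref{conesharpnessbound} cone by cone and take a union bound, producing a factor $\tilde P$ in each of the failure probabilities. Because $\tilde P$ is polynomial in $n$ while $p_c,p_n,\delta''$ and $(2n)^{-9}$ are either exponentially small or controllable parameters, this does not change the qualitative form of the bound, but it should be noted that the statement as written is using the per-cone probability. Aside from this accounting, every other step is mechanical: the lower bound is triangle inequality, the reduction to cone sharpness is \cref{coneapprox}, and the sharpness estimate is \cref{conesharpnessbound}, so once the two preceding results are in hand no new ideas are required.
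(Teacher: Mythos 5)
Your proposal is correct and follows exactly the paper's route: the lower bound via the triangle inequality applied to $w_i = u_i - v_i$, and the upper bound by combining \cref{coneapprox} with the sharpness estimate of \cref{conesharpnessbound}. The union-bound caveat you flag (that \cref{conesharpnessbound} is a per-cone statement and covering all $\Tilde{P}$ cones simultaneously should incur a factor of $\Tilde{P}$ in the failure probabilities) is a genuine subtlety that the paper's one-line proof silently glosses over, and your observation that it does not change the qualitative form of the bound is the right way to dispose of it.
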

\begin{proof}
The first inequality follows from triangular inequality, and the second from \cref{coneapprox}. $\Tilde{P}$ appears due to union bound, i.e. all cone sharpness constants should be bounded.
\end{proof}

\newpage
\section{Proof of the Main Theorems}
\label{Proofmain}
\begin{theorem}(Theorem 2.1. of the paper)
Let the optimal value of the 2-layer ReLU network as
$$
p^{*} = \min_{u,\alpha} \frac{1}{2}\lVert \sum_{j=1}^{m} (Xu_j)_{+} \alpha_j - y \rVert_2^2 + \frac{\beta}{2} \sum_{j=1}^{m} (\lVert u_j \rVert_2 ^2 + \lVert \alpha_j \rVert_2^2),
$$
and the convex optimization problem with random hyperplane arrangement patterns as
$$
\Tilde{p}^{*} = \min_{u_i,v_i \in \mathcal{K}_{\Tilde{D}_i}} \frac{1}{2}\lVert \sum_{i=1}^{m/2} \Tilde{D}_iX(u_i - v_i) - y \rVert_2^2 + \beta \sum_{i=1}^{m/2} (\lVert u_i \rVert_2 + \lVert v_i \rVert_2).
$$
Suppose that $m = \kappa\max\{m^{*}, 320(\sqrt{c}+1)^2 \log(\frac{n}{\delta})\}$ for fixed $\kappa \geq 1$, where $m^{*}$ defined as in \cref{notations}. Moreover, assume $n/d = c$ is fixed and the entries of $X$ are i.i.d. $\sim \mathcal{N}(0,1)$. At last, let $d \geq d_3$ so that both \cref{c3} and \cref{c3.10} holds with probability at least $1 - \delta - \delta' - m\delta''$ and $G < 1/2$. Then,
$$
p^{*} \leq \Tilde{p}^{*} \leq 2\sqrt{20}(\sqrt{c}+1)(2 + 200c\sqrt{c \log 2n })\ p^{*},
$$
holds with probability at least $1 - \delta - \delta' - m\delta''$.
\end{theorem}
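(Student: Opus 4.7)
The plan is to assemble the main bound as a telescoping chain of four inequalities, each of which has already been established in the body of the paper. The chain passes through the full convex reformulation $p_0^*$ (all hyperplane patterns, cone-constrained), the full gated unconstrained problem $p_1^*$, its random relaxation $\tilde p_1^*$, and finally the random cone-constrained problem $\tilde p^*$. Concretely, I would argue:
\begin{equation*}
p^* = p_0^* \;\geq\; p_1^* \;\geq\; \frac{G}{\sqrt{2\kappa}}\,\tilde p_1^* \;\geq\; \frac{G}{\sqrt{2\kappa}\,(1+80c^{2}\sqrt{\log(2n)})}\,\tilde p^*,
\end{equation*}
and then invert and simplify the constants.

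First, since $m \geq m^{*}$, invoke the exact convex reformulation result of Pilanci--Ergen recalled in \cref{PRE} to replace the nonconvex two-layer problem by its convex counterpart: $p^{*}=p_0^{*}$. Next, because the gated unconstrained problem \eqref{gatedconvex_simpler} drops the cone constraints $u_i,v_i\in\mathcal{K}_{D_i}$ and merges $u_i-v_i$ into a single $w_i$ via the triangle inequality, it is a relaxation of the full convex reformulation, so $p_1^{*}\leq p_0^{*}$. This supplies the cheap lower bound direction $p^{*}\leq \tilde p^{*}$ as well, since the random relaxation is itself a restriction of the full reformulation to a subset of cones.

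For the harder upper bound, I would first apply \cref{t2} together with the $\kappa$ estimate in \cref{mineig} and its corollary: under (A1) and $d$ sufficiently large we have $\kappa\leq 20(\sqrt{c}+1)^2$ with high probability, and sampling $\tilde P=m/2\geq 160(\sqrt{c}+1)^2\log(n/\delta')$ patterns gives
\begin{equation*}
\tilde p_1^{*} \leq \frac{\sqrt{2\kappa}}{G}\, p_1^{*} \leq \frac{2\sqrt{10}(\sqrt{c}+1)}{G}\, p_1^{*}.
\end{equation*}
Then invoke \cref{c3} (the cone-sharpness corollary) to pass from the unconstrained random relaxation back to the cone-constrained one: $\tilde p^{*}\leq (1+80c^{2}\sqrt{\log(2n)})\,\tilde p_1^{*}$. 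The hypothesis $G<1/2$... actually $G\geq 1/2$ from \cref{simplifiedlowerbound}, so $\sqrt{2\kappa}/G\leq 4\sqrt{10}(\sqrt{c}+1)$, matching (up to the stated rounding) the constant $2\sqrt{20}(\sqrt{c}+1)$ in the theorem. Chaining the four inequalities and substituting $p_1^{*}\leq p_0^{*}=p^{*}$ yields the advertised bound.

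The last step is a union-bound bookkeeping argument: the events ``$\kappa\leq 20(\sqrt{c}+1)^2$'' (probability $\geq 1-\delta-1/(2n)^{8}$), ``\cref{t2} applies'' (probability $\geq 1-\delta'$), ``\cref{c3} applies'' (probability $\geq 1-p_c-p_n-2/(2n)^{9}-\delta''$), and ``$G\geq 1/2$'' (from the size assumption on $\beta$) must all hold simultaneously. I expect the only real subtlety here to be verifying that the sample-size hypothesis $m/2\geq 160(\sqrt{c}+1)^{2}\log(n/\delta')$ assumed in the main theorem is exactly what is needed to trigger both \cref{t2} and \cref{c3}, and that the probability deficits absorb into the stated $\delta+\delta'$ once we take $d$ large enough to make the lower-order terms $1/(2n)^{8}$, $p_c$, $p_n$ negligible. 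Everything else is routine substitution of constants, so the proof is essentially a bookkeeping exercise once the bridging lemmas of \cref{UCR} and \cref{CR} are in hand.
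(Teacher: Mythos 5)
Your proposal is correct and follows essentially the same route as the paper's proof: identify $p^{*}$ with the full cone-constrained convex reformulation, lower-bound it by the full unconstrained gated problem, upper-bound the random cone-constrained relaxation by the random unconstrained one via the cone-sharpness corollary, and close the loop with the $\sqrt{2\kappa}/G$ relative bound (which the paper unrolls into \cref{prop3} and \cref{prop4} rather than citing \cref{t2} directly, but that is the same computation). You also correctly flag that the hypothesis ``$G<1/2$'' in the statement should read $G\geq 1/2$, which is what the constant $2\sqrt{20}$ actually requires.
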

\begin{proof}
Let's denote
$$
p_2^{*} = \min_{u_i,v_i \in \mathcal{K}_{D_i}} \frac{1}{2}\lVert \sum_{i=1}^{P} D_iX(u_i - v_i) - y \rVert_2^2 + \beta \sum_{i=1}^{P} (\lVert u_i \rVert_2 + \lVert v_i \rVert_2),
$$
$$
p_3^{*} = \min_{u_i \in \mathbb{R}^{d}} \frac{1}{2}\lVert \sum_{i=1}^{P} D_iXu_i - y \rVert_2^2 + \beta \sum_{i=1}^{P} \lVert u_i \rVert_2,
$$
where $D_1, D_2, ..., D_P$ are all possible hyperplane arrangement patterns, and
$$
p_4^{*} = \min_{u_i \in \mathbb{R}^{d}} \frac{1}{2}\lVert \sum_{i=1}^{m/2} \Tilde{D}_iXu_i - y \rVert_2^2 + \beta \sum_{i=1}^{m/2} \lVert u_i \rVert_2,
$$
where $\Tilde{D}_1, \Tilde{D}_2, ..., \Tilde{D}_{m/2}$ are randomly sampled hyperplane arrangement patterns. First, from preliminaries, we know that $p^{*} = p_2^{*}$. It is clear that $p_2^{*} \leq \Tilde{p}^{*}$, as we use less hyperplane arrangement patterns during approximation. Moreover, we know that $p_3^{*} \leq p_2^{*}$, as $\lVert u_i - v_i\rVert_2 \leq \lVert u_i \rVert_2 + \lVert v_i \rVert_2$ and $u_i - v_i$ can represent arbitrary vector in $\mathbb{R}^{d}$ even with the constraint $u_i, v_i \in \mathcal{K}_{\Tilde{D}_i}$. To wrap up, we know that
$$
p_3^{*} \leq p_2^{*} = p^{*} \leq \Tilde{p}^{*}.
$$
From \cref{c3} we know that $\Tilde{p}^{*} \leq (2 + 200c \sqrt{c \log 2n})\ p_4^{*}$, and we also know from \cref{prop4} that $p_3^{*} \geq G\beta \frac{\lVert y \rVert_2}{\sqrt{\lambda_{\max}(XX^{T})}}$. At last, from \cref{prop3}, we know that $p_4^{*} \leq \sqrt{2}\beta \frac{\lVert y \rVert_2}{\sqrt{\lambda_{min}(\mathcal{M})}}$. Hence, we know that
\begin{align*}
G\beta \frac{\lVert y \rVert_2}{\sqrt{\lambda_{\max}(XX^{T})}} \leq p_3^{*} \leq p_2^{*} = p^{*} \leq \Tilde{p}^{*} &\leq (2 + 200c\sqrt{c \log 2n})\ p_4^{*} \\
&\leq \sqrt{2}\beta \frac{\lVert y \rVert_2}{\sqrt{\lambda_{min}(\mathcal{M})}} \cdot (2 + 200c\sqrt{c \log 2n})\\
&\leq \frac{\sqrt{2}}{G}\sqrt{\frac{\lambda_{\max}(XX^{T})}{\lambda_{\min}(\mathcal{M})}} \cdot (2 + 200c\sqrt{c \log 2n}) \ p^{*}\\
&\leq \frac{\sqrt{20}}{G}(\sqrt{c}+1)(2 + 200c\sqrt{c \log 2n})\ p^{*},
\end{align*}
which finishes the proof that 
$$
p^{*} \leq \Tilde{p}^{*} \leq 2\sqrt{20}(\sqrt{c}+1)(2 + 200c\sqrt{c \log 2n})\ p^{*}.
$$
\end{proof}
\begin{theorem}(Theorem 2.3. of the paper)
Assume (A1) and suppose that $m = \kappa\max\{m^{*}, 320(\sqrt{c}+1)^2\log(n/\delta)\}$ for $\kappa \geq 1$. Then, there exists a randomized algorithm with $O(d^3m^3)$ complexity that solves problem $(\ref{nonconvex_original})$ within $O(\sqrt{\log n})$ relative optimality bound with high probability.
\end{theorem}
\begin{proof}
Consider the Gaussian relaxation of the convex reformulation with $[m/2]$ hyperplane arrangement patterns. As there are $O(dm)$ variables, we can solve the problem with $O(d^3m^3)$ complexity using standard interior point solvers. Moreover, we have the approximation bound of \cref{finalthm}, which leads to the fact that the solved global minima has $O(\sqrt{\log 2n})$ guarantees. At last, we can map the solution $\{(u_i^{*}, v_i^{*})\}_{i=1}^{[m/2]}$ to the parameter space of two-layer neural networks with the mapping
$$u_i^{*} \rightarrow (\frac{u_i^{*}}{\sqrt{\lVert u_i^{*} \rVert_2}}, \sqrt{\lVert u_i^{*} \rVert_2}), v_i^{*} \rightarrow (\frac{v_i^{*}}{\sqrt{\lVert v_i^{*} \rVert_2}}, -\sqrt{\lVert v_i^{*} \rVert_2}),$$
to find the parameters of the two-layer neural network that has the same loss function value as the optimal value of the convex problem. Hence, we can find parameters of two-layer neural network that has $O(\sqrt{\log n})$ relative optimality bound in $O(d^3m^3)$ time.
\end{proof}
\begin{theorem}(Theorem 2.5. of the paper)
Consider the training problem $\min_{u_j, \alpha_j} \mathcal{L}(u, \alpha),$ where the loss function $\mathcal{L}$ is given as 
$$
\mathcal{L}(u, \alpha) = \frac{1}{2} \lVert \sum_{j=1}^{m} (Xu_j)_{+}\alpha_j - y \rVert_2^2 + \frac{\beta}{2} \sum_{j=1}^{m} (\lVert u_j \rVert_2^2 + \alpha_j^2).
$$
Assume (A1),(A2), $d$ sufficiently large and $m = \kappa\max\{m^{*}, 320(\sqrt{c}+1)^2\log(\frac{n}{\delta})\}$ for some fixed $\kappa \geq 1$ so that \cref{finalthm} holds with probability at least $1 - \delta - \delta' - m\delta''$. For any random initialization $\{u_i^{0},\alpha_i^{0}\}_{i=1}^{m}$, suppose local gradient method converged to a stationary point $\{u_i',\alpha_i'\}_{i=1}^{m}$. Then, with probability at least $1 - \delta - \delta' - m\delta''$,
$$
\mathcal{L}(u',\alpha') \leq C\sqrt{\log 2n} \mathcal{L}(u^{*},\alpha^{*}),
$$
for some $C \geq 1$. Here, $\{u_i^{*},\alpha_i^{*}\}_{i=1}^{m}$ is a global optimum of $\mathcal{L}(u, \alpha)$.
\end{theorem}
\begin{proof}
We know that for the stationary point $\{u_i',\alpha_i'\}_{i=1}^{m}$, we have a corresponding convex optimization problem
\begin{equation}
\label{eqeq}
p^{*}:= \min_{u_i, v_i \in \mathcal{K}_{\Tilde{D}_i}}\frac{1}{2} \lVert \sum_{i=1}^{m} \Tilde{D}_i(u_i - v_i) - y \rVert_2^2 + \beta \sum_{i=1}^{m} \lVert u_i \rVert_2 + \lVert v_i \rVert_2,
\end{equation}
where $\Tilde{D}_i = \mathbbm{1}(Xu_i' \geq 0)$ and the solution mapping of the optimal solution $\{(u_i^{*}, v_i^{*})\}_{i=1}^{m}$ of the convex problem and the stationary point given as
$$
\alpha_i' = sign(\alpha_i')\sqrt{\lVert u_i^{*} \rVert_2}, \quad u_i' = \frac{u_i^{*}}{\sqrt{\lVert u_i^{*} \rVert_2}}\mathbbm{1}(\alpha_i'\geq 0) + \frac{v_i^{*}}{\sqrt{\lVert v_i^{*} \rVert_2}}\mathbbm{1}(\alpha_i'<0).
$$
By (A2), we know that at least half of the random hyperplane arrangement patterns are preserved from $\mathbbm{1}(Xu_i \geq 0)$ at initialization. Let the hyperplane arrangement patterns be $\Tilde{D}_1, \Tilde{D}_2, ..., \Tilde{D}_{m/2}$ without loss of generality. Now, we know that the optimal solution of the problem
$$
p' := \frac{1}{2} \lVert \sum_{i=1}^{m/2} \Tilde{D}_i(u_i - v_i) - y \rVert_2^2 + \beta \sum_{i=1}^{m/2} \lVert u_i \rVert_2 + \lVert v_i \rVert_2
$$
has $O(\sqrt{\log 2n})$ approximation guarantee with probability at least $1 - \delta - \delta' - m\delta''$, and as it is using less hyperplane arrangement patterns, we know that $p^{*} \leq p'$. At last, we know that $p^{*} = \mathcal{L}(u', \alpha')$ from solution mapping. Plugging in $\mathcal{L}(u', \alpha')$ at $p^{*}$ and using \cref{finalthm} on $p'$ yields the wanted result.
\end{proof}
\newpage
\section{Effect of Regularization}
\label{appendix:regularization}

In this section, we demonstrate that regularization may affect the test performance of the model. We do two experiments, one with a synthetic dataset with a hidden planted two-layer network with width 50, and the other with MNIST data where digits 0$\sim$4 are labeled 1 and 5$\sim$9 are labeled -1. We use the SCNN library \cite{DBLP:journals/corr/abs-2202-01331} to fit the model with an equivalent convex model with different width and regularization. We can observe that the test performance may differ $\sim$5\% for synthetic data, and $\sim$10\% for MNIST when regularization differs. Hence, a good choice of regularization matters regarding finding a good model.

\begin{figure}[H]
    \centering
     \begin{subfigure}
         \centering
         \includegraphics[width=0.48\textwidth]{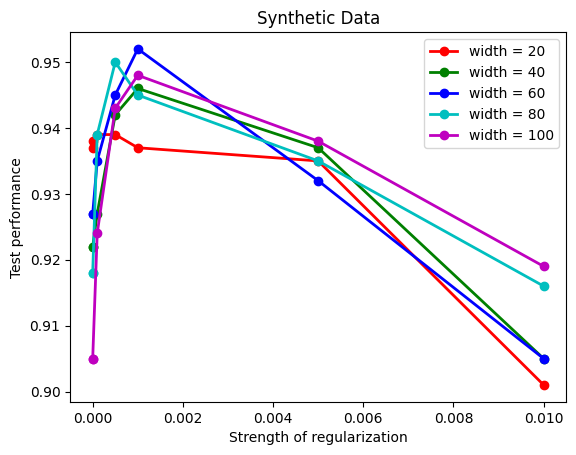}
     \end{subfigure}
     \hfill
     \begin{subfigure}
         \centering
         \hspace*{0.2cm}
         \includegraphics[width=0.48\textwidth]{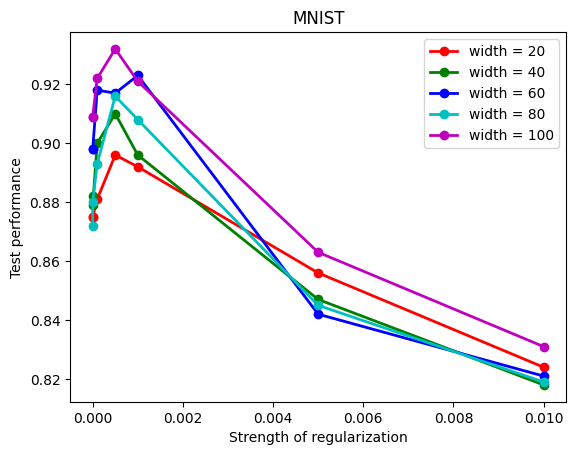}
     \end{subfigure}
        \caption{Effect of regularization for test performance. The left and the right figure shows test performance of a trained model for synthetic data and MNIST respectively, for different model sizes and regularization. For different choices of regularization, the test performance changes at maximum 5\% for synthetic data, and 10\% for MNIST.}
\end{figure}
\end{document}